\theoremstyle{plain}
\newtheorem{theorem}{Theorem}[section]
\newtheorem{proposition}[theorem]{Proposition}
\newtheorem{lemma}[theorem]{Lemma}
\theoremstyle{definition}
\theoremstyle{remark}
\newtheorem{remark}[theorem]{Remark}
\definecolor{LightCyan}{rgb}{0.88,1,1}
\def\ie{{\em i.e.},\ }
\def\eg{{\em e.g.},\ }
\def\Surv#1#2{S(#1 \mid #2)}
\def\SurvPred#1#2{\hat{S}(#1 \mid #2)}
\def\Bfx#1{\boldsymbol{x}_{#1}}
\def\E{\mathbb{E}}
\def\prob{\mathbb{P}}
\def\KM{\text{KM}}
\def\Data{\mathcal{D}}
\def\Model{\mathcal{M}}
\def\Q{\text{Quantile}}
\icmltitlerunning{Conformalized Survival Distributions}
\begin{document}

\twocolumn[
\icmltitle{Conformalized Survival Distributions:\\ A Generic Post-Process to Increase Calibration}




\begin{icmlauthorlist}
\icmlauthor{Shi-ang Qi}{cs}
\icmlauthor{Yakun Yu}{ece}
\icmlauthor{Russell Greiner}{cs,amii}
\end{icmlauthorlist}

\icmlaffiliation{cs}{Computing Science, University of Alberta, Edmonton, Canada}
\icmlaffiliation{ece}{Eletrical Computer Engineeering, University of Alberta, Edmonton, Canada}
\icmlaffiliation{amii}{Alberta Machine Intelligence Institute, Edmonton, Canada}

\icmlcorrespondingauthor{Shi-ang Qi}{shiang@ualberta.ca}
\icmlcorrespondingauthor{Russell Greiner}{rgreiner@ualberta.ca}

\icmlkeywords{Machine Learning, ICML}

\vskip 0.3in
]



\printAffiliationsAndNotice{}  

\begin{abstract}
Discrimination and calibration represent two important properties of survival analysis, 
with the former assessing the model's ability to accurately rank subjects and the latter evaluating the alignment of predicted outcomes with actual events. 
With their distinct nature, it is hard for survival models to simultaneously optimize both of them especially as many previous results found improving calibration tends to diminish discrimination performance.
This paper introduces a novel approach utilizing \emph{conformal regression} that can improve a model's calibration without degrading discrimination. 
We provide theoretical guarantees for the above claim, and rigorously validate the efficiency of our approach across 11 real-world datasets, showcasing its practical applicability and robustness in diverse scenarios.
\end{abstract}

\section{Introduction}
\label{sec:intro}

Survival analysis is a useful tool, especially in biomedical applications. 
Given a patient's covariates, we focus on survival models that can
provide a time-to-event distribution (aka, survival distribution).
Accurate estimates of these times-to-event can help clinicians prescribe personalized treatment strategies or allocate resources for the healthcare system. 
Learning such survival models is challenging 
as many training datasets include 
\emph{censored} subjects -- 
instances where we only know a lower bound of their time-to-event.

Many survival models seek to optimize discriminative performance, which prioritizes the ability to accurately rank subjects~\cite{harrell1996multivariable} in terms of their survival probabilities or time-to-event times. 
Discrimination measurements are useful if, for example we want to rank the severity levels of the patients on the waiting list so that we can prioritize the most severe to receive the earliest surgery.

While discrimination is helpful, 
this evaluation 
might not assess some important aspects of a prediction.
For example,
imagine
a model estimates Mr.~Smith's chance of survival as 90\% within 1 year, 50\% in 2 years, and 30\% in 10 years.
Discrimination does not reflect how well the model's predicted survival probabilities match the actual distribution of observations across a population
-- a concept known as calibration~\cite{d2003evaluation, haider2020effective}.
There are two main types of calibration measurements: distribution calibration (D-cal) and single-time calibration (1-cal), which offer distinct but related insights (see Section~\ref{sec:background} for illustration and comparison). 
Calibrated predictions are essential as they ensure the reliability of survival probabilities, which is fundamental for both individual assessments and population-level decision-making strategies.
{By evaluating how well a model corresponds to the world,}
calibration aids in aligning model predictions closely with real-world outcomes.
The importance of calibration is emphasized across various fields --
\eg forecasting~\cite{degroot1983comparison}, Bayesian analysis~\cite{dawid1982well}, and risk assessment~\cite{pepe2013methods}. 

Many research projects attempt to enhance calibration performance by incorporating a calibration-related loss function into the optimization process.
However, this approach presents several challenges during the model's training phase, including increased difficulty in model convergence and the need to tune more hyperparameters.
More importantly, many of these approaches found that
improving calibration tends to diminish discriminative performance~\cite{goldstein2020x, chapfuwa2020calibration}. 
This phenomenon, known as the \emph{discrimination-calibration trade-off}~\cite{kamran2021estimating}, limits the applicability of these methods, particularly in contexts where discriminative accuracy remains important.

\paragraph{Contributions:} To address this challenge, this paper introduces the 
{\em Conformalized Survival Distribution (CSD)}\ framework --
a plug-in post-processing method designed to enhance the calibration of a survival distribution model, without compromising its discriminative power. 
This model-agnostic framework is compatible with all statistical and machine-learning survival models as long as they can predict individual survival curves (ISD). 
These learned models first discretize predicted survival curves into
percentile times (PCTs, defined in Section~\ref{sec:csd}), followed by adjusting these PCTs using conformal regression~\cite{romano2019conformalized}. 
The principal contributions of our study are:\\[-1.5em]
\begin{enumerate}
\parskip0em
\itemsep0em
    \item Developing the CSD framework, adapting conformal regression to censored data for making calibrated predictions.
    \item Demonstrating theoretically and empirically that CSD not only enhances calibration but also preserves discriminative performance.
    \item Empirically comparing CSD with other calibration-improving methods, showing that CSD is superior over many metrics.
    \item Connecting
    the two calibration measures for survival analysis by proving that minimizing D-cal is asymptotically equivalent to minimizing integrated 1-cal across all time points.
\end{enumerate}

From an application point of view, our findings suggest a shift in focus for researchers. 
Instead of concurrently optimizing discrimination and calibration during the training phase, 
they can now seek models with superior discriminative abilities and subsequently apply the CSD framework
to improve calibration. 
This approach simplifies the model development process while ensuring robust performance across 
these 
key metrics.

Section~\ref{sec:background} provides the background and introduces the definition of discrimination and calibration in survival analysis.
Section~\ref{sec:csd} describes the CSD method and provides guarantees that CSD can not only enhance calibration but also preserve discriminative performance.
Section~\ref{sec:exp} presents the extensive empirical analysis across 11 real-world survival datasets and shows the effectiveness of CSD.
A Python implementation of CSD is available online at \url{https://github.com/shi-ang/CSD}, along with code to replicate all experiments.

\begin{figure*}[ht]
    \centering
    \vspace{-0.1in}
    \includegraphics[width=\textwidth]{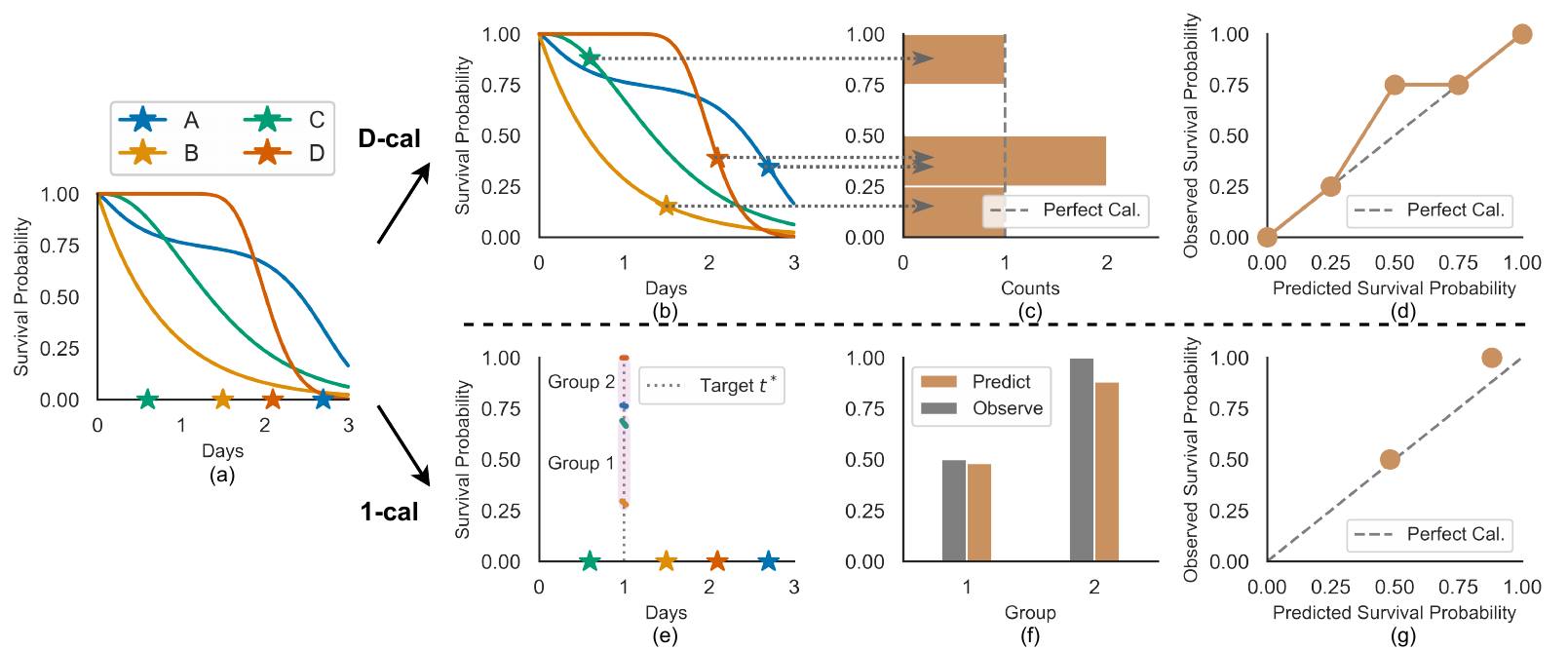}
    \vspace{-0.2in}
    \caption{Comparison of distribution calibration (upper part) and single-time calibration (lower part) using four uncensored patients. The leftmost (a) includes the predicted survival distributions (solid curves) and the true labels (stars) for four patients. \textbf{The upper section shows that D-cal process:} (b) obtains the predicted probabilities of each patient's event time; (c) constructs a histogram for these probabilities, where an ideally calibrated model would yield a uniformly shaped histogram; and (d) computes a probability–probability (P-P) plot assessing the histogram's uniformity. \textbf{The lower section (1-cal) process:} (e) obtains the predicted probabilities at a target time $t^*$ and groups the patients based on these sorted probabilities; (f) calculates average predicted and observed probabilities for each group, which should show statistical similarity; and (g) computes a P-P plot visualizing the similarity between predicted and observed probabilities.}
    \label{fig:calibration_illustration}
\end{figure*}

\section{Discrimination-Calibration Trade-Off}
\label{sec:background}
Survival prediction tools learn models that can predict survival for novel instances, from
a survival dataset $\mathcal{D}=\{(\Bfx{i}, t_i, \delta_i)\}_{i=1}^N$,
where each triplets corresponds to a subject, where $\Bfx{i}$ is the observed features for the $i$-th subject,
$t_i \in \mathbb{R}_+$ is the time of the event or censoring, 
and $\delta_i$ is a binary indicator where $\delta_i = 1$ means the subject had the event at time $t_i$ and $\delta_i = 0$ means the subject is right-censored (i.e., the subject has not experienced the event at time $t_i$). 
For each subject, there are two potential times of interest: the event time $e_i$ and the censoring time $c_i$. However, only the earlier of these two times is observable. 
Thus, we define $t_i \triangleq \min\{e_i, c_i\}$ and 
$\delta_i \triangleq  \mathbbm{1}[e_i \leq c_i]$, where $\mathbbm{1}[\cdot]$ is the indicator function. 
In this study, we make two assumptions:
\begin{itemize}
\parskip0em
\itemsep0em
    \item \emph{exchangeable}: the order of the subjects do not matter. For example, $\{(\Bfx{i}, t_i, \delta_i)\}_{i=1}^N$ are drawn i.i.d.
    \item \emph{conditional independent censoring}: event time is independent of the censoring time, given a description of the subject, \ie $e_i \ \bot \ c_i \mid \Bfx{i}$. 
\end{itemize}

Our goal is to estimate the individual survival distribution (ISD), $\Surv{t}{\Bfx{i}}=\prob(e_i > t \mid \Bfx{i})$, which represents the probability that the subject with feature $\Bfx{i}$ will survive beyond time $t$. 

\subsection{Discrimination}
\label{sec:cindex}
Discriminative performance measures how accurately a model ranks subjects based on predicted risk scores.\footnote{Note this is fundamentally differ from discrimination in other fields such as fairness~\cite{verma2018fairness}.}
It is often measured by the concordance index (C-index)~\cite{harrell1996multivariable},
which is the proportion of all comparable subject pairs whose predicted and outcome orders are concordant, defined as
\begin{equation}
\label{eq:c-index}
\begin{aligned}
    \text{C-index} (\{\eta_i\}_{i=1}^N, \Data)
    &= \frac{\sum_{i,j} \mathbbm{1}[t_i < t_j] \cdot \mathbbm{1}[\eta_i > \eta_j] \cdot \delta_i }{\sum_{i,j} \mathbbm{1}[t_i < t_j] \cdot \delta_i } ,
\end{aligned}
\end{equation}
where $\eta_i$ denotes the model's predicted risk score of $i$-th subject. The risk score can be defined as the negative of survival probability at a specified time or as the negative of predicted time-to-death. 
Many survival models aim to directly or partially optimize a C-index-related objective.
These include Cox proportional hazards (CoxPH)~\cite{cox1972regression}, as well as some recent approaches such as DeepHit~\cite{lee2018deephit} and Diffsurv~\cite{vauvelle2023differentiable}. 

\subsection{Calibration}
\label{sec:calibration}
\paragraph{Distribution Calibration} Calibration measures how well the prediction fits a set of observations.
Consider patient~$A$ whose predicted ISD (denoted $\SurvPred{t}{\Bfx{A}}$) is showned in Figure~\ref{fig:calibration_illustration}(a). The predicted median survival time for $A$ (the point at which the curve crosses 50\%) is around 2 days. 
We say this ISD is calibrated if, hypothetically, among 100 similar patients in parallel universes, 
about 50 experience the event before 2 days, and the remaining 50 after 2 days.
In reality, we cannot observe multiple outcomes for the same subject. 
Consequently, we shift our focus from subject-specific (conditional) calibration to a broader (marginal) calibration across the entire population. 
This leads to the concept of distribution calibration (D-cal)~\cite{haider2020effective}:
for any probability interval $[a, b] \subset [0, 1]$, let
\begin{equation}
\label{eq:d-cal}
    \hat{\Data}(a, b) = \{ [\Bfx{i}, t_i, \delta_i=1] \in \mathcal{D} \mid \SurvPred{t_i}{\Bfx{i}} \in [a, b]\}  ,
\end{equation}
be the subset of uncensored subjects in $\mathcal{D}$ whose predicted probability at their event time, $\Surv{t_i}{\Bfx{i}}$, falls within $[a, b]$. 
A model is D-calibrated if the proportion of patients $\frac{|\hat{\Data}(a, b)|}{|\Data|}$ is similar to the length of the interval $b - a$. 
The upper section of Figure~\ref{fig:calibration_illustration} demonstrates a D-cal assessment on 4 uncensored patients.
We first gather the set of predicted probabilities at the event times of patients $\{\SurvPred{t_i}{\Bfx{i}}\}_i$, and see if this set uniformly distributes across $[0, 1]$.
The calibration level can be visualized using the histogram (Figure~\ref{fig:calibration_illustration}(c)) and P-P plot (Figure~\ref{fig:calibration_illustration}(d)), or quantified via a $\chi^2$ test.
To integrate censored patients, we can ``split'' each censored patient uniformly to the subsequent probability intervals after the survival probability at censor time $\SurvPred{c_i}{\Bfx{i}}$~\cite{haider2020effective}, see~\eqref{eq:d-cal_censored_size} in Appendix~\ref{appendix:calibration_more}. 

\paragraph{Single-Time Calibration} Another form of calibration is the single-time calibration (1 cal)~\cite{d2003evaluation}, which evaluates the calibration of the predictions at a specific time. 
Considering the same patient A and its ISD, we can obtain the predicted survival probability at 1 day, \eg $\SurvPred{t^*=1}{\Bfx{A}} = 0.99$.
This prediction implies that for 100 similar patients, around 99 are expected to survive beyond 1 day. 
To formally calculate 1-cal at $t^*$, we first sort the predicted probabilities at $t^*$ for all patients and group them into $K$ groups, $G_1, \ldots, G_K$, as shown in Figure~\ref{fig:calibration_illustration}(e) (using $K=2$).
Each group $G_k$’s observed survival probability at $t^*$ is determined using the \citet{kaplan1958nonparametric} (KM) estimation, denoted as $S_{\KM (G_k)} (t^*)$ (not shown in the figure).\footnote{When there is no censorship, it is essentially the proportion of instances experiencing the event over time.}
We then compare the predicted survival probabilities, $\mathbb{E}_{\Bfx{i} \sim G_k} [\SurvPred{t^*}{\Bfx{i}}]$, with the KM-estimated observed probabilities.
The 1-cal assessment can be visualized using a histogram (Figure~\ref{fig:calibration_illustration}(f)), P-P plot (Figure~\ref{fig:calibration_illustration}(g)) or statistically evaluated via the Hosmer-Lemeshow test~\cite{hosmer1980goodness}.

\paragraph{KM Calibration} 1-cal, however, is limited to a single time point, whereas ISDs span all future times. 
To address this, \citet{chapfuwa2020calibration} proposed KM calibration (KM-cal), which compares the mean ISD curve for all subjects $\mathbb{E}_{\Bfx{i} \sim \Data} [\SurvPred{t}{\Bfx{i}}]$, against the overall KM survival curve.
The closer these two curves align, the more KM-calibrated the model is considered. 
This method essentially integrates 1-cal across all time points, using only one group ($K=1$) at each time. 
Appendix~\ref{appendix:calibration_more} provides further details and illustrations.


\paragraph{Remark} While calibration is clearly a desired property, it is important to recognize that focusing solely on calibration can be misleading,
\eg while the KM estimator is robustly calibrated, it cannot discriminate subjects as it provides identical predictions for all, meaning it does not discriminate any pairs of subjects.
In practice, the goal should be to identify a model that not only demonstrates good calibration but also possesses strong discriminative power. 

\begin{figure*}[t]
    \centering
    \vspace{-0.1in}
    \includegraphics[width=\textwidth]{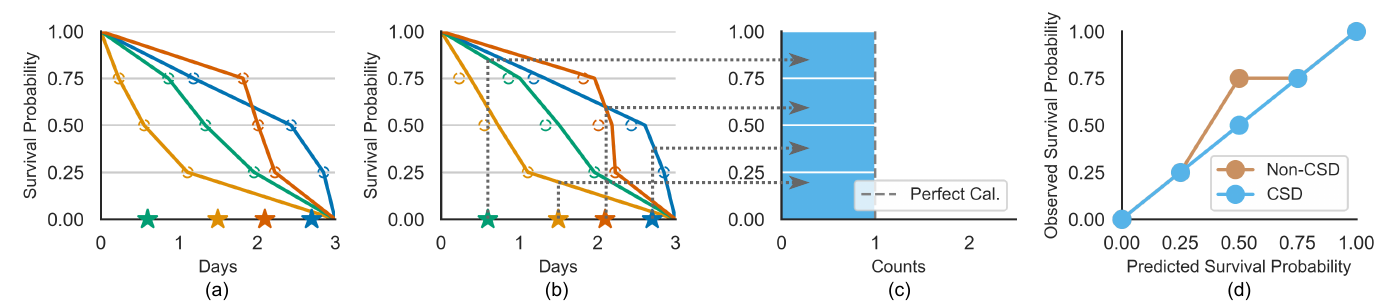}
    \vspace{-0.2in}
    \caption{Example of using Conformal Survival Distribution (CSD) to make the prediction D-calibrated, using the same patients and predictions shown in Figure~\ref{fig:calibration_illustration}. (a) Discretize the predicted survival distributions at three percentile levels (25\%, 50\%, 75\%); (b) Generate the new ISD by adjusting the PCTs, where the hollow points are the old PCTs; (c) Calculate the D-cal histogram using the adjusted ISDs; (d) P-P plot comparing the D-cal level between non-CSD and CSD predictions.}
    \label{fig:CSD-Illustration}
\end{figure*}

\subsection{Calibrated Survival Predictions}
\label{sec:bg_calibrated_survival_prediction}

Several approaches have been developed to enhance the calibration of ISDs,
usually by integrating a calibration-related loss function into the objective function.
For example, models optimized for the continuous rank probability score (CRPS)~\cite{avati2020countdown} can achieve good performance on 1-cal. 
\citet{goldstein2020x} introduces X-cal to the objective function, which is a differentiable approximation of D-cal. 
Similarly, \citet{chapfuwa2020calibration} incorporated the KM-cal term into the loss function, an approach called survival function matching (SFM).

However, discrimination and calibration serve distinct purposes --
\ie KM is well calibrated but has the lowest possible C-index, whereas models with perfect C-index may have poor calibration (see Appendix~\ref{appendix:disc_vs_cal} for detailed explanation).
When optimizing for multiple distinct objectives simultaneously, a model is prone to be dominated by one or several tasks, leading to potentially compromising performance in others~\cite{tang2020progressive}.
In fact, many~\cite{goldstein2020x, chapfuwa2020calibration, kamran2021estimating} found that improving the model's calibration performance will often harm the discrimination performance, sometimes significantly; see also our reproduced results in Figure~\ref{fig:short_results_objective-based_methods} and Section~\ref{sec:results_objective-based}.

In this paper, we introduce the CSD framework, which aims to reconcile this trade-off by disentangling calibration from discrimination in the optimization process.
In particular, this approach means the learner can focus on achieving optimal discrimination without considering the calibration.
Afterward, a post-processing CSD step will modify that model, to produce a new model with the same discrimination, but improved calibration.
Both theoretical and empirical evidence suggest that this post-processing approach retains the model's initial discriminative power while achieving calibration performance comparable to the KM estimator.

The conformalized survival analysis (CSA) proposed by~\citet{candes2023conformalized} bears a superficial resemblance to our method.
However, despite the similar name, our methods are fundamentally different.
First and most importantly, their CSA is applicable only to datasets with known censored times for all subjects, including those who have experienced an event (Type-I censoring) -- which is not available in many real-world situations. As right-censoring is a strict generalization of Type-I censoring, our CSD tackles a survival prediction task that is more general and realistic than CSA.
Secondly, CSA focuses solely on calibrating 90\% quantile predictions, which is less complex than our task of calibrating the entire ISD curve.
Thirdly, CSA's real-world experiments did not account for censoring in the calibration calculation.
Lastly, their study did not present the discrimination performance of the final model.\footnote{Due to their exclusive focus on calibration, they use only half of the dataset for training the model, leaving the other half to the conformalization; this will diminish the model's discrimination.}

\section{Conformalized Survival Distribution (CSD)}  %
\label{sec:csd}

We now describe how Conformal Survival Distribution (CSD) modifies an ISD prediction in a way that can improve the calibration while maintaining discrimination.
For simplicity, Section~\ref{sec:csd_overview} will initially ignore the censored subjects.
Section~\ref{sec:CSD_censor} will then integrate censored data into the CSD framework. 
Section~\ref{sec:data_split} will discuss practical considerations and extensions of CSD.

\subsection{Overview}
\label{sec:csd_overview}

Under the \emph{exchangeable} assumption, the CSD begins by splitting the dataset into two subsets: a training set $\Data_{\text{train}}$ and a conformal set $\Data_{\text{con}}$. 
Then, a given survival prediction algorithm will learn a model $\Model$ from $\Data_{\text{train}}$. 
Then we apply this model to $\Data_{\text{con}}$ to obtain initial predictions, \eg 
Figure~\ref{fig:calibration_illustration}(a) shows the ISDs predicted by the model for the four patients \{$A$, $B$, $C$, $D$\}.

In the next step, we discretize the ISD predictions at some specific percentile levels -- Figure~\ref{fig:CSD-Illustration}(a) shows 3 levels. For each patient $\Bfx{i}$, the corresponding times for these discretized percentile levels are defined as:
\begin{align*}
    \hat{q}_{\Model} (\rho \mid \Bfx{i}) \ &= \ \inf \{\, t: \hat{S}_{\Model}(t \mid \Bfx{i}) \leq \rho \, \} = \ \hat{S}^{-1}_{\Model}(\, \rho \mid \Bfx{i} \, ),
\end{align*}
where $\hat{S}^{-1}_{\Model}(\prob = \rho \mid \Bfx{i})$ is the inverse function of model's predicted ISD $\hat{S}_{\Model}(t \mid \Bfx{i})$, and $\rho$ is the $\rho$-th survival percentile.\footnote{Note that the $\rho$ in our paper is defined slightly different from the conformal regression~\cite{romano2019conformalized, candes2023conformalized}: 
our $\rho$ represents the survival percentile, while in their papers $\alpha$ is the quantile for the cumulative density function (CDF). That means our defined $\rho$ is complementary to theirs, $\rho = 1- \alpha$.} 
We refer to this $\hat{q}_{\Model} (\rho \mid \Bfx{i})$ as the predicted percentile time (PCT), which is represented by the hollow points in Figure~\ref{fig:CSD-Illustration}.
The motivation of the discretization is to facilitate the connection between the ISD predictions and D-cal calculation.
Specifically, PCT can form a prediction interval with the maximum time, \ie $[\hat{q}_{\Model} (\rho \mid \Bfx{i}), \infty]$. 
This interval indicates that there is $\rho$ probability that the actual event time for the subject will fall within this range.
Therefore, $\forall \ \rho \in [0, 1]$, if a model's predicted PCTs satisfy
\begin{align}
    \prob( \, t_i \in [\hat{q}_{\Model} (\rho \mid \Bfx{i}), \infty] \, ) \ = \ \frac{|\hat{\Data}(0, \rho)|}{|\Data|} \ \approx \ \rho  ,
\end{align}
we can say that this model is D-calibrated, where $\hat{\Data}(0, \rho)$ is calculated using~\eqref{eq:d-cal}.

Now we can adapt conformal regression~\cite{romano2019conformalized} after discretizing the ISDs. This approach allows us to compute conformity scores that quantify the error made by PCT (the horizontal distance between the stars and hollow points in Figure~\ref{fig:CSD-Illustration}(a)):  
\begin{equation}
\label{eq:conformal_score_uncensored}
    s_{j, \Model} (\rho)\ =\ \hat{q}_{\Model}(\rho \mid \Bfx{j})\, -\, t_j,
\end{equation}
and
\begin{equation*}
    \mathcal{S}_{\Model} (\rho)\ =\ \{\, s_{j, \Model} (\rho)\,\}_{j=1}^{|\Data_{\text{con}}|} ,
\end{equation*}
for each subject $j$ in the conformal set. 
To interpret the conformity score: 
high $s_{j, \Model}$ represents a poor agreement between the predicted interval and the true event time.
If $t_j$ is outside the predicted interval, then $s_j$ will be positive, which can be considered the magnitude of the error incurred by this mistake.
If $t_j$ correctly belongs to this interval, then $s_j$ will be nonpositive.

Finally, we can adjust the PCT using the following equation:
\begin{equation}
\label{eq:csd_adjust}
    \hat{q}_{\Model}' (\rho \mid \Bfx{i}) = \hat{q}_{\Model} (\rho \mid \Bfx{i}) - \Q \left[\, \rho; \mathcal{S}_{\Model}  (\rho) \, \right] ,
\end{equation}
where $\Q \left[{\rho; \mathcal{S}_{\Model} (\rho)}\right]$ denotes the $\frac{\left\lceil \rho (|\Data_{\text{con}}| + 1) \right\rceil}{|\Data_{\text{con}}|}$-th empirical quantile of $\mathcal{S}_{\Model} (\rho)$.\footnote{This is essentially the $\rho$-th quantile, with a small correction.} 
This calculation ensures that, for every percentile $\rho$, there is only $\rho \%$ of the subjects got included in their post-adjust prediction intervals $[\hat{q}_{\Model}' (\rho \mid \Bfx{i}), \infty]$. 
The post-adjust PCTs are shown in Figure~\ref{fig:CSD-Illustration}(b), where we can see that patient $D$ (red), whose probability at event time was originally located in $[0.25, 0.5]$, now moves to $[0.5, 0.75]$, making the histogram (c) and the P-P plot (d) aligned with the perfect calibration line. 
Note that all 4 ISDs were shifted, however, the other 3 patients stayed within their original percentile intervals.
Regarding the C-index, we can see that the \emph{relative order} of the predicted median survival times -- at the 50\% percentile intersection -- remains consistent before and after CSD.

We now prove that this generic CSD framework satisfied our proposed properties: (1) it does not diminish the discrimination performance; and (2) it improves the calibration performance (both D-cal and KM-cal). 

\begin{theorem}
\label{thm:c-index}
Applying the CSD adjustment to the percentile predictions does not affect the C-index of the model, regardless of whether to use the negative of the median or of the mean survival times as the predicted risk scores. 
\end{theorem}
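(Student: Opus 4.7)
The plan is to observe that the C-index from~\eqref{eq:c-index} depends on the risk scores $\{\eta_i\}$ only through the pairwise comparisons $\mathbbm{1}[\eta_i > \eta_j]$, and that the labels $t_i$ and $\delta_i$ are untouched by CSD. So it suffices to show that, for either choice of risk score (negative median or negative mean), the relative order of the scores over the subjects is the same before and after CSD.

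The key structural observation is that, at any fixed percentile $\rho$, the CSD update in~\eqref{eq:csd_adjust} subtracts a single quantity $\Q[\rho; \mathcal{S}_{\Model}(\rho)]$ from every subject's PCT, and that quantity is a function of $\rho$ and the conformal set only, not of $\Bfx{i}$. Hence for each $\rho$ there exists a constant $c(\rho)$ such that $\hat{q}_{\Model}'(\rho \mid \Bfx{i}) = \hat{q}_{\Model}(\rho \mid \Bfx{i}) - c(\rho)$ for all $i$. A uniform additive shift preserves the order of a collection of real numbers, so $\hat{q}_{\Model}'(\rho \mid \Bfx{i}) > \hat{q}_{\Model}'(\rho \mid \Bfx{j})$ if and only if $\hat{q}_{\Model}(\rho \mid \Bfx{i}) > \hat{q}_{\Model}(\rho \mid \Bfx{j})$.

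The median case follows by setting $\rho = 0.5$: the negative-median risk scores differ by the constant $c(0.5)$, so pairwise orderings, and therefore the C-index, are preserved. For the mean case, I would use the quantile representation of the expected survival time, $\mathbb{E}[T_i] = \int_0^1 \hat{q}_{\Model}(\rho \mid \Bfx{i}) \, d\rho$ (which is the reason the discretization of the ISD into PCTs loses no information about the mean), and then integrate the identity above to get
\begin{equation*}
\int_0^1 \hat{q}_{\Model}'(\rho \mid \Bfx{i}) \, d\rho \ = \ \int_0^1 \hat{q}_{\Model}(\rho \mid \Bfx{i}) \, d\rho \ - \ \int_0^1 c(\rho) \, d\rho .
\end{equation*}
The right-hand correction term is again a single constant independent of $i$, so the negative-mean risk scores are all shifted by the same amount and their order is preserved. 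Plugging this invariance into~\eqref{eq:c-index} yields the claim.

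I do not expect a serious obstacle: the whole argument is the observation that ``shift by a constant preserves order.'' The only small care needed is (i) verifying that $\Q[\rho; \mathcal{S}_{\Model}(\rho)]$ genuinely depends on no subject-specific quantity in the test set, and (ii) justifying the quantile-integral formula for the mean (which requires the predicted ISD to be a proper distribution with finite mean, and the integral $\int_0^1 c(\rho)\, d\rho$ to be finite, conditions that hold whenever the mean risk score is well-defined in the first place).
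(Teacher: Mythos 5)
Your proposal is correct and follows essentially the same route as the paper's proof: the key facts in both are that $\Q[\rho; \mathcal{S}_{\Model}(\rho)]$ is independent of $\Bfx{i}$ (so the median PCTs are shifted by a common constant) and that the mean survival time can be rewritten via the inverse-function/quantile integral $\int_0^1 \hat{q}_{\Model}(\rho \mid \Bfx{i})\, d\rho$, so the mean is likewise shifted by the constant $\int_0^1 \Q[\rho; \mathcal{S}_{\Model}(\rho)]\, d\rho$. The only difference is presentational -- you argue order preservation directly from ``a uniform additive shift preserves order,'' whereas the paper phrases the same fact through partial derivatives with respect to $\Bfx{i}$; your phrasing is, if anything, the cleaner justification of the identical idea.
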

\textit{Proof sketch.} For median survival times, this obviously holds because the term $\Q [\rho; \mathcal{S}_\Model (\rho)]$ can be considered as a constant (independent) wrt to $\Bfx{i}$. 
Therefore, the predicted median survival times $\{\hat{q}_{\Model}' (0.5 \mid \Bfx{i})\}_i$ will remain in the same order as $\{\hat{q}_{\Model} (0.5 \mid \Bfx{i})\}_i$. For mean survival times, Appendix~\ref{appendix:theorem3.1} provides the full proof. Note that this theorem is applicable to both uncensored and censored cases (in the next section).

\begin{theorem}
\label{thm:d-cal}
Under the exchangeable assumption, the percentile predictions constructed by CSD will exhibit exact distribution calibration, s.t., $\forall \, \rho \in [0, 1]$, we can have
\begin{equation}
    \rho \leq \prob(\,t_i \in [\hat{q}_{\Model}' (\rho \mid \Bfx{i}), \infty] \mid \Bfx{i}) \leq \rho + \frac{1}{|\Data_{\textnormal{con}}| + 1}.
\end{equation}
\end{theorem}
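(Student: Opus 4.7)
The plan is to recast the coverage event as an inequality between conformity scores and then invoke the standard split-conformal coverage lemma. First, unfolding the definition of $\hat{q}_{\Model}'$ from~\eqref{eq:csd_adjust}, the event $t_i \in [\hat{q}_{\Model}'(\rho \mid \Bfx{i}), \infty]$ is equivalent to $t_i \geq \hat{q}_{\Model}(\rho \mid \Bfx{i}) - \Q[\rho;\, \mathcal{S}_{\Model}(\rho)]$, which rearranges to
\begin{equation*}
s_{i, \Model}(\rho) \;=\; \hat{q}_{\Model}(\rho \mid \Bfx{i}) - t_i \;\leq\; \Q\!\left[\rho;\, \mathcal{S}_{\Model}(\rho)\right].
\end{equation*}
So the claim reduces to bounding the probability that the test-point conformity score lies at or below the empirical $\rho$-quantile of the conformal-set scores.

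Next I would leverage exchangeability. Because the model $\Model$ is trained only on $\Data_{\text{train}}$, and the test point $(\Bfx{i}, t_i)$ together with the conformal points $\{(\Bfx{j}, t_j)\}_{j=1}^{|\Data_{\text{con}}|}$ are i.i.d., the enlarged collection of scores $\{s_{j,\Model}(\rho)\}_{j=1}^{|\Data_{\text{con}}|} \cup \{s_{i,\Model}(\rho)\}$ is exchangeable (conditional on $\Model$). Consequently, the rank of $s_{i,\Model}(\rho)$ within this set of $|\Data_{\text{con}}|+1$ values is uniform on $\{1, \dots, |\Data_{\text{con}}|+1\}$.

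Then I would invoke the classical conformal-quantile lemma. Since $\Q[\rho;\, \mathcal{S}_{\Model}(\rho)]$ is the $\lceil \rho(|\Data_{\text{con}}|+1) \rceil$-th order statistic of the conformal scores, the event $\{s_{i,\Model}(\rho) \leq \Q[\rho;\, \mathcal{S}_{\Model}(\rho)]\}$ is the event that the rank of $s_{i,\Model}(\rho)$ is at most $\lceil \rho(|\Data_{\text{con}}|+1) \rceil$. Its probability is therefore $\lceil \rho(|\Data_{\text{con}}|+1) \rceil / (|\Data_{\text{con}}|+1)$, which lies in the interval $[\rho,\ \rho + 1/(|\Data_{\text{con}}|+1)]$, delivering both inequalities. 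For the upper bound I would state explicitly the standard no-ties (continuous-score) assumption, since ties can only increase coverage beyond the target.

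The main obstacle is a subtle mismatch in the statement: the right-hand side writes $\prob(\,\cdot \mid \Bfx{i}\,)$, but split-conformal prediction only yields marginal (not feature-conditional) coverage without substantially stronger assumptions on $\Model$. I would interpret the conditioning notation as shorthand for the marginal law over $(\Bfx{i}, t_i, \Data_{\text{con}})$ with $\Data_{\text{train}}$ fixed --- the reading universally adopted in the conformal literature --- and note the caveat. A secondary concern is that the argument above handles only uncensored test points; once Section~\ref{sec:CSD_censor} replaces the score~\eqref{eq:conformal_score_uncensored} with a censoring-aware version, one needs the replacement scores to remain exchangeable (e.g., via the conditional-independent-censoring assumption) for the same argument to carry over.
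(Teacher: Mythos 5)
Your proposal is correct and follows essentially the same route as the paper's proof in Appendix~\ref{appendix:theorem3.2}: rewrite the coverage event as $s_{i,\Model}(\rho) \leq \Q[\rho;\mathcal{S}_\Model(\rho)]$, then use exchangeability of the test score with the conformal scores so that its rank is uniform over $|\Data_{\text{con}}|+1$ positions, giving coverage $\lceil \rho(|\Data_{\text{con}}|+1)\rceil/(|\Data_{\text{con}}|+1)$. Your caveats (no-ties handled by the continuity/perturbation remark, and the ``$\mid \Bfx{i}$'' in the statement really being marginal coverage over the test draw, as the paper's own proof conditions only on membership in $\Data_{\text{test}}$) match the paper's treatment.
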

\textit{Proof sketch.} 
The theorem comes directly from the definition and is inspired by a standard proof in conformal regression~\cite{romano2019conformalized, angelopoulos2023conformal}.
For completeness, Appendix~\ref{appendix:theorem3.2} provides the full proof.

\begin{lemma}
\label{thm:KM-cal}
Under the exchangeable assumption, the percentile predictions constructed by CSD will asymptotically exhibit exact integrated calibration at all time points, which means that the prediction is KM-calibrated. 
\end{lemma}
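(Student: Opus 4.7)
The plan is to translate the percentile-level D-calibration granted by Theorem~\ref{thm:d-cal} into a time-level agreement between the mean predicted ISD and the true marginal survival function, which is precisely KM-calibration.

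First, I would rewrite the mean predicted survival curve using the inverse identity between $\hat{S}'$ and its percentile function $\hat{q}'$. Since $\hat{S}'(t \mid \Bfx{i}) = \int_0^1 \mathbbm{1}[\hat{q}'(\rho \mid \Bfx{i}) > t]\, d\rho$, Fubini yields
\begin{equation*}
\mathbb{E}_{\Bfx{i} \sim \Data}\bigl[\hat{S}'(t \mid \Bfx{i})\bigr] \;=\; \int_0^1 \mathbb{P}\bigl(\hat{q}'(\rho \mid \Bfx{i}) > t\bigr)\, d\rho.
\end{equation*}
Meanwhile, monotonicity of $\hat{S}'(\cdot \mid \Bfx{i})$ implies that the event $\{t_i > t\}$ coincides with $\{\hat{S}'(t_i \mid \Bfx{i}) < \hat{S}'(t \mid \Bfx{i})\}$, giving $S(t) = \mathbb{P}\bigl(\hat{S}'(t_i \mid \Bfx{i}) < \hat{S}'(t \mid \Bfx{i})\bigr)$.

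Second, I would apply Theorem~\ref{thm:d-cal}: after $|\Data_{\text{con}}| \to \infty$, $\hat{S}'(t_i \mid \Bfx{i})$ is $\mathrm{Uniform}[0,1]$ marginally, so every percentile $\rho$ receives its correct coverage. Integrating the D-cal identity $\mathbb{P}(t_i \geq \hat{q}'(\rho \mid \Bfx{i})) = \rho$ over $t \in [0,\infty)$ and invoking Fubini combines with the representation above to deliver the ``integrated single-time calibration'' identity $\int_0^\infty \bigl[\mathbb{E}_{\Bfx{i} \sim \Data}[\hat{S}'(t \mid \Bfx{i})] - S(t)\bigr]\, dt = 0$; a monotonicity/continuity argument then upgrades this integrated identity to pointwise equality at every $t$, which is KM-calibration in the sense of \citet{chapfuwa2020calibration}. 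Consistency of $S_{\KM}(t) \to S(t)$ takes care of the censored target.

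The hard part is exactly the upgrade from marginal to pointwise calibration: in general, $\hat{S}'(t_i \mid \Bfx{i}) \sim \mathrm{Uniform}[0,1]$ marginally does not imply $\mathbb{E}_{\Bfx{i}}[\hat{S}'(t \mid \Bfx{i})] = S(t)$ for every $t$, because the conditional law of $\hat{S}'(t_i \mid \Bfx{i})$ given $\Bfx{i}$ could differ from Uniform. The argument must lean on two structural features of CSD: the quantile shift $\Q[\rho; \mathcal{S}_{\Model}(\rho)]$ depends only on $\rho$ and is constant in $\Bfx{i}$, so CSD's correction acts as a percentile-indexed but subject-uniform time translation; and exchangeability lets D-cal hold simultaneously over all percentiles on $\Data_{\textnormal{con}}$, so the ``D-cal $\Leftrightarrow$ integrated 1-cal over time'' equivalence previewed in the introduction can be invoked cleanly to close the loop.
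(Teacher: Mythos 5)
Your proposal has a genuine gap at exactly the point you flag as ``the hard part,'' and the mechanism you offer to bridge it does not work. From D-calibration you can at best obtain marginal statements indexed by $\rho$; your plan is to integrate them and then ``upgrade'' the identity $\int_0^\infty \bigl(\E_{\Bfx{i}}[\hat{S}'(t \mid \Bfx{i})] - S(t)\bigr)\,dt = 0$ to pointwise equality via monotonicity/continuity. That upgrade is false: the difference of two monotone survival curves is not monotone, and two distinct decreasing curves can share the same integral (indeed, one can construct a model whose PIT values $\hat{S}'(t_i \mid \Bfx{i})$ are exactly Uniform$[0,1]$ while $\E_{\Bfx{i}}[\hat{S}'(t \mid \Bfx{i})]$ differs from $S(t)$ on a set of positive measure — e.g.\ heterogeneous exponential predictions calibrated only through their PIT). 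So marginal D-cal does not by itself deliver KM-cal, and your fallback — ``invoke the D-cal $\Leftrightarrow$ integrated 1-cal equivalence previewed in the introduction'' — is circular, since that equivalence is precisely what this lemma's proof must establish. The paper's own route is different and more explicit about what extra is needed: it discretizes KM-cal over a time grid $\{t_k\}$ and D-cal over a percentile grid $\{\rho_l\}$, uses the monotone transformation $\hat{S}(e_i \mid \Bfx{i}) < \rho_l \Leftrightarrow e_i > \hat{t}_{i,l}$ to rewrite D-cal in the time domain, and then argues the two sums agree asymptotically \emph{when the induced percentile times $\{\hat{t}_{i,l}\}$ asymptotically coincide with the time grid} — an assumption it openly concedes can fail for heterogeneous ISDs. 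Your argument silently assumes an even stronger conclusion without any such condition.

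A second omission is censoring. The D-cal statistic for censored subjects uses the fractional allocation in~\eqref{eq:d-cal_censored_size}, and the paper spends the bulk of its censored-case proof showing, via the \emph{conditional independent censoring} assumption, that $\E\bigl[|\hat{\Data}(0,\rho_l)|/N\bigr] = \prob(e_i \geq \hat{t}_{i,l})$, which the KM estimator consistently estimates. Your one-line appeal to ``consistency of $S_{\KM}(t) \to S(t)$'' skips this entirely: without that expectation computation there is no link between the censor-adjusted D-cal counts (what Theorem~\ref{thm:d-cal} controls on observed times $t_i$) and the event-time survival function that KM-cal targets. To repair the proposal you would need (i) an explicit argument, not an integration-then-upgrade step, connecting coverage at every percentile to the mean predicted curve at every time — which in the paper's framework requires the alignment of the percentile-time set with the time grid — and (ii) the censoring computation above.
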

\textit{Proof sketch.} We derive this by first building a connection between D-cal and KM-cal. We prove that D-cal and KM-cal are asymptotically equal, see Appendix~\ref{appendix:theorem3.3}. Then, under Theorem~\ref{thm:d-cal}, CSD will asymptotically be KM-calibrated.

\subsection{Handling Right-Censoring}
\label{sec:CSD_censor}
\subsubsection{Uncensored}
A naive approach to handle censoring is to discard the censoring subjects in the conformal set and compute the conformity scores as~\eqref{eq:conformal_score_uncensored}.
However, under the \emph{conditional independent censoring} assumption, it is easy to show that the data distribution with versus without censored subjects can be different, leading to non-calibration for the adjusted ISDs.

\subsubsection{Decensoring}
Another option to handle censored subject is to assign a ``best-guess'' (BG) value to each censored subject using non-parametric methods. 
After using those BG values as a surrogate for their true event times, we can use~\eqref{eq:conformal_score_uncensored} to calculate the conformity scores. Here, we consider two methods for calculating these surrogate values.

\emph{Margin}~\cite{haider2020effective} calculates the conditional expectation of the event time given a subject is censored at $c_i$ wrt the KM estimation, \ie $\E_t[S_{\KM}(t \mid t > c_i)]$. 

\emph{Pseudo-observation}~\cite{andersen2003generalised} (PO) uses a jackknife method to calculate the contribution of a censored subject to the group-level KM survival distribution.

Details of these two methods can be found in the Appendix~\ref{appendix:decensor}.

\subsubsection{KM-Sampling}
One problem with both of these decensoring methods is that they provide deterministic BG values as the surrogate times, which means the algorithm should be 100\% certain that these censored subjects have the event in that time, which means the algorithm only works when the BG values are all correct.

Therefore, instead of a single BG value, here we provide a \emph{BG distribution} for the event time. 
We first compute the KM estimation~\cite{kaplan1958nonparametric} for the training set, $S_{\KM}(t) = \prod_{i: t_i \leq t} (1 - \frac{d_i}{n_i})$, as shown by the tan curve in Figure~\ref{fig:KM-sampling}. Here $t_i$ is a time where at least one event happened, $d_i$ is the number of events that occur at $t_i$, and $n_i$ is the count of at-risk individuals (who have not yet had an event) before $t_i$. 
Then we can calculate the conditional group-level survival distribution, given a subject is censored at $c_j$ in the conformal set:
\begin{align}
    S_{\text{KM}}(\,t\, \mid\, t> c_j\,) \ &=\ \min
    \left\{
       \frac{S_{\text{KM}}(t)}{S_{\text{KM}}(c_j)},\ 1
       \right\}.
\end{align}

\begin{figure}[ht]
    \centering
    \vspace{-0.1in}
    \includegraphics[width=0.8\columnwidth]{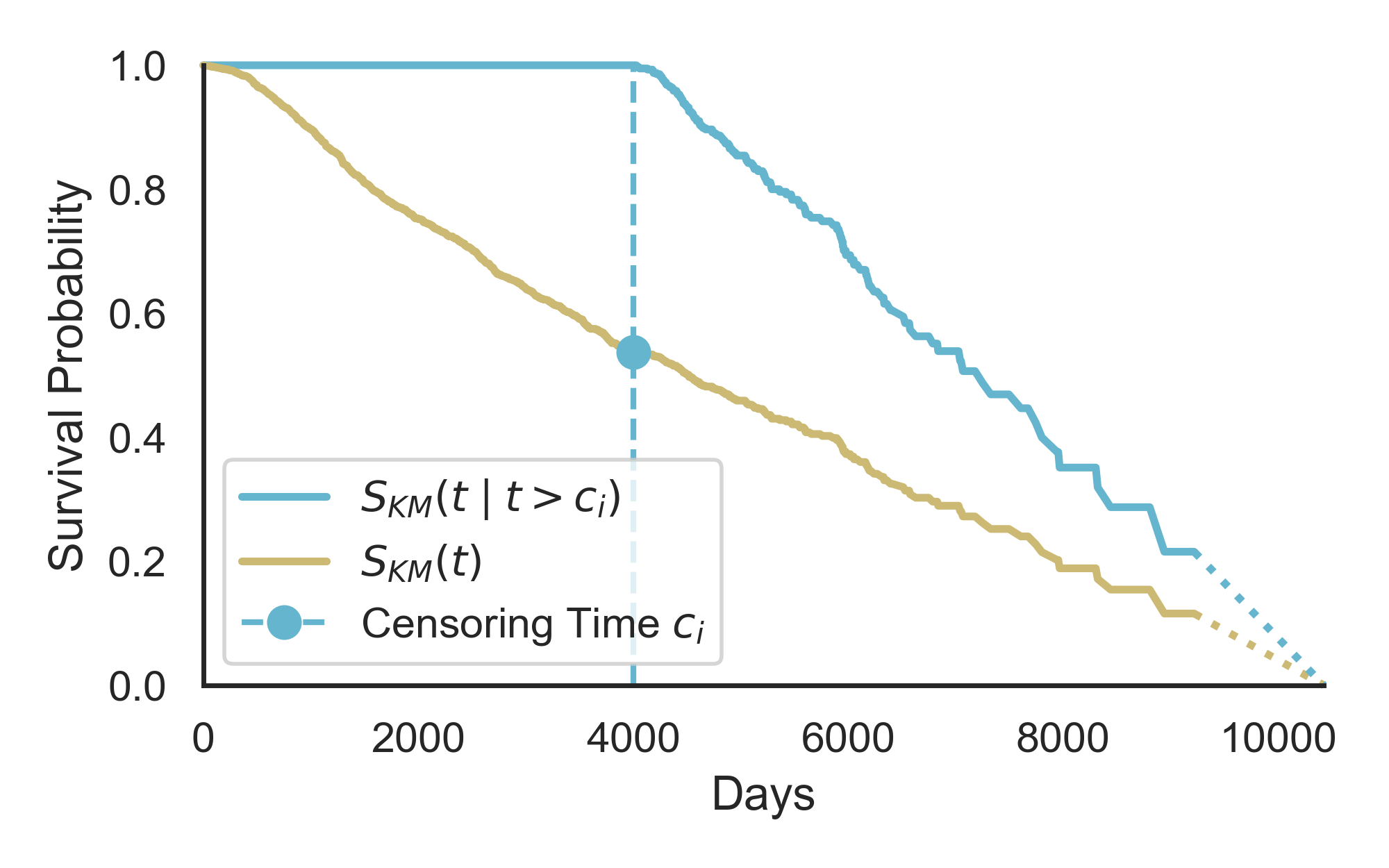}
    \vspace{-0.2in}
    \caption{An example of KM sampling from \texttt{METABRIC} dataset. }
    \label{fig:KM-sampling}
\end{figure}

The blue curve in Figure~\ref{fig:KM-sampling} is this conditional distribution, associated with $c_i = 4000$.\footnote{If the KM does not reach 0\% at the end of the curve, we adopt a linear extrapolation method~\cite{haider2020effective} to extend survival curves, using the starting point $[0, 1]$ and ending point $[t_{\text{max}}, S_{\text{KM}}(t_{\text{max}})]$, see the dotted extensions in Figure~\ref{fig:KM-sampling}.
}
To incorporate this BG distribution into the conformity score calculation, we can use the idea of sampling, \ie repeating a censored subject $R$ times, and for each, compute the conformity score by sampling an event time from the BG distribution.
Formally, for the $r$-th sampling, the conformity score is
\begin{equation}
\label{eq:conformal_score_censored}
    s_{j, \Model}^r (\rho) \ = \ \hat{q}_{\Model}(\rho \mid \Bfx{j}) - t_j^r,  \ \ \ t_j^r \sim S_{\text{KM}}(t \mid t > c_j).
\end{equation}
After each censored subject is sampled multiple times, the number of censored subjects in the conformal set will be significantly increased. 
Correspondingly, we need to balance the uncensored subjects.
Therefore, for each uncensored subject, we need to repeat it $R$ times as well, to maintain the same event-censoring ratio -- that is, we also sample $R$ times for each uncensored subject, using the Heaviside function at its event time.  
In this study, we use $R=1000$ for sampling.

The pseudo-algorithm for computing the CSD with the KM-sampling process is presented at Algorithm~\ref{alg:csd} in Appendix~\ref{sec:detailed_algorithm}.

\begin{remark}
This method depends on the KM model for calculating the BG distributions. 
Technically, we can use any survival estimator as the prior distribution for the subsequent sampling operation.
However, KM is guaranteed to be asymptotically calibrated (both D-cal and KM-cal, see Theorem~\ref{thm:km_is_d-cal} and~\ref{thm:km_is_km-cal}). In contrast, other survival algorithms that predict ISDs (\eg CoxPH) do not provide this guarantee, meaning they might perform very poorly here. Our empirical observations also align with this theoretical claim. That is, we could not find any ISD model that can consistently exhibit the same level of calibration performance as KM – see the calibration results in Section~\ref{sec:exp_results} and Figure~\ref{fig:full_results}.
\end{remark}

\begin{remark}

On the positive side, the KM curve does not make any assumption on the distribution. 
However, its applicability is limited to datasets with a sufficient number of events for accurate estimation.
Our ablation study (Section~\ref{sec:ablation}) reveals that for small datasets ($N< 1000$), the performance of the KM-sampling method is similar to both decensoring methods. 
CSD also has other constraints, \eg it cannot perform the conformalization step in batch mode due to $\Q$ operation, which means it is computationally inefficient.
We provide the computational analysis in Section~\ref{sec:exp_results} and Appendix~\ref{appendix:computational_analysis}.
\end{remark} 

\begin{remark}
CSD method makes two assumptions: \textit{exchangeability} and \textit{conditional independent censoring}. 

    \textit{Exchangeability} is a standard assumption in machine learning. For example, if the average life expectancy is around 70 for the first 100 instances (collected now), then we assume the average will still be around 70 for the next 100 instances, etc. If this assumption is violated, then any model trained on the current dataset may be ineffective and non-calibrated when it is tested on future instances. We have endeavored to uphold this assumption as rigorously as possible during our experiments.

    \textit{Conditional independent censoring} is the fundamental assumption in survival analysis. This assumption underpins the development of many survival models and the KM-sampling method. While this assumption may not always hold in practical scenarios, we intend to include a broad array of real-world datasets to mitigate the impact of any potential assumption violations.

\end{remark}

\subsection{Dataset Splitting}
\label{sec:data_split}
As mentioned earlier, CSD begins by splitting the data into a training and a conformal set. 
Previous methods~\cite{romano2019conformalized, candes2023conformalized} recommend allocating 50\% of the data for training and the rest 50\% for conformal, ensuring sufficient samples in the calibration set.

However, this policy can be problematic:
reducing the training set size by half may cause underfitting of the model, resulting in suboptimal discriminative ability.
Since the post-conformal step does not enhance the model's discrimination ability, this method essentially sacrifices some discrimination performance for improved calibration, which is not our intended outcome.

To address this, we explore two alternative policies: 
(1)~Using the validation set, typically required for hyperparameter tuning or early stopping, as the conformal set; 
(2)~Combining the validation and training sets to form a larger conformal set, where the training set would be a subset of the conformal set. For models that do not require a separate validation set, we propose reusing the training set as the conformal set.
Section~\ref{sec:exp_results} discusses the performance of these two policies.

\section{Experiments and Results}
\label{sec:exp}
\subsection{Datasets}
To evaluate the efficacy of CSD, we trained the non-CSD baselines and their post-CSD competitors in 11 datasets. 
Table~\ref{tab:data_comp} offers an overview of the dataset statistics; 
see Appendix~\ref{appendix:data_details} for details.
We deliberately select a wide range of datasets with small-to-large sample sizes, small-to-large censoring rates, and small-to-large feature-sample ratios.

\begin{table}[t]
\centering
\vspace{-0.1in}
\caption{Summary of the datasets. We categorize datasets into \emph{small}, \emph{medium}, and \emph{large} groups based on sample sizes, using thresholds of 1,000 and 10,000 samples. $^\dagger$Number of raw features, with (number of features after one-hot encoding) in parentheses. }
\label{tab:data_comp}
\setlength\tabcolsep{1.8pt}
\begin{tabular}{lrrrc}
\toprule
\rowcolor{LightCyan} 
Dataset  & \#Sample    & \%Censored & Max $t$  & \#Feature$^\dagger$ \\   \midrule
\texttt{VALCT}    & 137         &  6.57\%  & 999      &  6 (8)                   \\
\texttt{DLBCL}    & 240         & 42.50\%  & 21.80     &  7399                   \\
\texttt{PBC}      & 418         & 61.48\%  & 4,795     &  17                   \\
\texttt{GBM}      & 595         & 17.23\%  & 3,881      &  8 (10)                   \\
\hdashline
\texttt{METABRIC} & 1,981       & 55.17\%  & 9218     & 79                    \\
\texttt{GBSG}     & 2,232       & 43.23\%  & 87.36      & 7                    \\
\texttt{NACD}     & 2,396       & 36.44\%  & 84.30      & 48                    \\
\texttt{SUPPORT}  & 9,105       & 31.89\%  & 2,029      & 26 (31)                   \\
\hdashline
\texttt{SEER-brain} & 73,703    & 40.12\%  & 227        & 10         \\
\texttt{SEER-liver} & 82,841    & 37.57\%  & 227        & 14         \\
\texttt{SEER-stomach} & 100,360 & 43.40\%  & 227        & 14         \\
\bottomrule
\end{tabular}
\end{table}

We split each dataset into a training set (90\%) and a testing set (10\%) using a stratified splitting procedure that balances both the time $t_i$ and the censor indicator $\delta_i$. 
For algorithms that require a validation set to tune hyperparameters or to early stop, we partition another balanced 10\% validation set from the training set. 
We run each dataset using 10 different random seeds and compute the mean and 95\% confidence interval (CI) on all evaluation metrics across the 10 splits.

\begin{figure*}[ht]
    \centering
    \includegraphics[width=\textwidth]{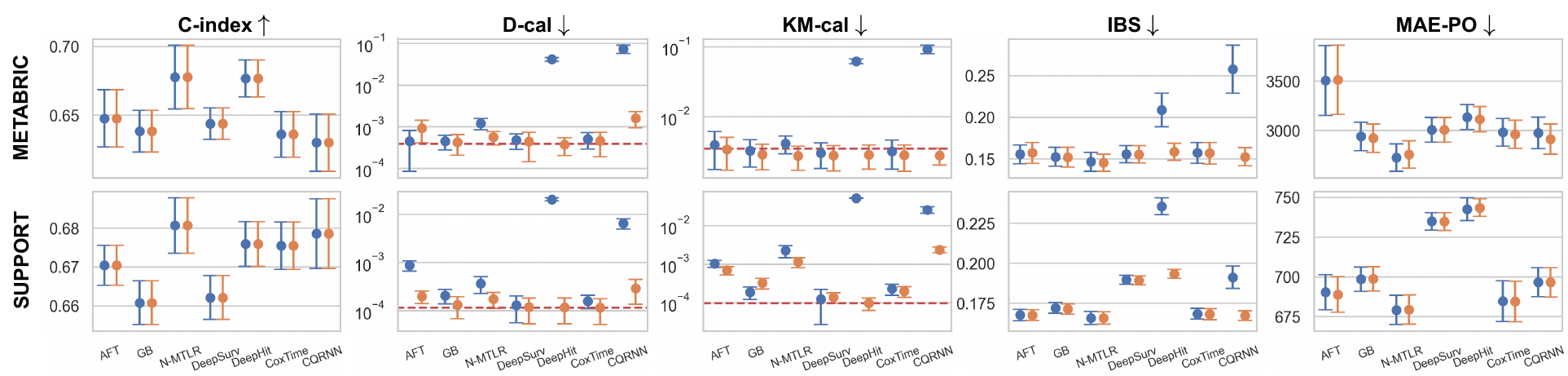}
    \vspace{-0.2in}
    \caption{Parts of the empirical results. The error bars represent mean and 95\% CI over ten runs, with blue denoting the non-CSD baseline and orange for the CSD-version. The red dashed lines represent the mean calibration performance for KM estimator, serving as an empirical lower-limit. 
    A higher C-index score indicates better performance, whereas lower scores are preferable for the other metrics.}
    \label{fig:main_results}
\end{figure*}

\subsection{Evaluation Metrics}
\label{sec:metrics}
Our goal is to enhance calibration performance while preserving the existing level of discriminative ability.
To test that, we use a discrimination metric (C-index) and two calibration metrics (D-cal and KM-cal) as the primary metrics.

For the C-index, we use the negative value of the predicted median survival time as the risk score in~\eqref{eq:c-index}. 
The D-cal metric is calculated by quantifying the mean squared distances between the observed survival probability and the predicted survival probability at each percentile (the distances between the dashed perfect calibration line and the model's solid P-P line in Figure~\ref{fig:calibration_illustration}(d)). 
Formally, for a set of percentiles $\mathcal{P} = \{0.1, 0.2, \ldots, 0.9\}$, D-cal is calculated by
\begin{equation*}
    \text{D-cal}\ =\ \frac{1}{|\mathcal{P}|} \sum_{\rho \in \mathcal{P}} \left( \frac{|\hat{\Data}_{\text{test}}(0, \rho)|}{|\Data_{\text{test}}|}\ -\ \rho \right)^2,
\end{equation*}
where $\hat{\Data}_{\text{test}}(0, \rho)$ is calculated using~\eqref{eq:d-cal}.
This score can also be interpreted as the area between the perfect calibration line (dashed line) and the model's P-P line (solid line) in Figure~\ref{fig:calibration_illustration}(d). The KM-cal is calculated using 
\begin{align*}
    \text{KM-cal} = \frac{1}{t_{\text{max}}} \int_{t=0}^{t_{\text{max}}} \left(S_{\text{KM} (\Data_{\text{test}})}(t) - \E_{\Bfx{i}} [\hat{S} (t \mid\Bfx{i}) ] \right)^2  dt,
\end{align*}
where $t_{\text{max}}$ is the maximum time in the testing set.
This score is the integrated discrepancy between the KM curve and the expected model prediction over time.

In addition to the primary metrics, we include two other metrics to assess survival prediction performance: the integrated Brier score (IBS)~\cite{graf1999assessment} measures the predicted probability accuracy across all times; and the mean absolute error with pseudo-observation (MAE-PO)~\cite{qi2023an} measures the accuracy of predicting the time to an event. 

\subsection{Models}
We compare the results using 7 survival algorithms: 
Accelerate Failure Time (AFT)~\cite{stute1993consistent} with Weibull distribution,
Gradient Boosting (GB)~\cite{hothorn2006survival} with component-wise least squares as base learner,
DeepSurv~\cite{katzman2018deepsurv}  (with Breslow extension), 
Neural Multi-Task Logistic Regression (N-MTLR)~\cite{fotso2018deep},
DeepHit~\cite{lee2018deephit}, 
CoxTime~\cite{kvamme2019time},
and censored quantile regression neural networks (CQRNN)~\cite{pearce2022censored}.
Appendix~\ref{appendix:model_details} describes the implementation details and hyperparameter settings.

Additionally, we also include a ``dummy'' model -- KM estimator~\cite{kaplan1958nonparametric} -- which uses the group-level KM for the training set as the same prediction to all test samples. 
Theoretically, it represents an extreme in our model spectrum, offering minimal discriminative performance (C-index$=0.5$) while theoretically achieving perfect calibration (Theorems~\ref{thm:km_is_km-cal} and~\ref{thm:km_is_d-cal}).
However, as evident in the results (Figure~\ref{fig:main_results} and Appendix~\ref{appendix:exp_complete}), this model still registers small scores for the D-cal and KM-cal. 
This is due to the slight distribution-shift caused by the dataset split, even though we have balanced the partition via both $\{t_i\}_{i=1}^N$ and $\{\delta_i\}_{i=1}^N$. 
Therefore, the calibration metric scores of the KM estimator can serve as the \textbf{empirical lower-limits} for the calibration performance.

\subsection{Results}
\label{sec:exp_results}

\begin{table}[!t]
\centering
\vspace{-0.1in}
\caption{Comparative Analysis of CSD performance by counting the number of times none-CSD baselines is better, CSD is better, and ties, over five evaluation metrics. The number in bracket means significantly better ($p<0.05$ using two-sided $t$-test). }
\label{tab:summary}
\vspace{1pt}
\setlength\tabcolsep{3.3pt}
\begin{tabular}{lccccc}
\toprule
         & C-index & D-cal & KM-cal & IBS & MAE-PO \\
\midrule
Non-CSD  & 3(0)    & 8(1)            & 20(7)    & 12(0)           & 30(0)           \\
CSD      & 13(0)    & \textbf{68(35)}            & \textbf{56(30)}    & \textbf{61(14)}           & \textbf{45(4)}           \\
ties     & \textbf{60}   &  0            & 0    & 3          & 1           \\
\bottomrule
\end{tabular}
\end{table}

We compare 7 baseline models and their post-CSD versions on 11 clinical datasets.
Note that for the \texttt{DLBCL} datasets, because the number of features (7399) is significantly greater than the sample size (240), AFT model fails to converge on this dataset.
Consequently, we have in total $11 \times 7 - 1 = 76$ comparisons.
Due to the space limit, 
the main text only reports the results on two datasets (\texttt{METABRAC} and \texttt{SURRPORT}) in Figure~\ref{fig:main_results}.
Appendix~\ref{appendix:exp_complete} presents the complete results.

\paragraph{Discriminative Performance}


The first column of Figure~\ref{fig:main_results} shows there are essentially no differences in the C-index between the non-CSD models and their CSD counterparts.
Specifically, out of 76 comparisons, 60 show equal performance (ties), and in 13 instances, the CSD models is (insignificantly) better (Table~\ref{tab:summary}).
This implies that the CSD framework preserves the discriminative performance in approximately 96\% of cases.
Moreover, in the remaining 3 cases where non-CSD models perform better, the differences are not statistically significant.

\paragraph{Calibrative Performance}

Our results, columns 2 and 3 of Figure~\ref{fig:main_results} and the summary in Table~\ref{tab:summary}, indicate a significant improvement in calibration performance due to the CSD framework.
Specifically, the CSD framework demonstrates improvement in 68 cases for D-cal (with 35 significantly better) and 56 cases for KM-cal (with 30 significantly better).
However, 
note that in some instances, baseline models may already reach optimal calibration levels, comparable to the KM estimator (\eg the AFT model in \texttt{METABRIC} dataset).
In such cases, further enhancement of calibration performance is challenging.  
Nevertheless, the CSD framework consistently performs well by maintaining the same calibration level in these already well-calibrated cases.

\paragraph{Other Metrics}

The fourth columns in Figure~\ref{fig:main_results} give the
IBS score, which is the expectation of Brier scores (BS) over time. 
According to~\citet{degroot1983comparison}, BS can be decomposed into a calibration part (1-cal) and a discrimination part (AUC). 
This implies that IBS, being an integrated form of BS, includes elements of KM-cal (integrated 1-cal).
Indeed, the results in Table~\ref{tab:summary} show that the trends in IBS closely mirror those in KM-cal, with CSD outperforming in 61 (and significantly in 14) of the 76 comparisons (80\%).

For MAE-PO, the performance enhancement by CSD is less pronounced compared to other metrics. 
Nonetheless, CSD still shows improvement in this metric, outperforming 45 (and significantly in 4) out of 76 comparisons (59\%).

\begin{figure}[t]
    \centering
    \includegraphics[width=\columnwidth]{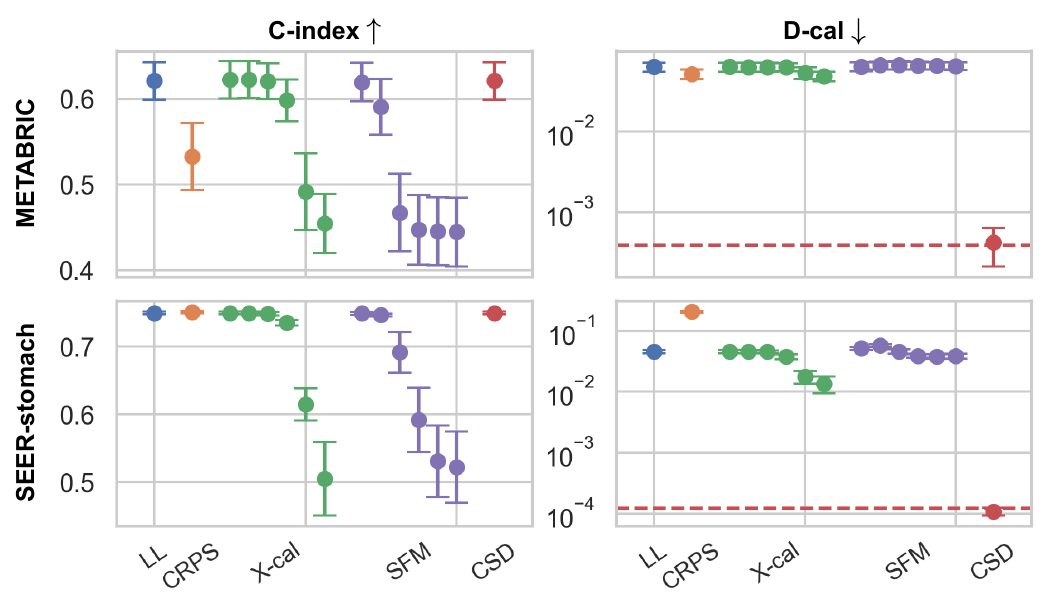}
    \vspace{-0.1in}
    \caption{Compare CSD with objective-based methods on a deep log-normal model. 
    The baseline (blue) uses likelihood loss (LL). 
    For X-cal and SFM methods, we gradually increase the weight for the calibration loss. The red dashed lines serve as the empirical lower-limit. }
    \label{fig:short_results_objective-based_methods}
\end{figure}

\paragraph{Compare with Objective-Based Methods}
\label{sec:results_objective-based}
In Figure~\ref{fig:short_results_objective-based_methods}, we compare CSD with three objective-based methods previously discussed in Section~\ref{sec:bg_calibrated_survival_prediction}: CRPS~\cite{avati2020countdown}, X-cal~\cite{goldstein2020x} and SFM~\cite{chapfuwa2020calibration}. 
The experimental procedures and the complete results for all 11 datasets are detailed in Appendix~\ref{appendix:objective-based_methods}.

Our results on 11 datasets indicate that X-cal can indeed enhance D-cal's performance by increasing the weight assigned to calibration loss. However, X-cal falls short in two ways: (1) it fails to achieve optimal calibration levels comparable to those achieved by KM and CSD (the red dashed line), and (2) it is often accompanied by a notable decrease in the C-index. Specifically, in \texttt{METABRIC} and \texttt{SEER-stomach}, as X-cal improves D-cal, the C-index drops to $\approx 0.5$.
Regarding CRPS, its beneficial effects on D-cal are less consistent, \eg it is effective for the \texttt{METABRIC} dataset but not for \texttt{SEER-stomach}. SFM also exhibits inconsistent trends; it is effective for \texttt{SEER-stomach} but not for \texttt{METABRIC}. 
In cases where CRPS and SFM are effective, they display trends comparable to those of X-cal.

In conclusion, the CSD framework significantly surpasses the performance of these objective-based methods.

\paragraph{Ablation Studies}
\label{sec:ablation}
CSD is influenced by three key hyperparameters: (1)~how to handle censorship; (2)~how to construct a conformal set; and (3)~the number of discretized percentiles. While Appendix~\ref{appendix:ablation} provides complete experimental details, results and summaries, the main findings are:\\[-2.em]
\begin{enumerate}
\parskip0em
\itemsep0em
    \item For medium to large datasets, the KM sampling method significantly outperforms other approaches.
    \item For large datasets, using the validation set alone as the conformal set is typically adequate. However, for smaller or medium datasets, it is beneficial to combine both validation and training sets to form the conformal set.
    \item Different numbers of percentiles have minimal impacts on performance. 
\end{enumerate}

\paragraph{Computational Complexity}

The quantile operation in~\eqref{eq:csd_adjust} requires storing the conformity scores for all data in the conformal set in advance, followed by computing the quantile amount in a substantial array. This introduces two types of complexity: space and time. In short summary, KM-sampling increases the space complexity by a factor of $R$ and the time complexity by approximately 16 times compared to the simple uncensored method. The complete results can be found in Appendix~\ref{appendix:computational_analysis}.

\section{Conclusion}
\label{sec:conclusion}

There are two crucial aspects in survival prediction:
\textbf{discrimination} assesses the model's ability to correctly rank subjects, and \textbf{calibration} evaluates the alignment of predicted distributions with actual outcomes.
Unfortunately, many survival prediction learners
encounter a trade-off between discrimination and calibration.

Our study introduces the conformalized survival distribution (CSD), a generic post-processing framework designed to mitigate this trade-off. 
Inspired by the principle of focusing on one task at a time, CSD disentangles calibration from discrimination during training. 
Specifically, we first produce a model, by concentrating solely on enhancing discrimination. 
Post-training, CSD steps in to refine the predicted survival distribution curves through conformal regression, producing new curves that are (in general) more calibrated.

Our empirical evidence shows that CSD successfully maintains discrimination ability on par with baseline models and enhances calibrative performance in terms of both D-cal and KM-cal. 
Also, CSD's effectiveness extends to two other widely used survival metrics, showing encouraging results.

\section*{Acknowledgements}
This research received support from the Natural Science and Engineering Research Council of Canada (NSERC), and the Alberta Machine Intelligence Institute (Amii). The authors extend their gratitude to the anonymous reviewers for their insightful feedback and valuable suggestions.

\section*{Impact Statement}
This paper presents work whose goal is to advance the field of Machine Learning. There are many potential societal consequences of our work, none of which we feel must be specifically highlighted here.




\bibliography{main}
\bibliographystyle{icml2024}

\newpage
\appendix
\onecolumn

\section{Further Discussion on Discrimination and Calibration Trade-Off}
\label{appendix:dis-cal_tradeoff}

\subsection{More about Discrimination}

This appendix delves deeper into the Concordance Index (C-index), which gauges the model’s ability to accurately rank subjects according to their risk levels. 
We previously introduced the concept and formal definition of the C-index in Section~\ref{sec:cindex}. 
To enhance comprehension, we now present a visual example to illustrate the calculation of the C-index.

\begin{figure}[h]
    \centering
    \includegraphics[width=0.9\textwidth]{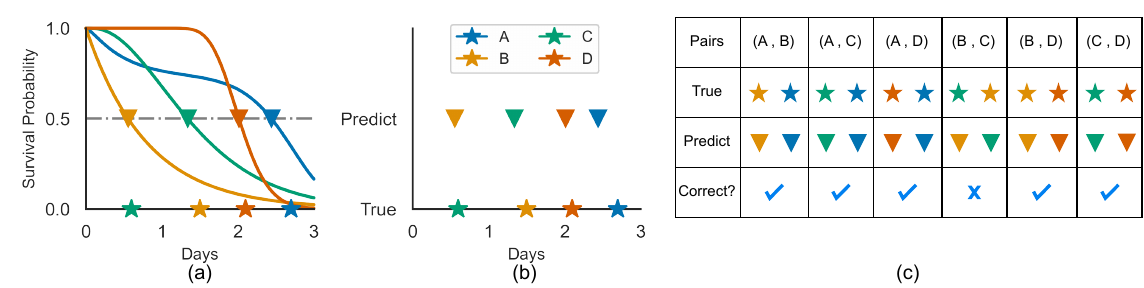}
    \caption{Illustration of concordance index (C-index), using the same patients as Figure~\ref{fig:calibration_illustration}.
    The stars denotes the true times, and the triangles denotes predicted median survival times.}
    \label{fig:cindex_example}
\end{figure}

Consider Figure~\ref{fig:cindex_example}, which depicts the true outcomes (indicated by stars) and the individual survival distribution (ISD) predictions (represented by curves) for four patients, based on the same example provided in Figure~\ref{fig:calibration_illustration}. 
Recall that predicted risk scores are typically negative of the predicted survival times (either median or mean). For simplicity and clearer visualization, we define risk scores as the negative median survival time, marked by the intersection of the curves with the 50\% dashed horizontal line (triangles), as illustrated in Figure~\ref{fig:cindex_example}(a).

Following the acquisition of the predicted median survival times, $\hat{t}_{i}^{(\text{median})}$, we compare their order with that of the actual event times. Consequently, we can reformulate the equation for the C-index as follows:
\begin{equation}
\begin{aligned}
\label{eq:cindex_reformat}
    \text{C-index} (\, \{\, \hat{t}_{i}^{(\text{median})} \,\}_{i=1}^N, \Data \, )
    \quad = \quad \frac{\sum_{i,j} \mathbbm{1}[\, t_i < t_j \,] \cdot \mathbbm{1}[\, \hat{t}_{i}^{(\text{median})} < \hat{t}_{j}^{(\text{median})} \,] \cdot \delta_i }{\sum_{i,j} \mathbbm{1}[\, t_i < t_j \,] \cdot \delta_i } \ .
\end{aligned}
\end{equation}
This equation can also be used for computing the C-index for the predicted mean survival times, simply by replacing the $\hat{t}_{i}^{(\text{median})}$ to $\hat{t}_{i}^{(\text{mean})}$.
In Figure~\ref{fig:cindex_example}, out of all $\binom{4}{2} = 6$ possible pairs formed from four patients, only the pair (B, C) is incorrectly predicted, as shown by the yellow and green triangles. Thus, the C-index score for this model prediction is $\frac{5}{6} \approx 83\%$.

To account for censored subjects, we do a modification by identifying \emph{comparable} pairs and examining the accuracy of the predicted order of those pairs. 
A pair is deemed comparable iff the ordering of their events can be unequivocally determined. 
Suppose that patient D (indicated in red) is censored, with the red star (at time 2.1) marking the censored time, while patients A, B, and C are fully observed. 
Among the pairs involving patient D, namely (A, D), (B, D) and (C, D), the pairs (B, D) and (C, D) remain comparable since it is evident that the event of D occurs after the events of B and C. 
However, the pair (A, D) becomes non-comparable because D's censored time precedes A's event time (at 2.5), leaving the true sequence of events unclear. 
Consequently, this pair is excluded from both the numerator and denominator in the C-index calculation as per equation~\eqref{eq:cindex_reformat}, resulting in a revised C-index score of $\frac{4}{5} = 80\%$.

\subsection{More about Calibration}
\label{appendix:calibration_more}

\paragraph{Distribution Calibration} 
D-cal examines the calibration ability across the entire range of predictions made by an individual survival distribution ISD model.
We can only trust a model's prediction if the ISD predictions are D-calibrated.
As we discussed in Section~\ref{sec:calibration}, for datasets without censorship, the D-calibration of the predictions can be evaluated by examining subsets of data within any probability interval $[a, b] \subset [0, 1]$. These subsets are defined as:
\begin{equation*}
    \hat{\Data}(a, b) \quad = \quad \{\, [\Bfx{i}, t_i, \delta_i=1] \in \mathcal{D} \mid \SurvPred{t_i}{\Bfx{i}} \in [a, b] \,\}  ,
\end{equation*}
where the proportion of observations within $\hat{\Data}(a, b)$ relative to the total dataset size $|\Data|$ should reflect the length of the interval, \ie $\frac{|\hat{\Data}(a, b)|}{|\Data|} = b - a$.
Formally, the size of such subsets is given by:
\begin{equation*}
    |\hat{\Data}(a, b)| \quad = \quad \sum_{i= 1}^N \delta_i \cdot  \mathbbm{1}\left[\, \SurvPred{t_i}{\Bfx{i}} \in [a, b] \,\right] .
\end{equation*}

To extend this analysis for censored observations, consider a censored patient with a predicted survival probability at the patient's censoring time $\SurvPred{c_i}{\Bfx{i}}$ (where $c_i = t_i$ if $\delta_i = 0$).
Given that censoring time provides a lower bound on the actual event time, it follows that $\SurvPred{c_i}{\Bfx{i}} > \SurvPred{e_i}{\Bfx{i}}$\footnote{This is because $c_i < e_i$ for $\delta_i = 0$, and the predicted ISD curve $\hat{S} (t\mid \Bfx{i})$ must be monotonically decreasing as the time $t$ increases.}.
Therefore, we can estimate the likelihood of the survival probability at the actual event time falling within the interval $[a, b]$, given we know the survival probability at the censor time, such as $\prob (\hat{S} (e_i \mid \Bfx{i}) \in [a, b] \mid \SurvPred{e_i}{\Bfx{i}} < \SurvPred{c_i}{\Bfx{i}} )$. 

For this purpose, we consider three scenarios:
\begin{itemize}
    \item \textbf{Case 1:} $a<b\leq\hat{S} (c_i \mid \Bfx{i})$
    \begin{align*}
        \prob \left(\hat{S} (e_i \mid \Bfx{i}) \in [a, b] \,\middle\vert\, \SurvPred{e_i}{\Bfx{i}} < \SurvPred{c_i}{\Bfx{i}} \right) \quad
        &= \quad \frac{\prob \left(a < \hat{S} (e_i \mid \Bfx{i}) < b, \ \SurvPred{e_i}{\Bfx{i}} < \SurvPred{c_i}{\Bfx{i}} \right)}{\prob \left(\SurvPred{e_i}{\Bfx{i}} < \SurvPred{c_i}{\Bfx{i}}\right)} \\
        &= \quad \frac{\prob \left(a < \hat{S} (e_i \mid \Bfx{i}) < b\right)}{\prob \left(\SurvPred{e_i}{\Bfx{i}} < \SurvPred{c_i}{\Bfx{i}}\right)} \\
        &= \quad \frac{b-a}{\SurvPred{c_i}{\Bfx{i}}};
    \end{align*}
    \item \textbf{Case 2:} $a\leq\hat{S} (c_i \mid \Bfx{i})<b$
    \begin{align*}
        \prob \left(\hat{S} (e_i \mid \Bfx{i}) \in [a, b] \,\middle\vert\, \SurvPred{e_i}{\Bfx{i}} < \SurvPred{c_i}{\Bfx{i}} \right) \quad
        &= \quad \frac{\prob \left(a < \hat{S} (e_i \mid \Bfx{i}) < \SurvPred{c_i}{\Bfx{i}}\right)}{\prob \left(\SurvPred{e_i}{\Bfx{i}} < \SurvPred{c_i}{\Bfx{i}}\right)} \\
        &= \quad \frac{\SurvPred{c_i}{\Bfx{i}}-a}{\SurvPred{c_i}{\Bfx{i}}};
    \end{align*}
    \item \textbf{Case 3:} $\hat{S} (c_i \mid \Bfx{i})<a<b$
    \begin{align*}
        \prob \left(\hat{S} (e_i \mid \Bfx{i}) \in [a, b] \,\middle\vert\, \SurvPred{e_i}{\Bfx{i}} < \SurvPred{c_i}{\Bfx{i}} \right) \quad
        &= \quad \frac{\prob (\varnothing)}{\prob \left(\SurvPred{e_i}{\Bfx{i}} < \SurvPred{c_i}{\Bfx{i}}\right)} \\
        &= \quad 0.
    \end{align*}
\end{itemize}
By integrating these scenarios, we can define the adjusted subset size $\hat{\Data}(a, b)$, accounting for both uncensored and censored patients, as follows:
\begin{equation}
\label{eq:d-cal_censored_size}
    |\hat{\Data}(a, b)| = \sum_{i= 1}^N \delta_i \cdot  \mathbbm{1}\left[\SurvPred{t_i}{\Bfx{i}} \in [a, b]\right] 
    + (1 -\delta_i) \left(\frac{\left(\SurvPred{t_i}{\Bfx{i}} - a\right)\mathbbm{1}\left[\SurvPred{t_i}{\Bfx{i}} \in [a, b]\right]}{\SurvPred{t_i}{\Bfx{i}}}
    + \frac{(b - a)\mathbbm{1}\left[\SurvPred{t_i}{\Bfx{i}} \geq b\right]}{\SurvPred{t_i}{\Bfx{i}}} \right).
\end{equation}

\paragraph{Single-time Calibration}

1-cal is a goodness of fit test to evaluate the calibration ability of risk predictions at a specific time.  
To perform this analysis at a chosen time point, denoted $t^*$, we begin by sorting the predicted probabilities for all patients at $t^*$ in order. These predictions are then divided into $K$ distinct groups: $G_1, \dots, G_k, \dots, G_K$.
Within each group, we calculate the expected number of events using the predicted probabilities, $\mathbb{E}_{\Bfx{i} \sim G_k} [\hat{S}(t^* \mid \Bfx{i})]$, and this is compared with the actual observed event rate, which is estimated using the KM method (because KM is an unbiased estimator for the population survival distribution). 
To assess the statistical similarity between the expected and observed rates of events, we apply the Hosmer-Lemeshow test~\cite{hosmer1980goodness}, based on its test statistic:
\begin{align}
\label{eq:hl_stats}
    \hat{\text{HL}} (t^*, \Data) \quad
    &= \quad \sum_{k=1}^{K} \frac{(O_k - n_k \Bar{p}_k)^2}{n_k \Bar{p}_k (1 - \Bar{p}_k)} \notag \\
    &= \quad \sum_{k=1}^{K} \frac{ 
    \left(|G_k|\cdot S_{\KM (G_k)}(t^*) - \sum_{\Bfx{i} \in G_k} \hat{S}(t^* \mid \Bfx{i})\right)^2}{\sum_{\Bfx{i} \in G_k} \hat{S}(t^* \mid \Bfx{i}) \left(1 - \frac{1}{|G_k|}\sum_{\Bfx{i} \in G_k} \hat{S}(t^* \mid \Bfx{i})\right)}.
\end{align}
where, $n_k$ is the total number of patients in the $k$-th group, $O_k$ is the observed number of patients who experienced the event before $t^*$ in the $k$-th group, and $\Bar{p}_k$ is the average predicted survival probability for the patients in the $k$-th group.

\paragraph{KM Calibration}
\citet{chapfuwa2020calibration} first proposed KM calibration as an alternation of 1-cal and D-cal.
The underlying principle of this approach is to ensure that the average predicted ISD curve for the test set is closely aligned with the KM curve for the testset.

Originally, KM calibration involved a straightforward visual comparison of the two curves. Later, \citet{yanagisawa23proper} refined this method by employing the Kullback-Leibler (KL) divergence to quantify the difference between the two distributions, as expressed in the following equation:
\begin{align*}
    d_{\text{KM-cal}} \quad
    &= \quad D_{KL} \left(S_{\text{KM} \Data_{\text{test}})}(t) \;\big\|\; \E_{\Bfx{i} \sim \Data_{\text{test}}} \left[\hat{S} (t \mid\Bfx{i}) \right] \right) 
\end{align*}
However, KL-divergence comes with notable drawbacks: (1) it does not satisfy the properties of metrics, such as symmetric and triangle inequality; (2) it can become undefined if the average predicted ISD, $\E_{\Bfx{i} \sim \Data_{\text{test}}} \left[\hat{S} (t \mid\Bfx{i}) \right]$, goes to zero for some $t$;
(3) it is sensitive to outliers if the average predicted ISD has a long tail at the end, etc.

To address these issues, our research calculates KM-cal as the integrated squared difference between the two curves:
\begin{equation*}
    \text{KM-cal} \quad = \quad \frac{1}{t_{\text{max}}} \int_{t=0}^{t_{\text{max}}} \left(S_{\text{KM} (\Data_{\text{test}})}(t) - \E_{\Bfx{i} \sim \Data_{\text{test}}} \left[\hat{S} (t \mid\Bfx{i}) \right] \right)^2  dt,
\end{equation*}
In practice, we first calculate the squared errors at all unique time points within the test set, followed by the application of the trapezoidal rule to compute the integral.

Moreover, this study establishes, for the first time, a link between KM-cal and earlier calibration metrics by demonstrating that KM calibration is equivalent to an unnormalized, integrated single-time calibration across all time points for a single group ($K=1$), as shown below:
\begin{proposition}
    KM calibration is equivalent to unnormalized integrated single-time calibration across all time points, for one group ($K=1$).
\end{proposition}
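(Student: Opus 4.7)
The plan is to start from the Hosmer--Lemeshow statistic in~\eqref{eq:hl_stats} and specialize it to a single group $K=1$, in which case $G_1 = \Data$ and $n_1 = |\Data|$. Under this specialization the sum collapses to a single term whose numerator factors as
\begin{equation*}
    \Bigl(|\Data|\cdot S_{\KM(\Data)}(t^*) - \sum_{\Bfx{i}\in\Data}\hat{S}(t^* \mid \Bfx{i})\Bigr)^2
    \;=\; |\Data|^2\,\Bigl(S_{\KM(\Data)}(t^*) - \E_{\Bfx{i}\sim\Data}[\hat{S}(t^*\mid \Bfx{i})]\Bigr)^2,
\end{equation*}
which already produces the same squared difference that appears inside the integrand of KM-cal. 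The word \emph{unnormalized} in the proposition is precisely what lets us discard both the variance-like denominator of the HL statistic and the multiplicative $|\Data|^2$ and $1/t_{\textnormal{max}}$ scalars; once those are dropped the pointwise quantity matches, up to a positive constant, the KM-cal integrand at every $t^*$.

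Next I would \emph{integrate the single-time statistic over all time points}, i.e., replace the evaluation at a fixed $t^*$ by $\int_0^{t_{\textnormal{max}}}(\cdot)\,dt^*$. By Fubini (the integrand is nonnegative and the expectation over $\Bfx{i}\sim\Data$ is a finite sum) we can swap integration and expectation if needed, and the resulting expression is a positive multiple of
\begin{equation*}
    \int_0^{t_{\textnormal{max}}} \Bigl(S_{\KM(\Data)}(t) - \E_{\Bfx{i}\sim\Data}[\hat{S}(t\mid \Bfx{i})]\Bigr)^2\, dt,
\end{equation*}
which is $t_{\textnormal{max}}\cdot \text{KM-cal}$ by the definition given just before the proposition. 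Dividing out the $|\Data|^2$ and $t_{\textnormal{max}}$ constants finishes the equivalence.

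The main obstacle, such as it is, is purely definitional: clarifying exactly what ``unnormalized'' means, because the HL statistic in~\eqref{eq:hl_stats} has a binomial-variance denominator $n_k\bar{p}_k(1-\bar{p}_k)$ that has no counterpart in the KM-cal formula. I would therefore state upfront that ``unnormalized'' refers to retaining only the numerator of~\eqref{eq:hl_stats} (a standard squared-error fidelity term) and dropping the variance-stabilizing denominator, at which point the algebra above yields the claimed equivalence. Aside from this bookkeeping point, the proof is a direct substitution plus an integration, with no analytic subtleties beyond the elementary application of Fubini.
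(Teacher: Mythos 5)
Your proposal is correct and follows essentially the same route as the paper: specialize the Hosmer--Lemeshow statistic in~\eqref{eq:hl_stats} to $K=1$, recognize its numerator as the squared difference $\bigl(S_{\KM(\Data)}(t^*) - \E_{\Bfx{i}\sim\Data}[\hat{S}(t^*\mid\Bfx{i})]\bigr)^2$ up to constants, and interpret ``unnormalized'' as dropping the variance denominator so that KM-cal is just the integral of this quantity over time. The extra remarks about Fubini and the explicit bookkeeping of the $|\Data|^2$ and $1/t_{\textnormal{max}}$ factors are harmless additions to what is otherwise the paper's argument.
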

\begin{proof}
By calculating the Hosmer-Lemeshow test statistic for $K=1$, we find
\begin{align}
\label{eq:hl_stats_1group}
    \hat{\text{HL}} (t^*, \Data) \quad
    &= \quad \frac{ 
    \left(|\Data|\cdot S_{\KM (\Data)}(t^*) - \sum_{i= 1}^N \hat{S}(t^* \mid \Bfx{i})\right)^2}{\sum_{i= 1}^N \hat{S}(t^* \mid \Bfx{i}) \left(1 - \frac{1}{|\Data|}\sum_{i= 1}^N \hat{S}(t^* \mid \Bfx{i})\right)} \notag \\
    &= \quad \frac{ 
    \left(S_{\KM (\Data)}(t^*) - \E_{\Bfx{i} \sim \Data} \left[\hat{S} (t \mid\Bfx{i}) \right]\right)^2}{\frac{1}{|\Data|}\E_{\Bfx{i} \sim \Data} \left[\hat{S} (t \mid\Bfx{i}) \right] \left(1 - \E_{\Bfx{i} \sim \Data} \left[\hat{S} (t \mid\Bfx{i}) \right]\right)}. 
\end{align}
This equation illustrates that KM-cal integrates the numerator across all points, effectively representing an unnormalized statistic by omitting the denominator.
\end{proof}

\subsection{Discrimination versus Calibration}
\label{appendix:disc_vs_cal}

Discrimination and calibration are distinct concepts in model evaluation. 
A proof of this can be observed that the first and second derivative of~\eqref{eq:cindex_reformat} and~\eqref{eq:d-cal_censored_size} wrt to model's parameter do not match. 
However, the indicator functions in both equations make thing a bit complex.

\begin{figure}[ht]
    \centering
    \includegraphics[width=0.9\textwidth]{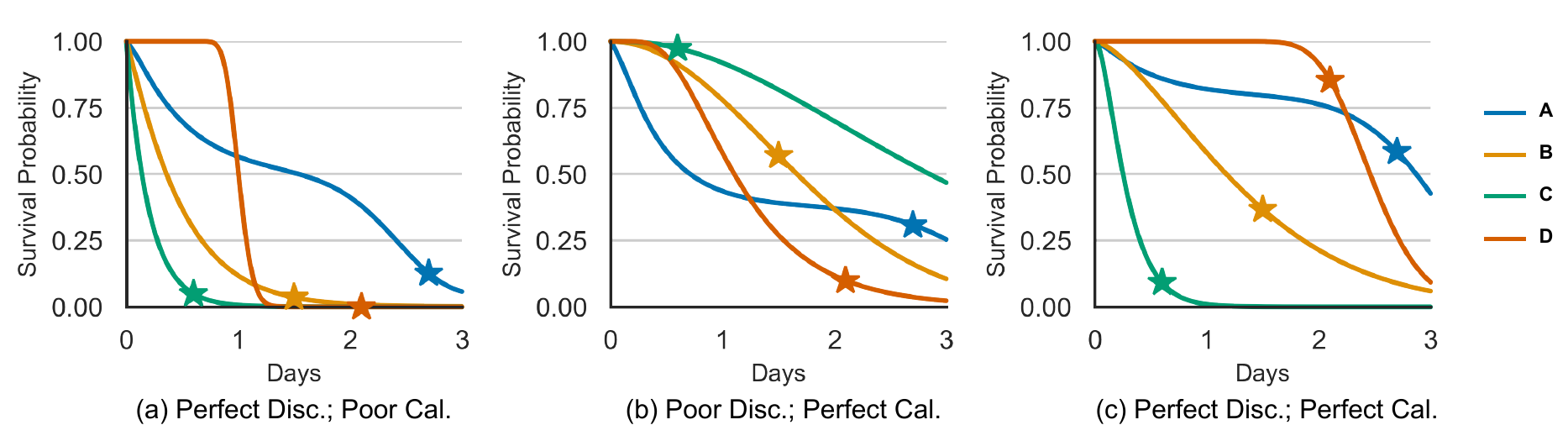}
    \caption{Illustration of discrimination and calibration using three hypothetical examples and the same patients in Figure~\ref{fig:calibration_illustration}.
    Here a perfect discrimination means the order of predicted median survival times (where curves intersect with 50\%) is aligned with true event times. 
    And a perfect (D-)calibration means the predicted probability at event times is uniformly distributed across $[0, 1]$ (one star in each probability interval).
    Disc.: discrimination; Cal.: calibration.
    }
    \label{fig:dis_vs_cal_example}
\end{figure}

In this section, we are using a much easier way to demonstrate their independence through counter-examples:
\begin{itemize}
    \item \textbf{A model $\Model_a$ can achieve perfect discrimination and poor calibration;}
    Figure~\ref{fig:dis_vs_cal_example}(a) shows the ISD prediction from $\Model_a$. 
    By checking the order in which the ISD curves intersect with 50\%, we can get the order of predicted median survival times as: green, yellow, red, blue (C, B, D, A), which is perfectly aligned with the order of stars (true event times), indicating perfect discrimination (C-index$=1$).
    However, as to the calibration, we can check the y-coordinate of the stars, which indicates the survival probability at event time. All the y-coordinates are located inside the interval $[0, 0.25]$, indicating the worst possible D-calibration statistics. 
    \item \textbf{A model $\Model_b$ can achieve poor discrimination and perfect calibration;}
    Figure~\ref{fig:dis_vs_cal_example}(b) shows the ISD prediction from $\Model_b$. 
    By checking the order in which the ISD curves intersect with 50\%, we can get the order of predicted median survival times as: blue, red, yellow, green (A, D, B, C), which is completely reversed from the order of stars (true event times), indicating lowest possible discrimination (C-index$=0$).
    However, as to the calibration, we can find that the y-coordinates of the stars are uniformly distributed among the y-axis (one in each interval), indicating the perfect D-calibration. 
    \item \textbf{A model $\Model_c$ can achieve perfect discrimination and perfect calibration;}
    Figure~\ref{fig:dis_vs_cal_example}(c) shows the ISD prediction from $\Model_c$. 
    By checking the order of the ISD curves intersecting with 50\%, we can get the order of predicted median survival times as: green, yellow, red, blue (C, B, D, A), which is perfectly aligned with the order of stars (true event times), indicating perfect discrimination (C-index$=1$).
    As to the calibration, we can find that the y-coordinates of the stars are uniformly distributed among the y-axis (one in each interval), indicating the perfect D-calibration. 
\end{itemize}

These examples clarify that a model can exhibit varying levels of discrimination and calibration, underscoring their independence as evaluation metrics.

\section{Why Kaplan-Meier models are perfectly calibrated}
\label{appendix:km_perfect_calibration}
In this section, we theoretically prove that the Kaplan-Meier (KM) model fitted on the training set is asymptotically calibrated (both KM-cal and D-cal) under some assumptions.
Therefore, the calibration scores for the KM model can be used as the \emph{empirical lower bound} for the performance comparison.

\begin{theorem}
\label{thm:km_is_km-cal}
    The Kaplan-Meier estimation on the training set is asymptotically KM-calibrated, under the \emph{exchangeable} and \emph{conditional independent censoring} assumptions.
\end{theorem}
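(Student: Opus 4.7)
The plan is to exploit the fact that the Kaplan-Meier (KM) estimator fitted on $\Data_{\textnormal{train}}$ is a single curve shared by every test subject, since it ignores covariates altogether. Writing $\SurvPred{t}{\Bfx{i}} = S_{\KM(\Data_{\textnormal{train}})}(t)$ for all $i$, the inner expectation in the KM-cal functional collapses, and the objective reduces to a pure sample-to-sample comparison,
$$\text{KM-cal} \ = \ \frac{1}{t_{\max}}\int_{0}^{t_{\max}} \bigl( S_{\KM(\Data_{\textnormal{test}})}(t) - S_{\KM(\Data_{\textnormal{train}})}(t) \bigr)^{2}\, dt.$$
So the theorem is equivalent to showing that two independent KM curves, fitted on two i.i.d.\ samples drawn from the same population, become close in $L^{2}$ on $[0, t_{\max}]$ as both sample sizes grow.

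Next I would invoke the exchangeability assumption to argue that $\Data_{\textnormal{train}}$ and $\Data_{\textnormal{test}}$ are two independent i.i.d.\ draws from the same joint law on $(\Bfx{}, e, c)$, and therefore induce the same marginal law on the observable $(t, \delta)$. Under the conditional independent censoring assumption $e_{i} \perp c_{i} \mid \Bfx{i}$ (together with the mild regularity needed so that the pooled censoring mechanism is non-informative for the marginal survival function), the classical uniform consistency of the product-limit estimator on a compact subinterval strictly inside the observable support yields a common deterministic limit $S^{*}(t) = \prob(e > t)$ such that
$$\sup_{t \in [0, t_{\max}]} \bigl| S_{\KM(\Data_{\textnormal{train}})}(t) - S^{*}(t) \bigr| \xrightarrow{\text{a.s.}} 0, \qquad \sup_{t \in [0, t_{\max}]} \bigl| S_{\KM(\Data_{\textnormal{test}})}(t) - S^{*}(t) \bigr| \xrightarrow{\text{a.s.}} 0.$$

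A triangle inequality on the two displays bounds the KM-cal integrand pointwise by twice the maximum of the two squared supremum deviations. Since the integrand is always in $[0,1]$, bounded convergence (or dominated convergence by the constant $1$) then drives the whole KM-cal integral to $0$ almost surely, which is exactly the asymptotic KM-calibration claim. The main obstacle I anticipate is a subtle assumption gap: the usual KM consistency theorem is stated under the marginal independent censoring condition $e \perp c$, whereas the paper only posits the conditional version $e \perp c \mid \Bfx{}$. I would handle this either by strengthening the hypothesis in the statement, or by proving a short lemma showing that when $(\Bfx{}, e, c)$ is exchangeable and conditional independence holds, the KM functional applied to the marginal $(t, \delta)$ distribution still identifies $\prob(e > t)$ on the observable support. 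Once that identifiability point is locked down, the rest of the argument (two-sided consistency, triangle inequality, bounded convergence) is completely standard.
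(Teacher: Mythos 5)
Your proposal is correct and follows essentially the same route as the paper's proof: both arguments boil down to the fact that, because the train and test samples are i.i.d.\ draws from the same law, the training-set and test-set Kaplan--Meier curves converge to one and the same marginal survival function, so the squared-difference integral vanishes asymptotically. Your version is somewhat more careful than the paper's -- you make explicit that the inner expectation collapses because the KM prediction is covariate-free, and you handle the integral properly via uniform consistency, the triangle inequality, and bounded convergence, whereas the paper simply chains $S_{\KM(\Data_{\text{train}})} \to \E_{\text{train}}S(t\mid\Bfx{}) \to \E_{\text{test}}S(t\mid\Bfx{}) \to S_{\KM(\Data_{\text{test}})}$ informally. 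The assumption gap you flag is genuine and applies to the paper's proof as well: the paper cites KM unbiasedness under \emph{conditional} independent censoring, but consistency of the marginal KM estimator for $\prob(e>t)$ generally requires censoring that is independent of the event time marginally (or at least independent of the covariates); under $e \perp c \mid \Bfx{}$ with covariate-dependent censoring the marginal KM can be biased, so the auxiliary lemma you sketch would not hold in full generality and strengthening the hypothesis is the safer fix -- notably, the paper itself switches to the stronger marginal-independence assumption in its companion result on D-calibration (Theorem~\ref{thm:km_is_d-cal}).
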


\begin{proof}
According to the \emph{exchangable} assumption, the samples $\{(\Bfx{i}, t_i, \delta_i)\}_{i=1}^N$ are drawn i.i.d. from an arbitrary joint distribution.
Let's assume each subject corresponds to a ground truth condition survival distribution $S(t\mid \Bfx{i})$. 
Then under the law of large numbers, the ground truth marginal survival distribution for the training set should converge to the marginal survival distribution for the testing set, \ie
\begin{align}
    \E_{\Bfx{i} \in \Data_{\text{train}}} S(t\mid \Bfx{i}) \quad \rightarrow \quad \E_{\Bfx{j} \in \Data_{\text{test}}} S(t\mid \Bfx{j}).
\end{align}

Furthermore, KM estimator is proven to be unbiased for estimating the marginal survival distribution, under the assumption of \emph{conditional independent censoring}~\cite{hosmer2008applied}.
That means
\begin{align}
    S_{\KM(\Data_{\text{train}})}(t\mid \Bfx{i}) \quad \rightarrow \quad
    \E_{\Bfx{i} \in \Data_{\text{train}}} S(t\mid \Bfx{i}) \quad \rightarrow \quad \E_{\Bfx{j} \in \Data_{\text{test}}} S(t\mid \Bfx{j})
    \quad \rightarrow \quad
    S_{\KM(\Data_{\text{test}})}(t\mid \Bfx{i}),
\end{align}
where the final step is the ground truth for KM-cal.
This completes the proof.
\end{proof}

\begin{theorem}
\label{thm:km_is_d-cal}
    The Kaplan-Meier estimation on the training set is asymptotically D-calibrated, under the (1) \emph{exchangeable} assumption, (2) \emph{independent censoring} assumption, and (3) Kaplan-Meier estimation is \emph{strictly} monotonic decreasing.
\end{theorem}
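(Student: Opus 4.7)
The strategy is to reduce D-calibration of the KM estimator to the probability integral transform (PIT), and then separately verify that the censoring correction in~\eqref{eq:d-cal_censored_size} is consistent with the same target. First I would invoke the consistency of the KM estimator: under exchangeability and independent censoring, $\hat{S}_{\KM(\Data_{\text{train}})}(t)$ converges to the true marginal survival $S(t)$ of the event time, pointwise (and uniformly on any compact subset where $S$ is continuous). Because KM produces the same prediction for every subject, the predicted probability at a test subject's event time is just $\hat{S}_{\KM}(e_i)$, which converges to $S(e_i)$. Assumption~(3) forces $S$ to be continuous, so by PIT $S(e) \sim \mathrm{Uniform}[0,1]$ for a fresh sample; if every test subject were uncensored, this alone would give $\prob(\SurvPred{e_i}{\Bfx{i}} \in [a,b]) \to b-a$ and a law-of-large-numbers pass would finish the job.

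The main technical step is to verify that the censored contributions in~\eqref{eq:d-cal_censored_size} integrate to the same target $b-a$. The cleanest parameterisation is $U := S(e_i)$ and $W := S(c_i)$; by \emph{independent censoring}, $U \perp W$ with $U \sim \mathrm{Uniform}[0,1]$, and $\delta_i = 1 \iff U \geq W$ since $S$ is strictly decreasing. Each test subject's expected contribution to $|\hat{\Data}(a,b)|/|\Data|$ decomposes into three terms: $\E[\mathbbm{1}_{U \geq W}\, \mathbbm{1}_{U \in [a,b]}]$ from uncensored subjects, plus $\E[\mathbbm{1}_{U < W}\,\mathbbm{1}_{W \in [a,b]}\, (W-a)/W]$ and $\E[\mathbbm{1}_{U < W}\, \mathbbm{1}_{W \geq b}\, (b-a)/W]$ from the two cases of the censored correction. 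Using $\prob(U<W \mid W) = W$ to cancel the $1/W$ denominators, plus integration by parts on the middle term, the three pieces telescope to exactly $b-a$ regardless of the marginal law of $W$. A Glivenko--Cantelli-type argument combined with the KM limit then delivers $|\hat{\Data}(a,b)|/|\Data| \to b-a$ almost surely, which is the definition of asymptotic D-calibration.

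The delicate part is the telescoping identity for the censored contributions: everything hinges on $\prob(U<W \mid W) = W$ exactly cancelling the $1/W$ factor introduced by~\eqref{eq:d-cal_censored_size}, whose role is precisely to spread each censored subject uniformly across the probability intervals below $\hat{S}(c_i)$. If one weakened the hypothesis to \emph{conditional} independent censoring, as in Theorem~\ref{thm:km_is_km-cal}, marginalising over $\Bfx{i}$ would introduce dependence between $U$ and $W$ and this cancellation would fail; this likely explains why the stronger assumption~(2) must appear in the present theorem.
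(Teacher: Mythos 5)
Your proposal is correct and follows essentially the same route as the paper: the paper's proof likewise rests on the uniformity of the predicted probability at the event time for a strictly decreasing survival function together with the censoring correction of~\eqref{eq:d-cal_censored_size} (citing Theorem~B.3 of \citet{haider2020effective} for that step), upgrades conditional to marginal \emph{independent censoring} precisely because the decomposition $\prob(U<W\mid W)=W$ must hold marginally for the group-level KM prediction, and then uses exchangeability/consistency to pass from the training-set KM to test subjects. You simply re-derive the cited lemma explicitly (your telescoping identity, which in fact needs only direct computation rather than integration by parts), and your closing remark about why conditional independence would not suffice matches the paper's own justification for assumption~(2).
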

\begin{proof}
Firstly, note that here we must use a different censoring assumption than in Theorem~\ref{thm:km_is_km-cal}.
For independent censoring assumption, we think the event and censoring times are independent, regardless of features, \ie $e_i \ \bot \ c_i$.
This is a stronger assumption, because $e_i \ \bot \ c_i$ can obviously lead to $e_i \ \bot \ c_i \mid \Bfx{i}$, but not the other way around.

\citet{haider2020effective} prove that the proportion $\frac{|\hat{\Data}(a, b)|}{|\Data|}$ is equal to the difference $b-a$ for the true ISD $S (t\mid \Bfx{i})$, under the assumption that the true ISD is strictly monotonic decreasing (see Theorem B.3 in~\citet{haider2020effective}).  
We can easily extend this proof to the KM estimation by applying the independent censoring assumption.
Specifically, independent censoring assumption is required for the probability decomposition step in~\citet{haider2020effective} proof (while the previous true ISD $S (t\mid \Bfx{i})$ only needs the less restrictive conditional independent censoring assumption).
Therefore, using these two assumptions, we can get that the KM model fitted on the testing set is D-calibrated.

To get to the last part (the KM model fitted to the training set is D-calibrated), we will need the \emph{exchangable} assumption, just like in Theorem~\ref{thm:km_is_km-cal}.
This completes the proof.
\end{proof}

\section{Proofs}
\subsection{Proof of Theorem 3.1}
\label{appendix:theorem3.1}

Before proving this theorem, we first recall some basic concepts in survival analysis.
Following the notations in Section~\ref{sec:background}, a trained survival model $\Model$ predicts the individual survival distribution (ISD) for subject $i$: $\hat{S}_{\Model} (t \mid \Bfx{i})$.
A predicted event time $\hat{t}_i$ can then be represented using this ISD by either median survival time: 
\begin{align}
    \hat{t}_{i}^{(\text{median})} \quad = \quad \inf \{t: \hat{S}_{\Model}(t \mid \Bfx{i}) \leq 0.5 \} \quad = \quad \hat{S}_{\Model}^{-1} (\rho = 0.5 \mid \Bfx{i}),
\end{align}
or using mean (expected) survival time:
\begin{align}
    \hat{t}_{i}^{(\text{mean})} \quad = \quad \E_t\left[\hat{S}_{\Model} (t \mid \Bfx{i})\right] \quad = \quad \int_0^{t_\text{max}} \hat{S}_{\Model} (t \mid \Bfx{i}) \, dt , \label{eq:mean_survival_time}
\end{align}
where we overuse the symbol $t_\text{max}$ to represent the time that ISD first reaches zero.

Recall Section~\ref{sec:background}, we need to discretize the ISD prediction for the conformal step using:
\begin{equation*}
    \hat{q}_{\Model} (\rho \mid \Bfx{i}) \quad = \quad \hat{S}_{\Model}^{-1} (\rho \mid \Bfx{i}),
\end{equation*}
then after the conformalization, we can recover the ISD prediction using the conformalized quantile prediction:
\begin{equation*}
    \hat{S}_{\Model}' (\rho \mid \Bfx{i}) \quad = \quad \hat{q}_{\Model}'^{-1} (\rho \mid \Bfx{i}).
\end{equation*}
Now we have everything we need for the proof. Here we formally restate Theorem~\ref{thm:c-index} in Section~\ref{sec:csd}.

\begin{theorem}
Applying the CSD adjustment to the percentile predictions does not impact the C-index of the model. Formally, $\forall \ i, j \in \Data$, for median survival times, we can have:
\begin{equation}
\label{eq:CSDprove_c-index_median}
    \text{given } \quad \hat{q}_{\Model} (\rho=0.5 \mid \Bfx{i}) > \hat{q}_{\Model} (\rho=0.5 \mid \Bfx{j}) \quad \Longrightarrow \quad \hat{q}_{\Model}' (\rho=0.5 \mid \Bfx{i}) > \hat{q}_{\Model}' (\rho=0.5 \mid \Bfx{j}),
\end{equation}
and for mean survival times, we have:
\begin{equation}
\label{eq:CSDprove_c-index_mean}
    \text{given } \quad  \int_{t = 0}^{t_{\text{max}}} \hat{S}_{\Model} (t \mid \Bfx{i})\ dt > \int_{t = 0}^{t_{\text{max}}} \hat{S}_{\Model} (t \mid \Bfx{j})\ dt \quad \Longrightarrow \quad \int_{t = 0}^{t_{\text{max}}} \hat{S}_{\Model}' (t \mid \Bfx{i}) \ dt > \int_{t = 0}^{t_{\text{max}}} \hat{S}_{\Model}' (t \mid \Bfx{j}) \ dt,
\end{equation}
with the assumption that $t_\text{max} < \infty$ is a finite number. 
\end{theorem}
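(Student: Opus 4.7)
The median case in \eqref{eq:CSDprove_c-index_median} is immediate: by the CSD update rule \eqref{eq:csd_adjust}, we have $\hat{q}_{\Model}'(0.5 \mid \Bfx{i}) = \hat{q}_{\Model}(0.5 \mid \Bfx{i}) - \Q[0.5; \mathcal{S}_{\Model}(0.5)]$, and the subtracted quantity depends only on the conformal set and the fixed level $\rho=0.5$, not on the feature vector $\Bfx{i}$. So CSD shifts every predicted median by the same constant, preserving the strict order between any pair of subjects.

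The mean case requires one additional step, namely expressing $\hat{t}_{i}^{(\text{mean})}$ in terms of the quantile function. Since $\hat{S}_{\Model}(t \mid \Bfx{i})$ is monotonically decreasing, I would apply the layer-cake / change-of-variables identity
\begin{equation*}
    \int_{0}^{t_{\text{max}}} \hat{S}_{\Model}(t \mid \Bfx{i})\, dt \;=\; \int_{0}^{1} \hat{q}_{\Model}(\rho \mid \Bfx{i})\, d\rho,
\end{equation*}
which is valid because $\hat{q}_{\Model}(\cdot \mid \Bfx{i})$ is exactly the generalized inverse of $\hat{S}_{\Model}(\cdot \mid \Bfx{i})$ and the finiteness assumption $t_{\text{max}} < \infty$ guarantees both sides are well-defined and finite. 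The same identity then applies to the conformalized model, giving $\int_{0}^{t_{\text{max}}} \hat{S}_{\Model}'(t \mid \Bfx{i})\, dt = \int_{0}^{1} \hat{q}_{\Model}'(\rho \mid \Bfx{i})\, d\rho$.

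Substituting \eqref{eq:csd_adjust} into the right-hand side yields
\begin{equation*}
    \int_{0}^{1} \hat{q}_{\Model}'(\rho \mid \Bfx{i})\, d\rho \;=\; \int_{0}^{1} \hat{q}_{\Model}(\rho \mid \Bfx{i})\, d\rho \;-\; \int_{0}^{1} \Q[\rho; \mathcal{S}_{\Model}(\rho)]\, d\rho,
\end{equation*}
and the second integral, call it $C$, is again independent of $\Bfx{i}$. Hence CSD subtracts the same constant $C$ from the mean survival time of every subject, and the strict inequality in the hypothesis of \eqref{eq:CSDprove_c-index_mean} is preserved. Because the censoring mechanism never enters the derivation (it only affects how $\mathcal{S}_{\Model}(\rho)$ is computed in Section~\ref{sec:CSD_censor}), the argument applies uniformly to the uncensored and censored variants.

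The only genuine subtlety I anticipate is the connection between the continuous adjustment used in this proof and the discretized implementation, which only computes $\hat{q}_{\Model}'$ at a finite grid of percentile levels and interpolates between them. I would handle this by noting that CSD is applied level-wise and the interpolation rule (whether piecewise-linear in $\rho$ or in $t$) is itself monotone and order-preserving under a common shift; the constant-offset argument above goes through on the interpolated curves as well, since the offset at each grid level is independent of $\Bfx{i}$ and any piecewise-linear interpolation of $\hat{q}_{\Model}(\cdot\mid\Bfx{i}) - c(\cdot)$, with $c$ shared across subjects, still yields integrals that differ by a subject-independent constant.
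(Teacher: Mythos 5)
Your proof is correct and follows essentially the same route as the paper: the paper likewise converts the mean survival time into $\int_0^1 \hat{q}_{\Model}(\rho \mid \Bfx{i})\, d\rho$ via the integral-of-inverse-function identity and exploits the fact that the correction $\Q\left[\rho; \mathcal{S}_{\Model}(\rho)\right]$ does not depend on $\Bfx{i}$, so CSD shifts every subject's median and mean by a common, subject-independent amount. The only cosmetic difference is that the paper phrases the order-preservation step through partial derivatives with respect to $\Bfx{i}$, whereas you argue the constant offset directly, which is if anything cleaner.
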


\begin{proof}
For the first part of the proof (median survival time), let's take the partial derivative of~\eqref{eq:csd_adjust} with respect to $\Bfx{i}$ when $\rho = 0.5$:
\begin{align*}
    \frac{\partial \hat{q}_{\Model}' (\rho = 0.5 \mid \Bfx{i})}{\partial \Bfx{i}} \quad &= \quad 
    \frac{\partial \hat{q}_{\Model} (\rho = 0.5 \mid \Bfx{i})}{\partial \Bfx{i}} - \frac{ \partial \ \Q \left[\, {\rho= 0.5; \mathcal{S}_{\Model}} (\rho= 0.5) \,\right]}{\partial \Bfx{i}} \\
    &= \quad \frac{\partial \hat{q}_{\Model} (\rho = 0.5 \mid \Bfx{i})}{\partial \Bfx{i}}.
\end{align*}
Because the partial derivative of the second term is 0, we can see that the pre-CSD prediction $\hat{q}_{\Model} (0.5 \mid \Bfx{i})$ and post-CSD prediction $\hat{q}_{\Model}' (0.5 \mid \Bfx{i})$ have the same partial derivative. That means that the tangent lines at corresponding points should have the same slopes, or, put another way, their graph should go up and down in the same way when $\Bfx{i}$ changes. This proves~\eqref{eq:CSDprove_c-index_median}.

For the second part of the proof (mean survival time), we first use the predicted percentile times (PCT) to represent the mean survival time. Specifically, based on the integral of inverse function theorem, we can have:
\begin{align}
    \int_{t=0}^{t_{\text{max}}} \hat{S}_{\Model} (t \mid \Bfx{i})\ dt  + 
    \int_{\rho=\hat{S}_{\Model} (0 \mid \Bfx{i})}^{\hat{S}_{\Model} (t_{\text{max}} \mid \Bfx{i})} \hat{S}_{\Model}^{-1} (\rho \mid \Bfx{i})\ d\rho \quad &= \quad t_{\text{max}} \cdot \hat{S}_{\Model} (t_{\text{max}} \mid \Bfx{i}) - 0 \cdot \hat{S}_{\Model} (0 \mid \Bfx{i}) \notag \\
     \Longrightarrow \qquad\qquad\qquad \hat{t}_{i}^{(\text{mean})} + \int_{\rho = 1}^{\rho = 0} \hat{q}_{\Model} (\rho \mid \Bfx{i})\ d\rho \quad &= \quad t_{\text{max}} \cdot 0 - 0 \cdot 1 \notag\\
     \Longrightarrow \qquad\qquad\qquad\qquad\qquad\qquad\qquad\qquad \hat{t}_{i}^{(\text{mean})} \quad &= \quad \int_{\rho = 0}^{1} \hat{q}_{\Model} (\rho \mid \Bfx{i})\ d\rho \, . \label{eq:meantime_PCT}
\end{align}
Correspondingly, we can have the post-CSD mean survival time as
\begin{align}
    \hat{t}_{i}^{' \, (\text{mean})} \quad &= \quad \int_{\rho = 0}^{1} \hat{q}_{\Model}' (\rho \mid \Bfx{i})\ d\rho 
 \notag \\
    &= \quad \int_{\rho = 0}^{1}  \hat{q}_{\Model} (\rho \mid \Bfx{i}) - \Q [\, {\rho; \mathcal{S}_{\Model}} (\rho) \,] \ d\rho \, . \label{eq:meantime_PCT_post}
\end{align}
Therefore, by taking the partial derivative to both~\eqref{eq:meantime_PCT} and~\eqref{eq:meantime_PCT_post} with respect to $\Bfx{i}$ again (follow the same procedure above), 
we can easily find that pre- and post-CSD mean survival time have the same partial derivative.
Therefore, we can have~\eqref{eq:CSDprove_c-index_mean} proved.
\end{proof}

It's important to clarify a potential misunderstanding related to this theorem: there might be confusion that the CSD post-process does not alter the ordering of survival probability predictions. 
However, this is a misconception. 
The CSD modifies survival curves along the horizontal axis, specifically the time dimension. 
Consequently, this adjustment can lead to changes in the vertical sequence of survival probabilities. 
It is crucial to note that the theorem's claims are specifically concerning median and mean survival times, not the ordering of survival probabilities.

\paragraph{Remark} We now discuss about the reasonableness of the assumption that $t_\text{max} < \infty$ is a finite number.
We argue that it is a reasonable assumption in health-care related tasks, as the human lifespan is known to be a finite number (only cancer cells are immortal).

\subsection{Proof of Theorem 3.2}
\label{appendix:theorem3.2}

\begin{theorem}
Under the exchangeable assumption, the percentile predictions constructed by CSD will exhibit exact distribution calibration, s.t., $\forall \, \rho \in [0, 1]$, and $i \in \Data_{\text{test}}$, we can have
\begin{equation}
    \rho \quad \leq \quad \prob(t_i \in [\, \hat{q}_{\Model}' (\rho \mid \Bfx{i}), \infty \,] \mid \Bfx{i}) \quad \leq \quad \rho + \frac{1}{|\Data_{\textnormal{con}}| + 1}.
\end{equation}
\end{theorem}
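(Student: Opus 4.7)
The plan is to translate the miscoverage event into a statement about the rank of the test point's conformity score among the combined set of conformity scores, and then exploit exchangeability to conclude that this rank is uniformly distributed. Concretely, for a test point $i$ and a fixed percentile $\rho$, the condition $t_i \in [\hat{q}_{\Model}'(\rho \mid \Bfx{i}), \infty]$ is equivalent by the definition \eqref{eq:csd_adjust} to
\begin{equation*}
\hat{q}_{\Model}(\rho \mid \Bfx{i}) - t_i \;\leq\; \Q\!\left[\rho;\mathcal{S}_{\Model}(\rho)\right],
\end{equation*}
which is exactly $s_{i,\Model}(\rho)\le \Q[\rho;\mathcal{S}_{\Model}(\rho)]$ using the conformity-score definition \eqref{eq:conformal_score_uncensored}. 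So the question reduces to: how often does the test score fall below the empirical quantile of the conformal scores?

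Next I would invoke exchangeability. Since $(\Bfx{j},t_j,\delta_j)$ for $j\in\Data_{\text{con}}\cup\{i\}$ are exchangeable and $s_{j,\Model}(\rho)$ is a fixed function of $(\Bfx{j},t_j)$ together with the trained model $\Model$ (which was fit on $\Data_{\text{train}}$, a disjoint split), the augmented collection $\mathcal{S}_{\Model}(\rho)\cup\{s_{i,\Model}(\rho)\}$ is itself exchangeable. Assuming ties occur with probability zero (or breaking them at random), the rank of $s_{i,\Model}(\rho)$ inside this set of $n+1$ values, where $n=|\Data_{\text{con}}|$, is uniform on $\{1,\dots,n+1\}$.

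Now I would connect the rank to the empirical quantile. By construction, $\Q[\rho;\mathcal{S}_{\Model}(\rho)]$ is the $\lceil\rho(n+1)\rceil$-th smallest value of $\mathcal{S}_{\Model}(\rho)$, so $s_{i,\Model}(\rho)\le \Q[\rho;\mathcal{S}_{\Model}(\rho)]$ holds precisely when the rank of $s_{i,\Model}(\rho)$ in the augmented set is at most $\lceil\rho(n+1)\rceil$. By the uniform-rank statement, this probability equals $\lceil\rho(n+1)\rceil/(n+1)$, and the elementary ceiling bounds $\rho(n+1)\le\lceil\rho(n+1)\rceil<\rho(n+1)+1$ immediately yield
\begin{equation*}
\rho \;\le\; \frac{\lceil\rho(n+1)\rceil}{n+1} \;<\; \rho + \frac{1}{n+1},
\end{equation*}
which is the claimed two-sided bound after conditioning on $\Bfx{i}$ (exchangeability is preserved because the training set is held out).

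I expect the only subtle point to be the handling of ties and the assumption that the test point is exchangeable with the conformal set even after conditioning on $\Bfx{i}$; a standard randomized tie-breaking argument (or assuming the scores have a continuous distribution) resolves the former, while the data-splitting step in Section~\ref{sec:csd_overview}, which makes $\Model$ independent of $\Data_{\text{con}}\cup\{i\}$, resolves the latter. Everything else is a direct translation of the classical split-conformal coverage argument~\cite{romano2019conformalized, angelopoulos2023conformal} to the survival-percentile setting.
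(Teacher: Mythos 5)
Your proposal is correct and follows essentially the same route as the paper's own proof in Appendix~C.2: rewrite the coverage event as $s_{i,\Model}(\rho)\le \Q[\rho;\mathcal{S}_\Model(\rho)]$, use exchangeability of the augmented score set to get a uniform rank over the $|\Data_{\text{con}}|+1$ positions, and bound $\lceil\rho(|\Data_{\text{con}}|+1)\rceil/(|\Data_{\text{con}}|+1)$ by the ceiling inequalities (the paper phrases this via a residual $1-\epsilon$, you via the elementary ceiling bounds, with the same tie-breaking caveat). One small remark: like the paper, your argument really establishes marginal coverage over the test draw rather than coverage conditional on the specific value of $\Bfx{i}$, so the parenthetical claim that conditioning on $\Bfx{i}$ is harmless is no stronger (and no weaker) than what the paper itself asserts.
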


\begin{proof}
This proof is a standard proof in conformal regression~\cite{romano2019conformalized, angelopoulos2023conformal}, with minor modifications to fit the context of survival analysis. Here for completeness, we will present the complete proof.

Let $s_{i, \Model} (\rho)$ be the conformity score~\eqref{eq:conformal_score_uncensored} for a test point $i$ and percentile level $\rho$. By the construction of the prediction interval $[\hat{q}_\Model' (\rho \mid \Bfx{i}), \infty]$, we have
\begin{align*}
    t_i \in [\, \hat{q}_\Model' (\rho \mid \Bfx{i}), \infty \,] \quad
    &\Longleftrightarrow  \quad t_i \ \geq \ \hat{q}_\Model' (\rho \mid \Bfx{i}) \ = \ \hat{q}_\Model (\rho \mid \Bfx{i}) - \Q[\, \rho; \mathcal{S}_\Model (\rho) = \{\, \hat{q}_\Model(\rho \mid \Bfx{i}) - t_j \,\}_{j=1}^{|\Data_{\text{con}}|} \,] \\
    &\Longleftrightarrow \quad  \Q[\, \rho; \mathcal{S}_\Model (\rho) \,] \ \geq \ \hat{q}_\Model (\rho \mid \Bfx{i}) - t_i \\
    &\Longleftrightarrow \quad \Q[\, \rho; \mathcal{S}_\Model (\rho) \, ] \ \geq \ s_{i, \Model} (\rho) ,
\end{align*}
therefore, by taking the expectation over the first and last term, we have
\begin{align}
\label{eq:conformal_lowerbound_half}
    \prob(t_i \in [\, \hat{q}_{\Model}' (\rho \mid \Bfx{i}), \infty \,] \mid (\Bfx{i}, t_i, \delta_i) \in \Data_{\text{test}}) \quad
    = \quad \prob (s_{i, \Model} (\rho) \ \leq \ \Q [\, \rho; \mathcal{S}_\Model (\rho) \,] \mid (\Bfx{i}, t_i, \delta_i) \in \Data_{\text{test}}).
\end{align}
Without loss of generality, we assume that the conformity scores for the conformal set are sorted in $\mathcal{S}_\Model (\rho)$, so that $s_1 < s_2 < \cdots < s_{|\Data_{\text{con}}|}$.
Therefore, for a new subject $i$ in $\Data_{\text{test}}$, its conformity score has the same probability of falling into any of the following intervals, $[-\infty, s_1], [s_1, s_2], \cdots, [s_{|\Data_{\text{con}}|}, \infty]$.
As we have $|\Data_{\text{con}}|$ conformity scores in the set, so we have $|\Data_{\text{con}}| + 1$ intervals. We conclude that for subject $i$ in the testset, we can have
\begin{equation*}
    \prob \left(s_{i, \Model} \ \leq \ \Q[\, \rho; \mathcal{S}_\Model (\rho) \, ] \right) \quad = \quad \frac{\lceil \rho (|\Data_{\text{con}}| + 1)\rceil}{|\Data_{\text{con}}| + 1} \quad = \quad \frac{\rho (|\Data_{\text{con}}| + 1) + (1-\epsilon)}{|\Data_{\text{con}}| + 1}.
\end{equation*}
The last equality is by converting the ceiling function to the original value plus a residual ($1-\epsilon$), where
$\epsilon$ is an infinitesimally small positive number,\footnote{This also requires that $0 \leq (1 - \epsilon) < 1$.} ensuring that the ceiling function just rounds up to the next integer.\footnote{This transformation is only achievable under the assumption of conformity score being continuous, avoiding ties. However, as pointed out by~\citet{angelopoulos2023conformal}, this condition can be solved by adding a small perturbation to the score.} By combining this equation with~\eqref{eq:conformal_lowerbound_half}, we have proved the theorem.
\end{proof}

\subsection{Proof of Lemma 3.3}
\label{appendix:theorem3.3}

\begin{lemma}
Under the exchangeable assumption, the percentile predictions constructed by CSD will asymptotically exhibit exact integrated calibration at all time points, which means that the prediction is KM-calibrated. 
\end{lemma}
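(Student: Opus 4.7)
The plan is to follow the two-step strategy that the proof sketch indicates: first establish an asymptotic equivalence between D-cal and KM-cal for any ISD predictor, and then combine that equivalence with Theorem~\ref{thm:d-cal} (which guarantees CSD is asymptotically D-calibrated) to obtain KM-calibration as a corollary.

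For the equivalence step, I would lift both metrics to their population versions. Let $S(t) := \prob(T > t)$ denote the marginal survival on the test distribution, which $S_{\KM(\Data_{\text{test}})}$ consistently estimates under conditional independent censoring, and let $H(t) := \E_{\Bfx{i}}[\hat{S}_{\Model}'(t\mid\Bfx{i})]$ denote the mean post-CSD ISD. Then KM-cal is asymptotically $\frac{1}{t_{\max}}\int_0^{t_{\max}}(S(t)-H(t))^2\,dt$. A layer-cake identity rewrites the mean ISD as
\begin{equation*}
    H(t)\ =\ \int_0^1 \prob_{\Bfx{i}}\!\left(\hat{q}_{\Model}'(\rho\mid\Bfx{i}) > t\right)\, d\rho,
\end{equation*}
while the substitution $U := \hat{S}_{\Model}'(T\mid\Bfx{i})$, so that $T = \hat{q}_{\Model}'(U\mid\Bfx{i})$, gives $S(t) = \prob(\hat{q}_{\Model}'(U\mid\Bfx{i}) > t)$. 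I would then argue that whenever $U$ is (asymptotically) Uniform$[0,1]$ in the marginal sense provided by D-cal, these two expressions agree pointwise in $t$ by a Fubini-style factorization, so $H \to S$ on $[0,t_{\max}]$.

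To close the argument, Theorem~\ref{thm:d-cal} provides exactly this marginal uniformity, up to an $O(1/|\Data_{\text{con}}|)$ slack, since CSD enforces $\rho \leq \prob(T \geq \hat{q}_{\Model}'(\rho\mid\Bfx{i})) \leq \rho + 1/(|\Data_{\text{con}}|+1)$ for every $\rho$. Combined with the pointwise consistency $S_{\KM(\Data_{\text{test}})} \to S$ and dominated convergence on the compact interval $[0,t_{\max}]$, this forces the integrated squared error to vanish, which is by definition KM-calibration.

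I expect the main obstacle to lie in the equivalence step, because marginal uniformity of $U$ does not by itself permit swapping the joint $(U,\Bfx{i})$ probability with the product integral over $\rho$ and $\Bfx{i}$ unless some form of asymptotic independence between $U$ and $\Bfx{i}$ is established. Making this rigorous should exploit the fact that CSD applies a single, $\Bfx{i}$-independent percentile shift $\Q[\rho;\mathcal{S}_{\Model}(\rho)]$ uniformly across subjects, so that the residual dependence of $U$ on $\Bfx{i}$ is controlled by the same empirical quantile concentration that underlies Theorem~\ref{thm:d-cal}. The remaining ingredients---consistency of the Kaplan-Meier estimator under conditional independent censoring, and dominated convergence on the compact time interval---should be standard.
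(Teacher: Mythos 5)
Your two-step architecture (prove D-cal and KM-cal asymptotically equivalent, then invoke Theorem~\ref{thm:d-cal}) is the same as the paper's, but the equivalence step as you propose it has a genuine gap, and it is precisely the one you flag yourself. The layer-cake identity gives $\E_{\Bfx{i}}[\hat{S}'_{\Model}(t\mid\Bfx{i})] = \int_0^1 \prob_{\Bfx{i}}(\hat{q}_{\Model}'(\rho\mid\Bfx{i})>t)\,d\rho$, while $S(t) = \prob(\hat{q}_{\Model}'(U\mid\Bfx{i})>t)$ with $U=\hat{S}'_{\Model}(T\mid\Bfx{i})$; equating the two requires $U$ to be (asymptotically) \emph{independent} of $\Bfx{i}$, not merely marginally uniform. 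Your suggested fix --- that the residual dependence of $U$ on $\Bfx{i}$ is controlled because the shift $\Q[\rho;\mathcal{S}_{\Model}(\rho)]$ is the same for every subject --- does not work: a subject-independent shift is exactly what yields the marginal coverage of Theorem~\ref{thm:d-cal} and nothing more, while the joint dependence between $U$ and $\Bfx{i}$ is inherited from the base model's \emph{conditional} miscalibration, which CSD leaves untouched. Concretely, if $T$ is a deterministic function of $\Bfx{i}$ and the ISDs are arranged so that $\hat{S}'_{\Model}(T\mid\Bfx{i})$ is a fixed uniform transform of $\Bfx{i}$, the prediction is exactly D-calibrated yet $\E_{\Bfx{i}}[\hat{S}'_{\Model}(t\mid\Bfx{i})]$ need not equal $S(t)$, so no amount of empirical-quantile concentration closes this step. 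The paper's own proof meets the same obstruction in a different guise: after changing variables from percentiles to the subject-specific percentile times $\hat{t}_{i,l}=\hat{S}^{-1}(\rho_l\mid\Bfx{i})$, it needs the set $\{\hat{t}_{i,l}\}$ to asymptotically coincide with a common time grid, and it explicitly concedes this fails when the ISDs are diverse. The honest resolution is therefore to state the required condition (asymptotic independence of $U$ and $\Bfx{i}$, equivalently near-homogeneity of the percentile-time grids) as an explicit assumption, rather than claim it follows from Theorem~\ref{thm:d-cal}.

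A second omission: the lemma must hold for right-censored data, and there the D-cal statistic is computed with the censored-subject splitting rule of~\eqref{eq:d-cal_censored_size}. The paper devotes the second half of its proof to showing, using the conditional independent censoring assumption, that the expectation of this blended proportion equals $\prob(e_i \geq \hat{t}_{i,l})$, which is what allows it to be matched against the Kaplan--Meier estimand. Your proposal invokes consistency of the Kaplan--Meier estimator but never connects the censoring-adjusted D-cal quantity to the marginal survival, so this part of the argument is missing as well.
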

\begin{proof}
We derive this by first building a connection between D-cal and KM-cal. We prove that D-cal and KM-cal are asymptotically equal. Then, under Theorem~\ref{thm:d-cal}, CSD will asymptotically be KM-calibrated.

\textbf{Let's first consider the uncensored cases, and prove the lemma in such circumstances.}

For the KM-cal with uncensored dataset, without the loss of generality, we can convert the integral from time 0 to maximum time to the summation over the discretized point $\{t_k\}_{k=1}^{k=K}$, where $t_K = t_{\text{max}}.$\footnote{Here we overuse the subscript $k$ to represent the index of the discretized time points, where in Section~\ref{sec:background} it represents the index for ordered group for 1-cal calculation.}
Then the KM-cal calculation is converted to: 
\begin{align}
    \mathcal{R}_{\text{KM-cal}}(\Data) \quad 
    &= \quad \frac{1}{t_{\text{max}}} \int_{t=0}^{t_{\text{max}}} \left(S_{\text{KM} (\Data)}(t) - \E_{\Bfx{i}} \left[\hat{S} (t \mid\Bfx{i}) \right] \right)^2  dt  \notag \\
    &= \quad \frac{1}{t_{K}} \sum_{k=1}^K \left( \frac{1}{N} \sum_{i=1}^N \mathbbm{1} [e_i > t_k] - \frac{1}{N} \sum_{i=1}^N \hat{S}(t=t_k \mid x_i)  \right)^2 \tag{No censorship} \\
    &= \quad \frac{1}{N^2 t_{K}} \sum_{k=1}^K \left(\sum_{i=1}^N \left( \mathbbm{1} [e_i > t_k] - \hat{S}(t=t_k \mid x_i)\right)\right)^2. \label{eq:integrated_1cal}
\end{align}

Same for D-cal, given a set of percentiles $\mathcal{P} = \{\rho_1, \rho_2, \ldots, \rho_L\}$ with $0 < \rho_1 < \cdots < \rho_l < \cdots < \rho_L \leq 1 $, D-cal is calculated by
\begin{align}
    \mathcal{R}_\text{D-cal} (\Data) \quad
    &= \quad\frac{1}{L} \sum_{l=1}^L \left( \frac{|\hat{\Data}(0, \rho_l)|}{N} - \rho_l \right)^2 \notag \\
    &= \quad\frac{1}{N^2L} \sum_{l=1}^L \left( |\hat{\Data}(0, \rho_l)| - N \cdot \rho_l \right)^2 \notag \\
    &= \quad\frac{1}{N^2L} \sum_{l=1}^L \left( \sum_{i=1}^N \left( \mathbbm{1}[\hat{S}(e_i \mid \Bfx{i}) < \rho_l]  - \rho_l \right)\right)^2. \label{eq:d-cal_reformat}
\end{align}

Now, if we compare \eqref{eq:d-cal_reformat} with \eqref{eq:integrated_1cal}, we can find that D-cal calculates over the percentile domain while KM-Cal does the calculation over the time domain. To make them talk using the same language, we first need to relate the percentiles with times. 

\begin{figure}[h]
    \centering
    \includegraphics[width=0.5\textwidth]{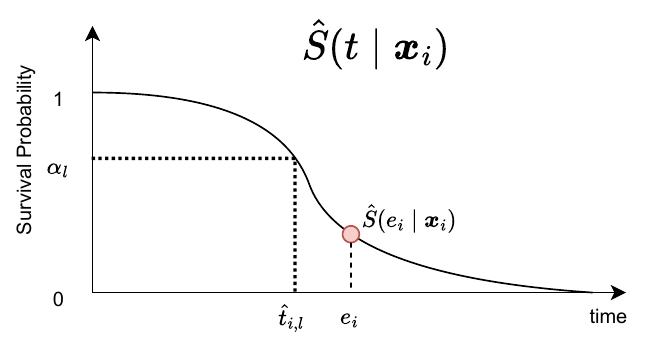}
    \caption{Relationship between $\hat{t}_{i, l}$, $\rho_l$, and $\hat{S} (t \mid \Bfx{i})$.}
    \label{fig:link_time_w_quantiles}
\end{figure}

Figure~\ref{fig:link_time_w_quantiles} shows the relationship between a percentile value and a time point. 
The curve represents the predicted ISD curve, $\hat{S}(t \mid \Bfx{i})$. 
$\rho_l = \hat{S} (t = \hat{t}_{i, l} \mid \Bfx{i})$ and its reverse, 
$\hat{t}_{i, l} = \hat{S}^{-1} (\rho= \rho_l \mid \Bfx{i})$. 
Based on this expression, we can have this proposition.
\begin{proposition}
\label{prop:transformation}
    For a strictly monotonic decreasing (no plateau area) ISD $\hat{S}(t \mid \Bfx{i})$,
    if the predicted probability at the event time, s.t. $\hat{S}(t=e_i \mid \Bfx{i})$ is less than $\rho_l$, then the event time $e_i$ must be greater than the corresponding percentile time $t_{i, l}$, and vice versa. Formally:
    \begin{equation*}
        \hat{S}(t=e_i \mid \Bfx{i}) < \rho_l  \quad \Longleftrightarrow \quad e_i > \hat{t}_{i, l} .
    \end{equation*}
\end{proposition}
Therefore, following this proposition, we can reformulate \eqref{eq:d-cal_reformat} to:
\begin{align}
\label{eq:d-cal_reformat2}
    \mathcal{R}_\text{D-cal} (\Data) 
    \quad =\quad \frac{1}{N^2L} \sum_{l=1}^{L}\left(\sum_{i=1}^N \left(\mathbbm{1} [\hat{t}_{i, l} < e_i] - \hat{S} (t=\hat{t}_{i, l} \mid \Bfx{i})  \right)\right)^2, 
\end{align}

Now, if we compare~\eqref{eq:integrated_1cal} with~\eqref{eq:d-cal_reformat2}, the only difference is that KM-Cal uses the discrete time indexed by $k$, and D-cal uses the time indexed by $i$ and $l$. 
However, if the two sets $\{t_k\}_{k=1}^{K}$  
and $\{\hat{t}_{i, l}\}_{i=1, l=1}^{N, L}$ are asymptotically equal ($K, N, L \rightarrow \infty$), then the integrated KM-cal and D-cal should also be asymptotically equal. 

This condition is not always satisfied, for population survival distribution models (\eg, Kaplan-Meier, Nelson-Aalen, or models that estimated one survival distribution for the entire population), the set $\{\hat{t}_{i, l}\}_{i=1, l=1}^{N, L}$ will just degenerate to $\{\hat{t}_{l}\}_{l=1}^{L}$, then we can easily align $\{\hat{t}_{l}\}_{l=1}^{L}$ with $\{t_k\}_{k=1}^{K}$.

However, for the individual survival distribution models, we cannot have $\{\hat{t}_{i, l}\}_{i=1, l=1}^{N, L}$ coincide with $\{t_k\}_{k=1}^{K}$ for almost all cases; this is mostly because the predicted curves can be a lot different across subjects, so we would end up with a very diverse $\hat{t}_{i, l}$. 
That being said, the more diverse the ISDs are, the greater the divergence between the D-cal and KM-cal.
This phenomenon is typical in statistical models, e.g., Cox proportional hazard (and its extensions, GB, DeepSurv) and AFT models are inherently (semi-)parametric, which here means that their ISD curves will have similar shapes. However, other models, \eg N-MTLR and CQRNN, produce ISD curves that can have very different shapes, therefore, larger divergence. 
But still, the above discussion provides a helpful prospective to connect the two important calibration methods for survival analysis.
\end{proof}

\paragraph{Remark}
This proof is not in conflict with the findings presented by~\citet{haider2020effective}, where they demonstrated that a model could achieve optimal D-cal yet exhibit subpar performance in terms of 1-Cal, and vice versa. 
However, it's important to note the distinction in the context of calibration being discussed: 
\citet{haider2020effective}'s argument pertains to 1-Cal at a specific point in time, whereas our proof addresses the concept of KM-cal, which considers 1-cal performance over a continuum of time points. 
This differentiation is akin to how a model might excel in terms of the Integrated Brier Score (IBS) across multiple time points but may still underperform when evaluated using the Brier Score (BS) at any given single time point.

\textbf{Let's now expand this proof to censored cases.}

\begin{proof}

For censored datasets, we rewrite the KM-cal as
\begin{align*}
    \mathcal{R}_{\text{KM-cal}}(\Data) \quad
    &= \quad\frac{1}{t_{\text{max}}} \int_{t=0}^{t_{\text{max}}} \left(S_{\text{KM} (\Data)}(t) - \E_{\Bfx{i}} \left[\hat{S} (t \mid\Bfx{i}) \right] \right)^2  dt  \notag \\
    &= \quad\frac{1}{N^2 t_{K}} \sum_{k=1}^K \left(\sum_{i=1}^N \left( S_{\text{KM} (\Data)}(t_k)- \hat{S}(t=t_k \mid x_i)\right)\right)^2,
\end{align*}
and D-cal as:
\begin{align*}
    \mathcal{R}_\text{D-cal} (\Data) \quad
    &= \quad \frac{1}{L} \sum_{l=1}^L \left( \frac{|\hat{\Data}(0, \rho_l)|}{N} - \rho_l \right)^2 \\
    &= \quad \frac{1}{N^2L} \sum_{l=1}^L \left( \sum_{i=1}^N \left( \frac{|\hat{\Data}(0, \rho_l)|}{N} - \rho_l \right)\right)^2,
\end{align*}
where the proportion of subset is
\begin{equation}
\label{eq:no_name}
    \frac{|\hat{\Data}(0, \rho_l)|}{N} \quad = \quad \frac{1}{N} \sum_{i= 1}^N \delta_i \cdot  \mathbbm{1}\left[\SurvPred{t_i}{\Bfx{i}} < \rho_l \right] 
    + (1 -\delta_i) \left(\mathbbm{1}\left[\SurvPred{t_i}{\Bfx{i}} <  \rho_l\right]
    + \frac{\rho_l \cdot \mathbbm{1}[\SurvPred{t_i}{\Bfx{i}} \geq \rho_l]}{\SurvPred{t_i}{\Bfx{i}}} \right).
\end{equation}
As previously demonstrated in the uncensored cases, the second terms of both equations, $\hat{S} (t = t_k \mid \Bfx{i})$ and $\rho_i$ converge asymptotically.
Now we only need to establish that the first terms, $S_{\text{KM} (\Data)} (t_k)$ and $\frac{|\hat{\Data}(0, \rho_l)|}{N}$, are also asymptotically equivalent.

Using the expression of~\eqref{eq:d-cal_censored_size}.
We can take the expectation of~\eqref{eq:no_name},
\begin{align*}
    \E_i \left[\frac{|\hat{\Data}(0, \rho_l)|}{N}\right ] \quad
    &= \quad \E_i \left[\mathbbm{1} \left[\SurvPred{e_i}{\Bfx{i}} < \rho_l \right] \cdot \mathbbm{1}\left[ e_i \leq c_i \right]\right]
    \ + \ \E_i \left[\mathbbm{1} \left[\SurvPred{c_i}{\Bfx{i}} < \rho_l \right] \cdot \mathbbm{1} \left[ c_i < e_i \right]\right] \\
    & \quad \quad +\E_i \left[ \frac{\rho_l}{\SurvPred{c_i}{\Bfx{i}}}  \mathbbm{1} \left[\SurvPred{c_i}{\Bfx{i}} \geq \rho_l \right] \cdot \mathbbm{1} \left[ c_i < e_i \right]   \right] \\
    &= \quad \prob \left(\SurvPred{e_i}{\Bfx{i}} < \rho_l \ , \ e_i \leq c_i \right) 
    \ + \ \prob \left(\SurvPred{c_i}{\Bfx{i}} < \rho_l \ , \ c_i < e_i \right) \\
    & \quad \quad + \E_i \left[ \frac{\rho_l}{\SurvPred{c_i}{\Bfx{i}}}  \mathbbm{1} \left[\SurvPred{c_i}{\Bfx{i}} \geq \rho_l \right] \cdot \mathbbm{1} \left[ c_i < e_i \right]   \right],
\end{align*}
using the transformation in Proposition~\ref{prop:transformation}, we can have
\begin{align}
\label{eq:exp_dcal}
    \E_i \left[\frac{|\hat{\Data}(0, \rho_l)|}{N}\right ] \quad
    &=  \quad \prob \left(e_i > \hat{t}_{i, l} \ , \ e_i \leq c_i \right) 
    \ + \ \prob \left(c_i > \hat{t}_{i, l} \ , \ c_i <e_i \right) \ + \ \E_i \left[ \frac{\rho_l}{\SurvPred{c_i}{\Bfx{i}}}  \mathbbm{1} [c_i \leq \hat{t}_{i,l} ] \cdot \mathbbm{1} \left[ c_i < e_i \right]   \right].
\end{align}

For the first two terms in the above derivation:
\begin{align*}
    \prob \left(e_i > \hat{t}_{i, l} \ , \ e_i \leq c_i \right) 
    + \prob \left(c_i > \hat{t}_{i, l} \ , \ c_i <e_i \right) \quad
    &= \quad \prob \left(e_i > \hat{t}_{i, l} \ , \ e_i \leq c_i \right) 
    + \prob \left(c_i> \hat{t}_{i, l}\ , \ e_i > \hat{t}_{i, l} \ , \ c_i <e_i \right) \\
    &= \quad \prob \left(e_i > \hat{t}_{i, l} \ , \ c_i > \hat{t}_{i, l} \right)\\
    &= \quad \prob(e_i > \hat{t}_{i, l}) \cdot \prob(c_i  > \hat{t}_{i, l}),
\end{align*}
where the probability decomposition is because of the conditional independent censoring assumption $e_i \ \bot \ c_i \mid \Bfx{i}$.
Then, for the third term:
\begin{align*}
    \E_i \left[ \frac{\rho_l}{\SurvPred{c_i}{\Bfx{i}}}  \mathbbm{1} [c_i \leq \hat{t}_{i,l} ] \cdot \mathbbm{1} \left[ c_i < e_i \right]   \right] \quad
    &= \quad \prob(\hat{S}(e_i \mid \Bfx{i}) \leq \rho_l)  \int_{c_i = 0}^{\hat{t}_{i, l}} \int_{e_i=c_i}^{\infty} \frac{1}{\SurvPred{c_i}{\Bfx{i}}} f(c_i, e_i \mid \Bfx{i}) \ de_i \ dc_i\\
    &= \quad \prob(e_i \geq \hat{t}_{i,l})  \int_{c_i = 0}^{\hat{t}_{i, l}} \int_{e_i=c_i}^{\infty} \frac{1}{\SurvPred{c_i}{\Bfx{i}}} f(c_i, e_i \mid \Bfx{i}) \ de_i \ dc_i\\
    &= \quad \prob(e_i \geq \hat{t}_{i,l})  \int_{c_i = 0}^{\hat{t}_{i, l}} \frac{1}{\SurvPred{c_i}{\Bfx{i}}} f(c_i \mid \Bfx{i}) \int_{e_i=c_i}^{\infty}  f(e_i \mid \Bfx{i}) \ de_i \ dc_i\\
    & = \quad \prob(e_i \geq \hat{t}_{i,l})  \int_{c_i = 0}^{\hat{t}_{i, l}} \frac{f(c_i \mid \Bfx{i})}{\SurvPred{c_i}{\Bfx{i}}} \SurvPred{c_i}{\Bfx{i}}  \ dc_i \\
    & = \quad \prob(e_i \geq \hat{t}_{i,l}) \int_{c_i = 0}^{\hat{t}_{i, l}} f(c_i \mid \Bfx{i}) \ dc_i \\
    &= \quad \prob(e_i \geq \hat{t}_{i,l}) \cdot \prob (c_i \leq \hat{t}_{i, l})
\end{align*}
Therefore, by combining the above two derivations to~\eqref{eq:exp_dcal}: 
\begin{align*}
    \E_i \left[ \frac{|\hat{\Data}(0, \rho_l)|}{N} \right] \quad = \quad \prob(e_i \geq \hat{t}_{i,l}) \cdot \prob(c_i  > \hat{t}_{i, l}) \ + \ \prob(e_i \geq \hat{t}_{i,l}) \cdot  \prob (c_i \leq \hat{t}_{i, l}) \quad = \quad \prob(e_i \geq \hat{t}_{i,l}).
\end{align*}
We know that the KM estimator is simply an empirical estimator of $\prob(e_i \geq t_k)$.
That implies that if the two sets $\{t_k\}_{k=1}^K$ and $\{\hat{t}_{i,l}\}_{i=1, l=1}^{N,L}$ become asymptotically equivalent as $K$, $N$, and $L$ approach infinity, then we can have $S_{\KM (\Data)} (t_k)= \E \left[\frac{|\hat{\Data}(0, \rho_l)|}{N}\right]$. 
Consequently, the KM-cal and D-cal are asympotically equal. This completes the proof.
\end{proof}
\section{Algorithm}

\subsection{Decensoring method}
\label{appendix:decensor}

\paragraph{Margin}
Margin~\cite{haider2020effective} assigns a ``best guess'' value (margin time) to each censored subject using the non-parametric population KM~\cite{kaplan1958nonparametric} estimator.
This margin time can be interpreted as a conditional expectation of 
the event time given the 
event time is greater than the censoring time~\cite{haider2020effective}. 
Given a subject censored at time $t_i$, we can calculate its margin time by:
\begin{equation*}
    e_{\text{margin}}(t_i, \Data) \quad = \quad  \mathbb{E}_t[e_i \mid e_i > t_i] \quad = \quad  t_i + \frac{\int_{t_i}^\infty S_{\KM(\Data)} (t) dt}{S_{\KM(\Data)} (t_i)} \ ,
\end{equation*}
where $S_{\KM(\Data)} (t)$ is typically derived from the training dataset. 

\paragraph{Pseudo-observations}
Another way to estimate surrogate event values is using pseudo-observations (PO)~\cite{andersen2003generalised, andersen2010pseudo}. 
Let $\{t_i\}_{i=i}^N$ be i.i.d. draws of a random variable time, $T$, and let $\hat{\theta}$ be an unbiased estimator for the event time based on right-censored observations of $T$. 
The PO for a censored subject is defined as: 
\begin{equation}
\label{eq:PO}
    e_{\text{PO}}(t_i, \Data) \quad = \quad N \times \hat{\theta} - (N-1) \times \hat{\theta}^{-i} \ ,
\end{equation}
where $\hat{\theta}^{-i}$ is the estimator applied to the $N-1$ element dataset formed by removing that $i$-th instance. 
The PO can be viewed as the contribution of subject $i$ to the unbiased event time estimation $\hat{\theta}$.
Here we can use, the mean survival time of the KM estimator, 
$\hat{\theta} = \E_t [ S_{\KM(\Data)} (t)] $ and $\hat{\theta}^{-i} = \E_t [ S_{\KM(\Data^{-i})} (t)] $ 
as unbiased estimators.

Some nice properties of PO values are:
(1)~PO values can be treated as though they are i.i.d. (Appendix D.1 in~\citet{qi2023an}). 
(2)~\citet{graw2009pseudo} has shown that, as $N \rightarrow \infty$, PO can approximate the correct conditional expectation:
\begin{equation*}
    \E [\,e_{\text{pseudo-obs}}(t_i) \mid \Bfx{i}\,]
    \quad\approx\quad \E [\,e_i \mid \Bfx{i}\,] \ ,
\end{equation*}
in situations where censoring does not depend on the covariates, and it also works well empirically in the situation where censoring is dependent on the covariates~\citet{binder2014pseudo}.

\subsection{CSD with KM-sampling}
\label{sec:detailed_algorithm}
In this section, we provide the pseudo-algorithm for the conformalized survival distribution with KM sampling.


\begin{algorithm}[ht]
   \caption{Conformalized Survival Distribution with KM sampling}
   \label{alg:csd}
\begin{algorithmic}[1]
   \STATE {\bfseries Input:} dataset $\Data$, testing set $\Data_{\text{test}}$,  Survival model $\Model$, Predefined percentile levels $\mathcal{P} = \{\rho_1, \rho_2, \ldots\}$, repeat parameter $R$
   \STATE {\bfseries Output:} ISD prediction for $\Data_{\text{test}}$
   \STATE Randomly split $\Data$ into a training set $\Data_{\text{train}}$ and a conformal set $\Data_{\text{train}}$
   \STATE Train the survival model $\Model$ using $\Data_{\text{train}}$ as the training set, and $\Data_{\text{con}}$ as the validation set.
   \STATE For every subject $j$ in $\Data_{\text{con}}$, make ISD predictions $\hat{S}_{\Model}(t \mid \Bfx{j})$
   \STATE Discretized the ISD predictions to PCTs using the predefined percentiles $\mathcal{P}$, using     $\hat{q}_{\Model} (\rho \mid \Bfx{i}) = \hat{S}^{-1}_{\Model}(\rho \mid \Bfx{j})$
   \STATE Calculate the KM estimation for the conformal set $S_{KM}(t)$
   \FOR{$r=1$ {\bfseries to} $R$}
   \FOR{$j \in \Data_{\text{con}}$}
   \IF{$\delta_j == 0$}
   \STATE Calculate the ``best-guess'' distribution $S_{\text{KM}} (t \mid t> c_j) = \min \left\{ \frac{S_{\text{KM}}(t)}{S_{\text{KM}}(c_j)}, 1\right\}$
   \STATE Sample $t_{j}^r = \sim S_{\text{KM}} (t \mid t> c_j)$
   \ELSE
   \STATE $t_{j}^r = t_j$
   \ENDIF
   \ENDFOR
   \ENDFOR
  \STATE Calculate the set of conformity scores $\mathcal{S}_{\Model, R}(\rho)$ for the conformal set using~\eqref{eq:conformal_score_censored}, for every $\rho \in \mathcal{P}$
   \STATE For every subject $i$ in $\Data_{\text{test}}$, make ISD predictions $\hat{S}_{\Model}(t \mid \Bfx{i})$
   \STATE Discretized the ISD predictions to PCTs using the predefined percentiles $\mathcal{P}$,      $\hat{q}_{\Model} (\rho \mid \Bfx{i}) = \hat{S}^{-1}_{\Model}(\rho \mid \Bfx{j})$
    \STATE Get the final prediction. $\hat{q}_{\Model}' (\rho \mid \Bfx{i}) = \hat{q}_{\Model} (\rho \mid \Bfx{i}) - \Q \left[{\rho; \mathcal{S}_{\Model, R}} (\rho)\right] $ 
\end{algorithmic}
\end{algorithm}

\subsection{Non-monotonicity after CSD}
Survival distribution is defined as the complement of CDF function, that means is defined to be always monotonically decreasing. 
Most survival models will apply a monotonic restriction to the output.\footnote{But not for all. For example, in the study, the CQRNN algorithm~\cite{pearce2022censored} does not have this monotonic constraint.} 
That means the discretized percentile prediction should still be monotonic, \ie $\hat{q}_{\Model}(\rho_1 \mid \Bfx{i}) \leq \hat{q}_{\Model}(\rho_2 \mid \Bfx{i})$ for arbitrary $\rho_1 > \rho_2$.
However, we could encounter some non-monotonic curves after the operation in~\eqref{eq:csd_adjust} in some rare cases. 
This usually occurs when $\rho$ is small (at the tail of the ISD curve).
This can be fixed by the bootstrap rearranging method~\cite{chernozhukov2010quantile}.

\paragraph{Remark} However, after fixing the non-monotonicity, there is no such guarantee that the predicted order remains the same, that means Theorem~\ref{thm:c-index} will not hold anymore. 
We anticipate that only few models and predictions will be affected by this, so the difference in C-index will be very slight. 
Surprisingly, as we show in Section~\ref{sec:exp_results} and Appendix~\ref{appendix:exp_complete}, this modification will actually make the C-index slightly better. 

\section{Experimental Details}
\label{appendix:exp_details}

\subsection{Datasets and Preprocessing}
\label{appendix:data_details}
In this section, we will describe how we preprocess the raw survival datasets.

The Veterans’ Administration Lung Cancer Trial (\texttt{VALCT}) dataset~\cite{kalbfleisch2011statistical} is derived from a randomized trial comparing two treatment regimens for lung cancer. It includes data from 137 patients, each described by 6 features. This dataset, recognized as a standard resource for survival analysis, is available in R's \texttt{survival} package.

Diffuse Large B-Cell Lymphoma (\texttt{DLBCL}) dataset~\cite{li2016multi} contains Lymphochip DNA microarrays from 240 biopsy samples of DLBCL tumors for studying the survival status of the corresponding patients and the observation lasts 21 years. It is a high-dimensional dataset, containing 7,399 features for each sample.

Mayo clinic Primary Biliary Cholangitis (\texttt{PBC}) data~\cite{therneau2000modeling} comprises data from 424 patients diagnosed with PBC, an autoimmune liver disease.
This is a standard survival analysis dataset available in the \texttt{survival} package in R.
In this data set, the status of each patient at the conclusion of the study is categorized into one of three types: censored, transplanted, or deceased. 
Our analysis aims to model the survival time following diagnosis. 
Therefore, we treat both censored and transplant cases as censored events ($\delta_i = 0$), and cases where the patient died as uncensored events ($\delta_i = 1$).
For handling missing data, we employ different strategies based on the type of feature. For continuous variables, we impute missing values using their mean. For categorical variables, we fill in missing values with the most common category (mode).

Glioblastoma multiforme (\texttt{GBM}) dataset is retrieved from The Cancer Genome Atlas (TCGA) dataset~\cite{weinstein2013cancer}.
We only select patients diagnosed with glioblastoma multiforme cancer to build the GBM data set. 
The TCGA data can be found in \url{http://firebrowse.org/} or by the instruction in~\citet{haider2020effective}.
The median values of radiation therapy, Karnofsky performance score, and ethnicity have been used to fill the missing values.

The Molecular Taxonomy of Breast Cancer International Consortium (\texttt{METABRIC}) dataset~\cite{curtis2012genomic} contains survival information for breast cancer patients.
This dataset includes a diverse range of feature sets that encompass clinical traits, expression profiles, copy number variation (CNV) profiles, and single nucleotide polymorphism (SNP) genotypes. All these features are derived from breast tumor samples collected during the METABRIC trial.
The dataset can be downloaded from (\url{https://www.cbioportal.org/study/summary?id=brca_metabric}), and it does not have any missing values.

Rotterdam \& German Breast Cancer Study Group (\texttt{GBSG}) dataset~\cite{katzman2018deepsurv} represents an amalgamation of data from two sources: the Rotterdam tumor bank~\cite{foekens2000urokinase} and the German Breast Cancer Study Group~\cite{schumacher1994randomized}.
The Rotterdam tumor bank dataset contains records for 1,546 patients with node-positive breast cancer
the GBSG contains complete data for 686 patients.
This combined dataset, which was already post-processed in the DeepSurv study~\cite{katzman2018deepsurv}, is available for download at \url{https://github.com/jaredleekatzman/DeepSurv/}.

The Northern Alberta Cancer Dataset (\texttt{NACD})~\cite{haider2020effective}
described 2402 patients with various cancers, including lung, colorectal, head and neck cancers, esophagus, stomach, and other cancers. 
The event of interest in this dataset is failure time. 
We drop patients with negative or zero survival time and leave with 2,396 patients.
There are no missing values for the features in this dataset.
The dataset can be downloaded from \url{http://pssp.srv.ualberta.ca} under ``Public Predictors''.

The Study to Understand Prognoses Preferences Outcomes and Risks of Treatment (\texttt{SUPPORT}) dataset~\cite{knaus1995support} comprises 9,105 participants with the aim of examining survival outcomes and clinical decision-making for seriously ill hospitalized patients. 
The dataset consists of a proportion of missing values for a large proportion of features. 
The official website (\url{https://biostat.app.vumc.org/wiki/Main/SupportDesc}) for the SUPPORT dataset provides a guideline for imputing baseline physiologic features, we followed that procedure. 
For the rest features with missing values, we will also use the mean value imputation for continuous features and the mode value imputation for categorical features.

Surveillance, Epidemiology, and End Results (\texttt{SEER}) Program dataset~\cite{seer} is a comprehensive collection of data on cancer patients in the United States. This dataset, which encompasses about 49\% of the U.S. population, includes vital information on patient diagnoses, survival times, and other relevant details sourced from various registries.
Our study focuses specifically on three distinct subsets extracted from the main \texttt{SEER} dataset: \texttt{SEER-brain}, \texttt{SEER-liver}, and \texttt{SEER-stomach}. These subsets, respectively, contain data on patients diagnosed with brain, liver, and stomach cancers. The primary objective of the datasets is to model the time elapsed from the diagnosis of these patients until a failure event, such as death or disease progression, following the data preprocessing steps by~\citet{farrokh2024effective}.
The feature set chosen for analysis includes age, sex, behavior recode, combined summary stage, grade, RX summary, Summary stage 2000, SEER historic stage A, derived AJCC TNM and stage group for the 6th edition, and derived AJCC stage group for the 7th edition. In preparing each subset for analysis, we first eliminated any features with more than 70\% missing or unknown values. We also excluded patients who had no recorded follow-up times, whether due to death or censoring. To ensure the uniqueness of the data, we removed any duplicate records, retaining only a single instance for each patient with matching clinical features and outcomes. In addition, patients with a recorded survival time of zero (indicating death or censoring on the same day as diagnosis) were also excluded from the study.
The SEER cohort is available for application and download from \url{www.seer.cancer.gov}.

\subsection{Evaluation Metrics}
All five metrics (C-index, D-cal, KM-cal, IBS, MAE-PO) are implemented in the \texttt{SurvivalEVAL}\footnote{\url{https://github.com/shi-ang/SurvivalEVAL}} package~\cite{qi2023survivaleval}.
The package can output the C-index, KM-cal, IBS, and MAE-PO scores directly. 
For D-cal, it generates the histogram and a $p$-value, we then need to calculate the D-cal statistics from the histogram, as discussed in Section~\ref{sec:exp}.

\subsection{Model Implementation Details}
\label{appendix:model_details}

In this section, we will describe the implementation of the baseline models utilized in the performance comparison.

\emph{AFT}~\cite{stute1993consistent} with Weibull distribution is a linear parametric model with two estimated coefficients (scale and shape), using a small L2 penalty on parameters during optimization. The model is implemented in \texttt{lifelines} packages~\cite{lifelines}.

\emph{GB}~\cite{hothorn2006survival} is an ensemble method with component-wise least squares as the base learner. We use the 100 boosting stages with partial likelihood loss for optimization
and 100\% subsampling for fitting each base learner. The model is implemented in \texttt{scikit-survival} packages~\cite{sksurv}.

\emph{N-MTLR}~\cite{fotso2018deep} is a neural network extension of the multi-task logistic regression model (MTLR)~\cite{yu2011learning}. 
MTLR is the first survival model that estimated the probability of survival of patients at each of a vector of discretized time points $[t_1, t_2, \ldots, t_{\text{max}}]$. 
To achieve that, MTLR set up a series of logistic regression models, for each time point.
Here we implement the extended N-MTLR proposed by~\citet{fotso2018deep}, which essentially attached a multilayer perceptron (MLP) before the linear MTLR layer. 
The number of discrete times is determined by the square root of numbers of uncensored patients, and use quantiles to divide those uncensored instances evenly into each time interval, as suggested in~\cite{jin2015using, haider2020effective}.
We reimplement the method based on the code provided in~\citet{kazmierski2020torchmtlr}, please see the code in the github repository.

\emph{DeepSurv}~\cite{katzman2018deepsurv} is a neural network adaptation of the Cox proportional hazard model (CoxPH)~\cite{cox1972regression}. 
CoxPH is a semi-parametric model with a proportional hazard assumption. It consists of a population-level baseline hazard function (baseline hazard, which is non-parametric) and a partial hazard function (relative hazard, which is parametric).
The original CoxPH model only predicts a relative hazard score (risk score) for each patient using the partial hazard function.
To make ISD prediction, we use the Breslow method~\cite{breslow1975analysis} to estimate the population-level baseline hazard function.
We add the ISD prediction to the method based on the code provided in~\citet{katzman2018deepsurv}, please see the code in the repository.

\emph{DeepHit}~\cite{lee2018deephit} is also a discrete model, like N-MTLR. 
It models the probability density function (PDF) of the event for each individual (and PDF can be used to calculate the survival distribution accordingly), while N-MTLR models the survival distribution directly. 
Furthermore, apart from the standard likelihood loss, it also contains a ranking loss term which changes the undifferentiated indicator function in~\eqref{eq:c-index} with an exponential decay function. 
Due to this reason, DeepHit might exhibit stronger discrimination power than other models (on the other hand, lower calibration power than others). 
The number and locations of discrete times are determined the same way as N-MTLR model (square root of numbers of uncensored patients, and quantiles).
The model is implemented in \texttt{pycox} packages~\cite{kvamme2019time}.

\emph{CoxTime}~\cite{kvamme2019time} is a non-proportional neural network extension of the CoxPH.
Besides the MLP layer that DeepSurv added,
CoxTime relieves the proportional assumption by allowing the baseline hazard function to model interactions between time $t$ and features $\Bfx{i}$ (before it was only time).
The model is implemented in \texttt{pycox} packages~\cite{kvamme2019time}.

\emph{CQRNN}~\cite{pearce2022censored} is a quantile regression-based method, which means instead of predicting survival probability given a certain time, it predicts survival time given a certain quantile.  
It uses the Portnoy's pinball loss and an expectation-maximization step to estimate the model.
Because the quantile regression-based methods do not enforce the prediction to be monotonic, we add the bootstrap-rearranging post processing~\cite{chernozhukov2010quantile} to the model's prediction.  
We reimplement the method based on the code provided in~\citet{pearce2022censored}, please see the code in the repository.

\paragraph{``Dummy model''}
\emph{Kaplan Meier (KM)~\cite{kaplan1958nonparametric}} is a non-parametric estimator to predict the survival distribution for a group of subjects. It is not a personalized prediction tool. 
As we discussed in Section~\ref{sec:background}, the KM model will theoretically achieve the perfect calibration.
Therefore, we use this model as the \textbf{empirical lower-limits} for the calibration performance comparison. 
See also Appendix~\ref{appendix:km_perfect_calibration} for why the KM model theoretically achieves the perfect calibration.
The model is implemented in \texttt{lifelines} packages~\cite{lifelines}.

In our study, we employ a uniform Multilayer Perceptron (MLP) architecture for a range of deep learning models, including \emph{N-MTLR}, \emph{DeepSurv}, \emph{DeepHit}, \emph{CoxTime}, \emph{CQRNN}, and \emph{LogNormalNN}\footnote{The introduction of the LogNormalNN model will be provided in subsequent Appendix~\ref{appendix:objective-based_methods}.}. 
Specifically, we configure the MLP with two layers, each containing 64 neurons, and build the model with batch normalization layers and rectified linear unit (ReLU) activation layers. 

For the training process, we utilize Adam optimizer combined with an L2 penalty for weight decay to fine-tune the models. The learning parameters are set as follows: a learning rate of 0.001, a batch size of 256, and a dropout rate of 0.4. Additionally, we implement an early stopping mechanism across all deep learning models, which is based on performance validation using a separate validation dataset. This approach ensures that the effectiveness of the models is not compromised by variations in the MLP configuration or other optimization parameters.

For further details, please refer to the code in the repository.

\section{Complete Experimental Results}
 \label{appendix:exp_complete}

\begin{figure}
    \centering
    \includegraphics[width=\textwidth]{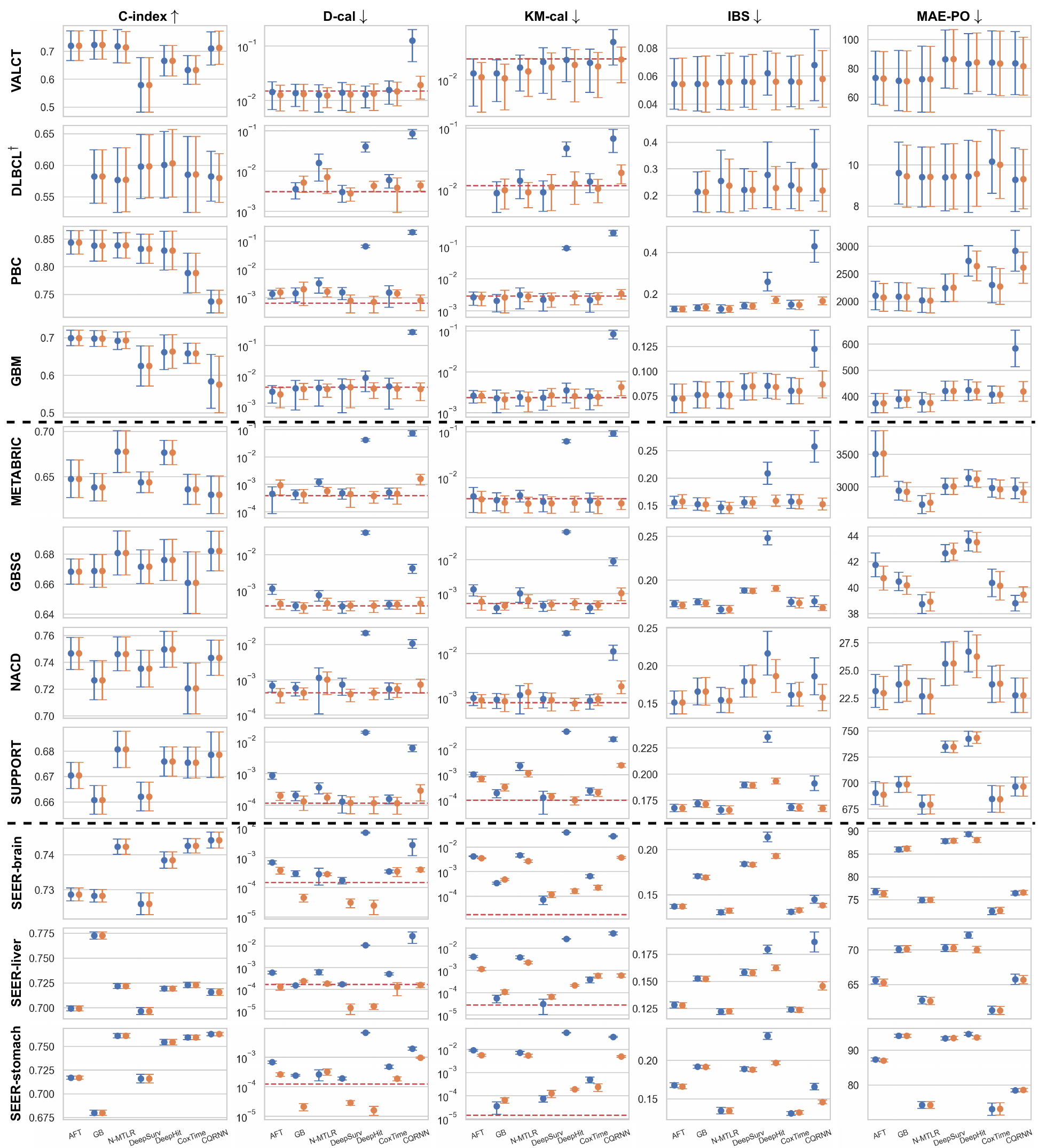}
    \caption{Complete empirical results. The error bars represent mean and 95\% CI over ten runs, with blue denoting the non-CSD baseline and orange for the CSD-version. The red dashed line represents the mean calibration performance for KM estimator, serving as an empirical lower-limit. A higher C-index score indicates better performance, whereas lower scores are preferable for other metrics.
    $^\dagger$AFT is fail to converge on \texttt{DLBCL} dataset due to the large feature-sample-ratio.}
    \label{fig:full_results}
\end{figure}

\subsection{Main Results: Compare CSD with Baseline Models}
In Figure~\ref{fig:full_results}, we showcase the comprehensive outcomes of our empirical evaluation for the CSD framework. 
This detailed figure organizes the data by dataset in each row, ordered by sample size. 
The datasets are segmented into small, medium, and large categories by the two black dashed lines for the sake of clarity. 
Evaluation metrics are depicted in columns, with the C-index employing a positive scoring rule -- where a higher value signifies superior performance, and the remaining metrics using negative scoring rules. 
Within the D-cal and KM-cal diagrams, red dashed lines highlight the empirical lower limits, which represent the mean calibration performance of a ``dummy'' KM model. 
In each subfigure, the performance of the standard non-CSD baseline is illustrated with blue bars, while the CSD-enhanced versions are depicted with orange bars, facilitating a clear comparison between the two.

Figure~\ref{fig:cal_slope} and~\ref{fig:km-cal_compare} present some qualitative calibration results for the CSD methods. Specifically, Figure~\ref{fig:cal_slope} shows the P-P plots for the non-CSD baselines and post-CSD models, and Figure~\ref{fig:km-cal_compare} shows the predictions of the average survival curves for the non-CSD baselines and the post-CSD models. Here, we use the DeepHit model as the baseline. We can see that for small and medium datasets, DeepHit tends to overestimate the risks, and our CSD process can calibrate this overestimation to almost perfectly calibrative estimates. For large datasets, DeepHit tends to underestimate the lower risks and overestimate the higher risks, and our CSD can also calibrate this estimation to almost perfect calibration.

\begin{figure}[ht]
    \centering
    \includegraphics[width=\textwidth]{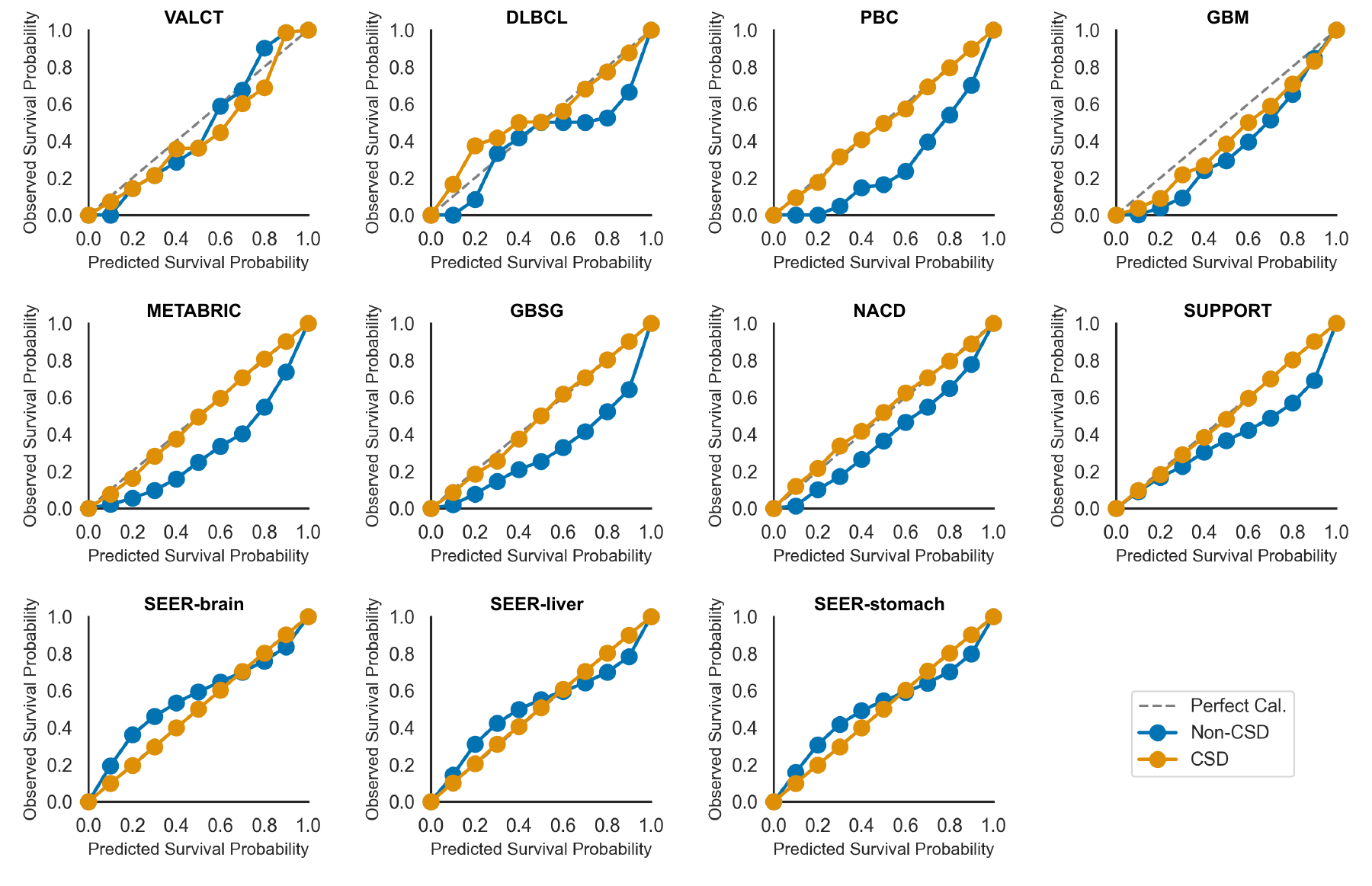}
    \caption{P-P plots assessing the calibration performance of the CSD and non-CSD methods. Here we use DeepHit model as the baseline (non-CSD) method, and the CSD method uses KM-sampling with 19 percentile levels. The dashed line represents perfect calibration.}
    \label{fig:cal_slope}
\end{figure}

\begin{figure}[ht]
    \centering
    \includegraphics[width=\textwidth]{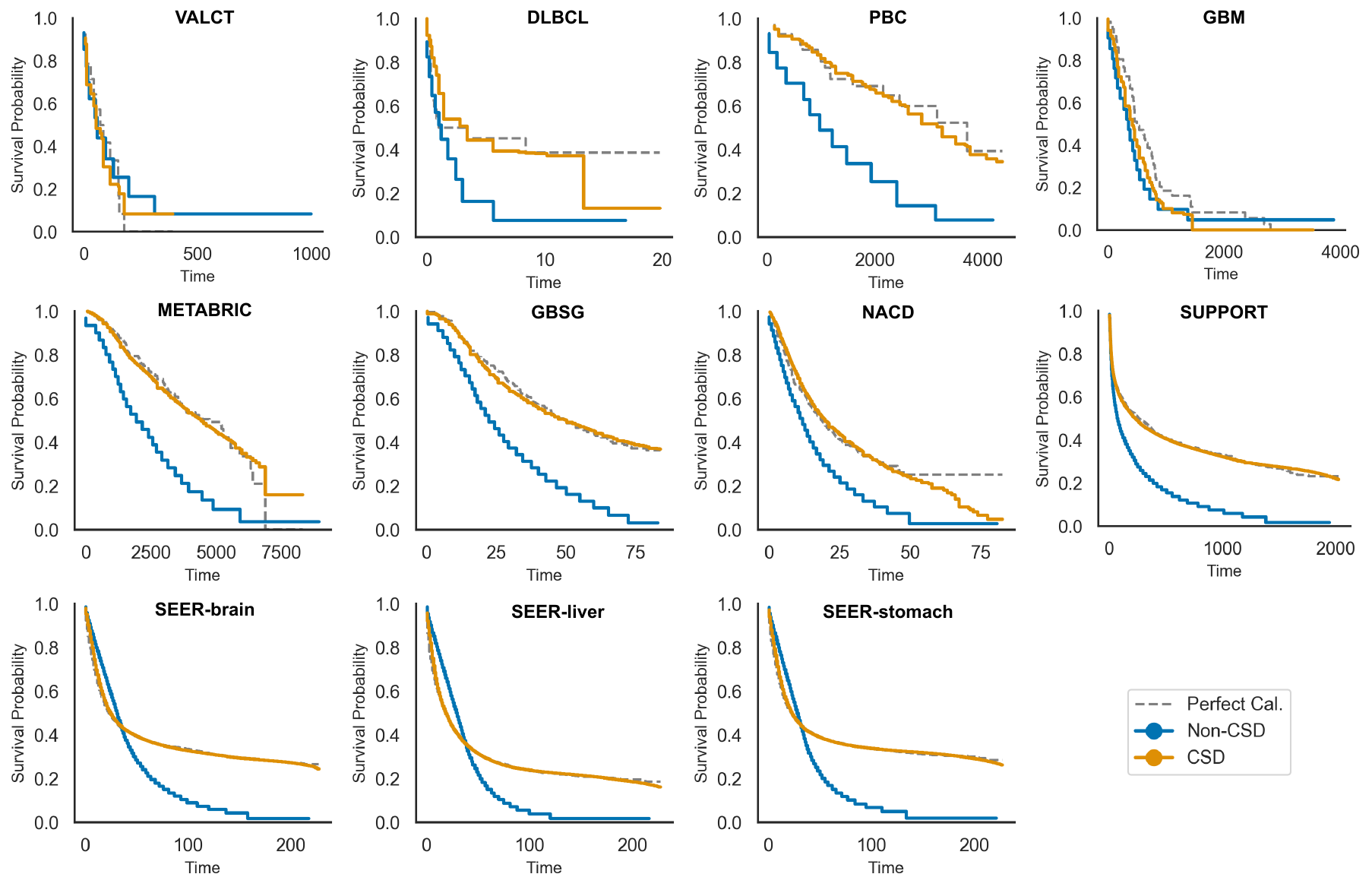}
    \caption{Compare the average predicted curve from the CSD and non-CSD methods with the true survival curve. Here we use DeepHit model as the baseline (non-CSD) method, and the CSD method uses KM-sampling with 19 percentile levels. The dashed line represents the Kaplan-Meier curve of the test set.}
    \label{fig:km-cal_compare}
\end{figure}

\subsection{Compare with Objective-Based Methods}
\label{appendix:objective-based_methods}

\begin{figure}[ht]
    \centering
    \includegraphics[width=0.9\textwidth]{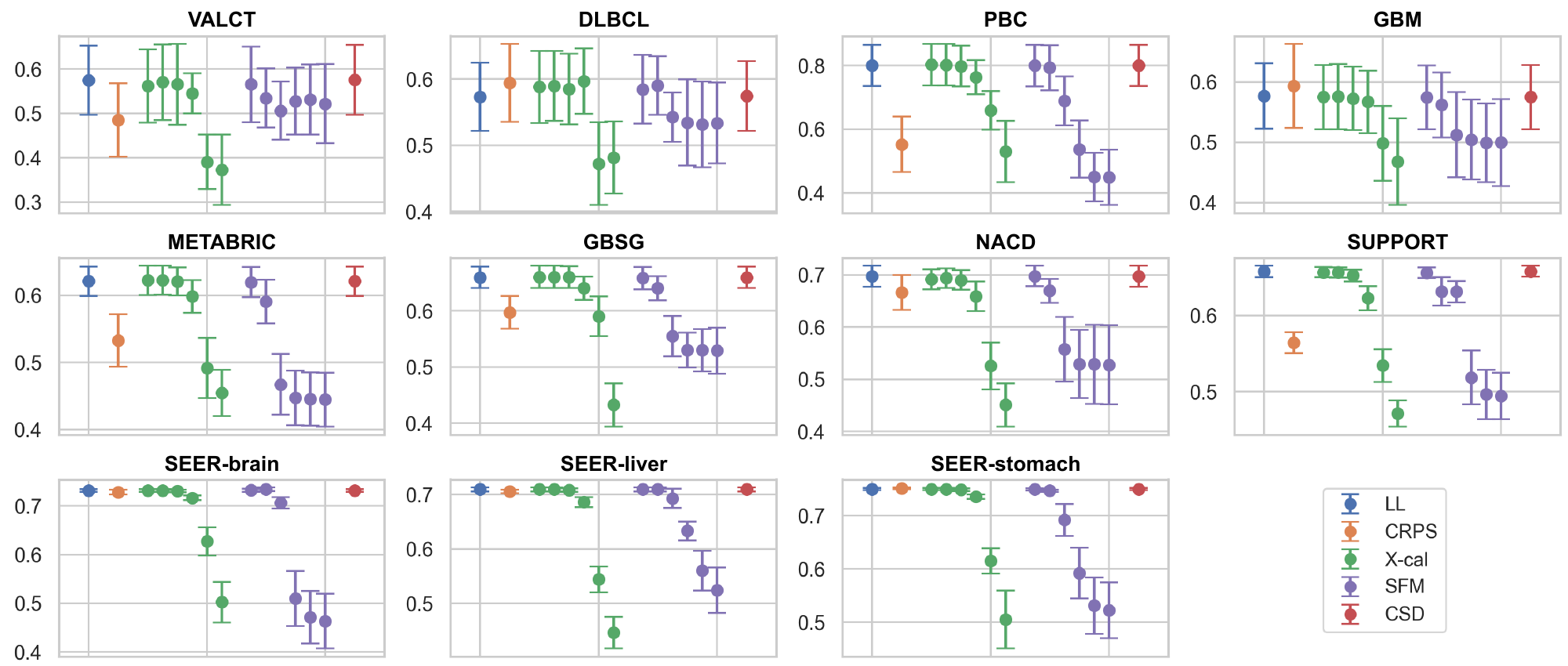}
    \caption{C-index comparison of CSD and objective-based methods. Higher values indicate superior performance. The baseline (blue bars) uses likelihood loss (LL), the yellow bars denote CRPS, the green bars denote X-cal, purple denotes SFM, and red bars denote CSD. For the X-cal and SFM methods, we gradually increase the weight for the calibration loss.}
    \label{fig:cindex_objective_methods}
\end{figure}

\begin{figure}[ht]
    \centering
    \includegraphics[width=0.9\textwidth]{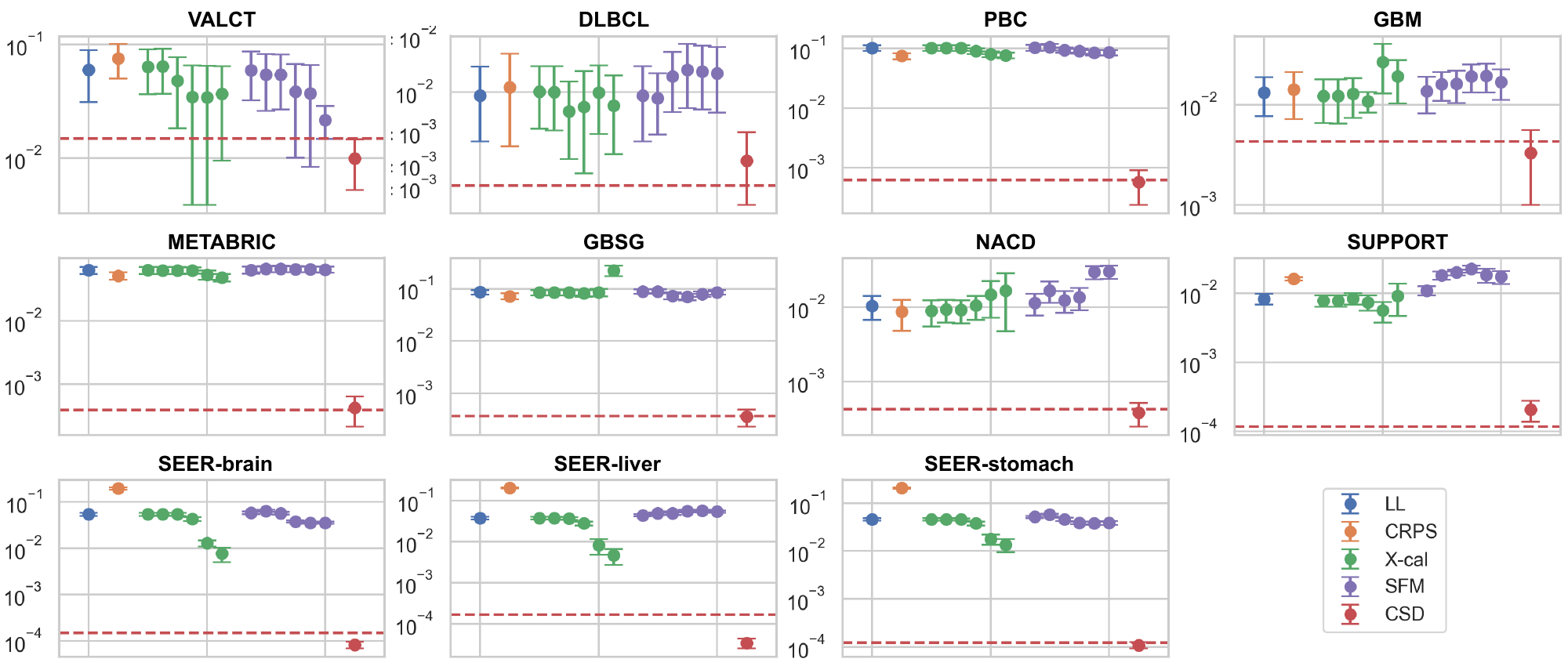}
    \caption{D-calibration comparison of CSD and objective-based methods. Lower values indicate superior performance. The baseline (blue bars) uses likelihood loss (LL), the yellow bars denote CRPS, the green bars denote X-cal, the purple bars denote SFM, and the red bars denote CSD. For the X-cal and SFM method, we gradually increase the weight for the calibration loss. The red dashed line represents the mean calibration performance of the KM estimator, serving as an empirical lower-limit.}
    \label{fig:dcal_objective_methods}
\end{figure}
\begin{figure}[ht]
    \centering
    \includegraphics[width=0.9\textwidth]{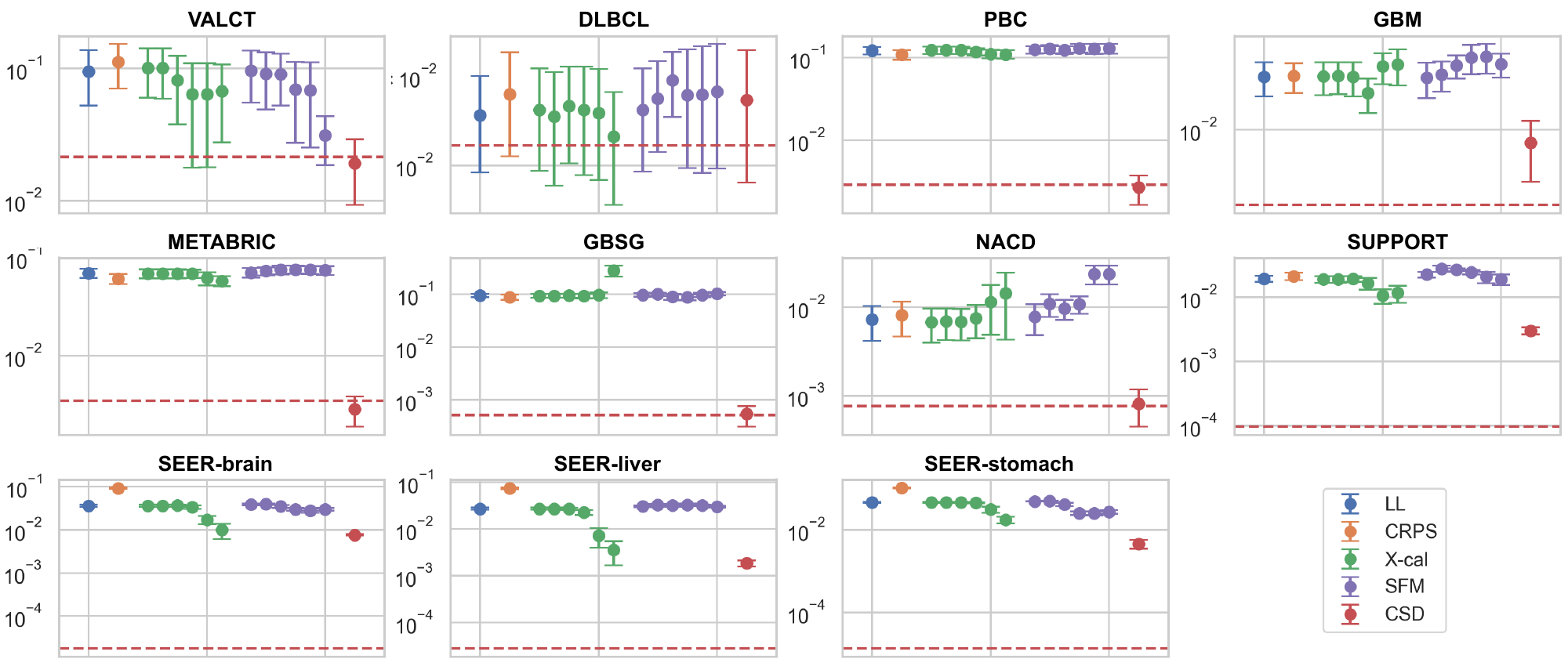}
    \caption{KM-calibration comparison of CSD and objective-based methods. Lower values indicate superior performance. The baseline (blue bars) uses likelihood loss (LL), the yellow bars denote CRPS, the green bars denote X-cal, the purple bars denote SFM, and the red bars denote CSD. For the X-cal and SFM method, we gradually increase the weight for the calibration loss. The red dashed line represents the mean calibration performance of the KM estimator, serving as an empirical lower-limit.
    }
    \label{fig:kmcal_objective_methods}
\end{figure}

To compare the proposed CSD with the objective-function based methods, we test the performance on a parametric log-normal model with neural networks (\emph{LogNormalNN}). \emph{LogNormalNN} uses two independent neural networks to predict the personalized mean and variance of the log-normal distribution, respectively.
This is the model used in both~\citet{avati2020countdown} and~\citet{goldstein2020x} as the baseline, which best shows the effectiveness of their methods.

Under \emph{conditional independent censoring} assumption, the baseline \emph{LogNormalNN} optimizes the log-likelihood function: 
\begin{align}
\label{eq:likelihhod}
\mathcal{L}_{\text{likelihood}} (\Model)
    &= \log\prod_{i = 1}^N \hat{f}_{\Model} (t = t_i \mid \Bfx{i})^{\delta_i} \hat{S}_\Model (t = t_i \mid \Bfx{i})^{1 - \delta_i} \notag \\
    &= \sum_{i = 1}^N \delta_i \times \log \hat{f}_{\Model} (t = t_i \mid \Bfx{i}) + (1 - \delta_i) \times \log \hat{S}_\Model (t = t_i \mid \Bfx{i}),
\end{align}
where $\hat{f}_{\Model} (t = t_i \mid \Bfx{i})$ is the predicted probability density function (PDF) which can be derived using ISD: $\hat{f}_{\Model} (t = t_i \mid \Bfx{i}) = \frac{- \partial \hat{S}_\Model (t = t_i \mid \Bfx{i})}{ \partial t}$.

Continuous Ranked Probability Score (CRPS) for survival is proposed by~\citet{avati2020countdown} to make sharp and calibrated ISD prediction. 
Specifically, they proposed to directly use the CRPS as the objective function:
\begin{align}
\mathcal{L}_{\text{CRPS}} (\Model)
    &= \sum_{i =1}^{N} \left( 
    \int_{t=0}^{t_i} \left(1 - \hat{S}_\Model (t  \mid \Bfx{i})\right)^2 dt + \delta_i \int_{t=t_i}^{\infty} \hat{S}_\Model (t  \mid \Bfx{i})^2 dt
    \right) .
\end{align}
As many other research points out~\cite{qi2023an}, the CRPS loss is essentially the IBS score without the IPCW weighting. 
Therefore, because IBS can be decomposed into an integrated 1-cal component,
optimizing CRPS should be able to lead to a calibrated prediction.

A follow-up study by~\citet{goldstein2020x} proposed a differentiable D-cal (X-cal) as part of the objective function with~\eqref{eq:likelihhod}.
For the mutually exclusive percentile intervals $\mathcal{I} = \{[0, 0.1], [0.1, 0.2], \ldots, [0.9, 1]\}$, the X-cal can be calculated using:
\begin{align*}
\mathcal{L}_{\text{X-cal}} (\Model)
    &= \sum_{I \in \mathcal{I}} \left( 
    \sum_{i =1}^{N} \zeta_\gamma \left(\hat{S}_\Model (t_i \mid \Bfx{i}); I \right) - |I|
    \right)^2 ,
\end{align*}
where
\begin{align*}
\zeta_\gamma (u; I = [a, b] ) = \frac{1}{1+\exp (- \gamma(u - a)(b - u))},
\end{align*}
is an approximation of the indicator function, which checks whether the predicted survival probability at the event time $\hat{S}_\Model (t_i \mid \Bfx{i})$ is inside the percentile interval $I$.

\citet{chapfuwa2020survival} proposed the survival function matching (SFM) loss as the penalty term for their survival models. 
The core component of the SFM loss is similar to the KM-cal (defined in Section~\ref{sec:metrics}), with a slight modification from the squared error to absolute error. 
Note that our implementation of SFM loss differs from the original by~\citet{chapfuwa2020survival} in two aspects: (1) The SFM loss was proposed in conjunction with a model that makes time prediction. In our baseline models, we need to predict survival distributions, which requires us to make substantial modifications to the SFM calculation. (2) The original was based on TensorFlow-based code, while our implementation is in PyTorch.

We reimplemented the baseline \emph{LogNormalNN} models with log-likelihood loss, CRPS loss, and X-cal loss, based on the code provided in~\cite{goldstein2020x}. 
We then test how our proposed CSD performs compared to these two objective function-based methods.

Figure~\ref{fig:cindex_objective_methods}, \ref{fig:dcal_objective_methods} and~\ref{fig:kmcal_objective_methods} provide a comparative analysis of the performance between the CSD and the objective-based methods, and various objective-based methods, focusing on three key metrics: C-index, D-cal, and KM-cal. 

The performance of different variations of the LogNormalNN model is depicted through color-coded bars:
\begin{itemize}
    \item The blue bars represent the \emph{LogNormalNN} model optimized using a likelihood function, illustrating the base performance without additional calibration or optimization criteria.
    \item The yellow bars indicate the \emph{LogNormalNN} model enhanced by the Continuous Ranked Probability Score (CRPS) loss function, showcasing the impact of optimizing for probabilistic forecasting accuracy.
    \item The green bars denote the \emph{LogNormalNN} model that integrates optimization for both the likelihood and an additional D-calibration penalty term (X-cal). Here, the emphasis is on the dynamic adjustment of the weight for the X-cal term, which is systematically increased from $10^0$ to $10^5$, illustrating the effect of calibration emphasis in model training.
    \item The purple bars denote the \emph{LogNormalNN} model that integrates optimization for both the likelihood and an additional KM-calibration penalty term (SFM). Here, the emphasis is on the dynamic adjustment of the weight for the SFM term, which is systematically increased from $10^0$ to $10^5$, illustrating the effect of calibration emphasis in model training.
    \item Lastly, the red bars illustrate the LogNormalNN model initially optimized using the likelihood function and subsequently adjusted using a post-processing CSD method, demonstrating the potential benefits of incorporating CSD adjustments after the initial model training.
\end{itemize}

\subsection{Ablation Studies}
\label{appendix:ablation}

\subsubsection{Ablation \#1: Handling Right-Censoring}

\begin{figure}
    \centering
    \includegraphics[width=\textwidth]{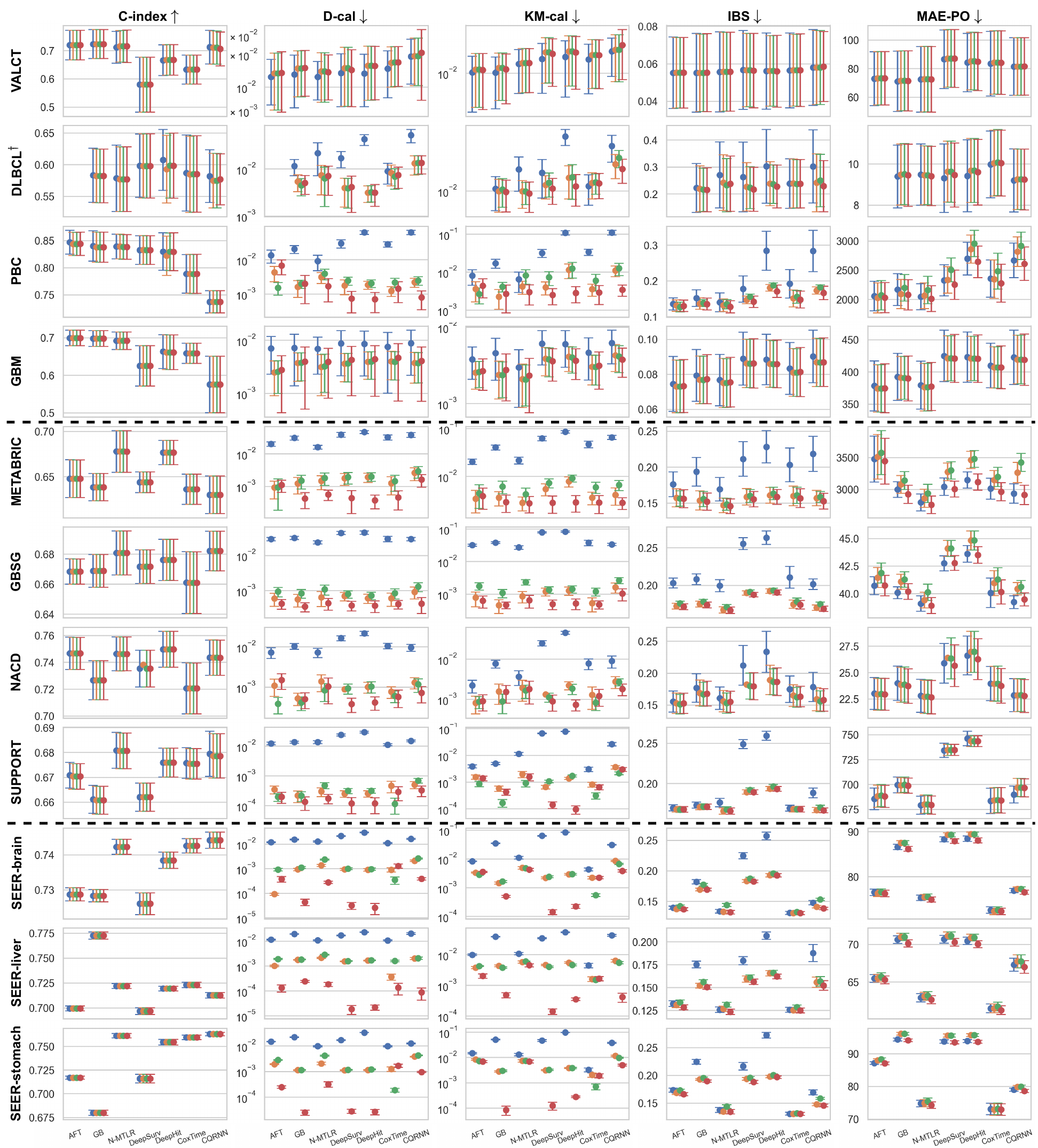}
    \caption{Comparison results for \textbf{ablation \#1: way of handling right-censoring}.
    The error bars represent mean and 95\% CI over ten runs, with blue denoting the uncensored method, yellow for margin, green for PO, and red for KM-sampling. 
    A higher C-index score indicates better performance, whereas lower scores are preferable for other metrics.
    $^\dagger$AFT is fail to converge on \texttt{DLBCL} dataset due to the large feature-sample-ratio.}
    \label{fig:ablation1}
\end{figure}

We present an ablation study in Figure~\ref{fig:ablation1} that investigates various approaches to managing censorship within the Conformalized Survival Distribution (CSD) framework.
These approaches include uncensored, margin, pseudo-observation (PO), and Kaplan-Meier (KM) sampling.
Similar to Figure~\ref{fig:full_results}, each row in this figure corresponds to a different dataset, arranged in ascending order of sample size, while the columns represent various evaluation metrics. 
The performance of each method is color-coded in the subfigures: uncensored (blue), margin (yellow), PO (green), and KM sampling (red).

\paragraph{TL;DR}
KM sampling (red bars) demonstrates superior calibration performance over the other methods (uncensored, margin, and PO), particularly in datasets of medium to large size.

The first columns of Figure~\ref{fig:ablation1} show negligible differences in the C-index between the non-CSD models and their CSD counterparts.
As to the calibration, the censoring method has difference impact for different types of datasets.
\begin{itemize}
    \item In datasets with very low censorship rates, such as \texttt{VALCT}, which contains only 9 censored subjects, the uncensored approach (blue bars) performs optimally. Indicating that omitting these few censored subjects from the CSD analysis has minimal effect on the overall data distribution.
    \item For other small datasets (\texttt{DLBCL}, \texttt{PBC}, and \texttt{GBM}), the performances of margin (yellow), PO (green), and KM sampling (red) are similar, likely due to inaccurate KM curve estimations from small sample sizes, affecting the KM sampling's ``best guess'' distribution.
    \item For medium and large datasets, KM sampling method (red bars) consistently (and significantly) works better than the others.
\end{itemize}
Regarding IBS performance, margin, PO, and KM sampling methods exhibit comparable outcomes within the 95\% confidence interval, yet all surpass the uncensored method. 
The MAE-PO shows no significant performance differences among the four methods.

\subsubsection{Ablation \#2: Conformal Set Splitting}

\begin{figure}
    \centering
    \includegraphics[width = \textwidth]{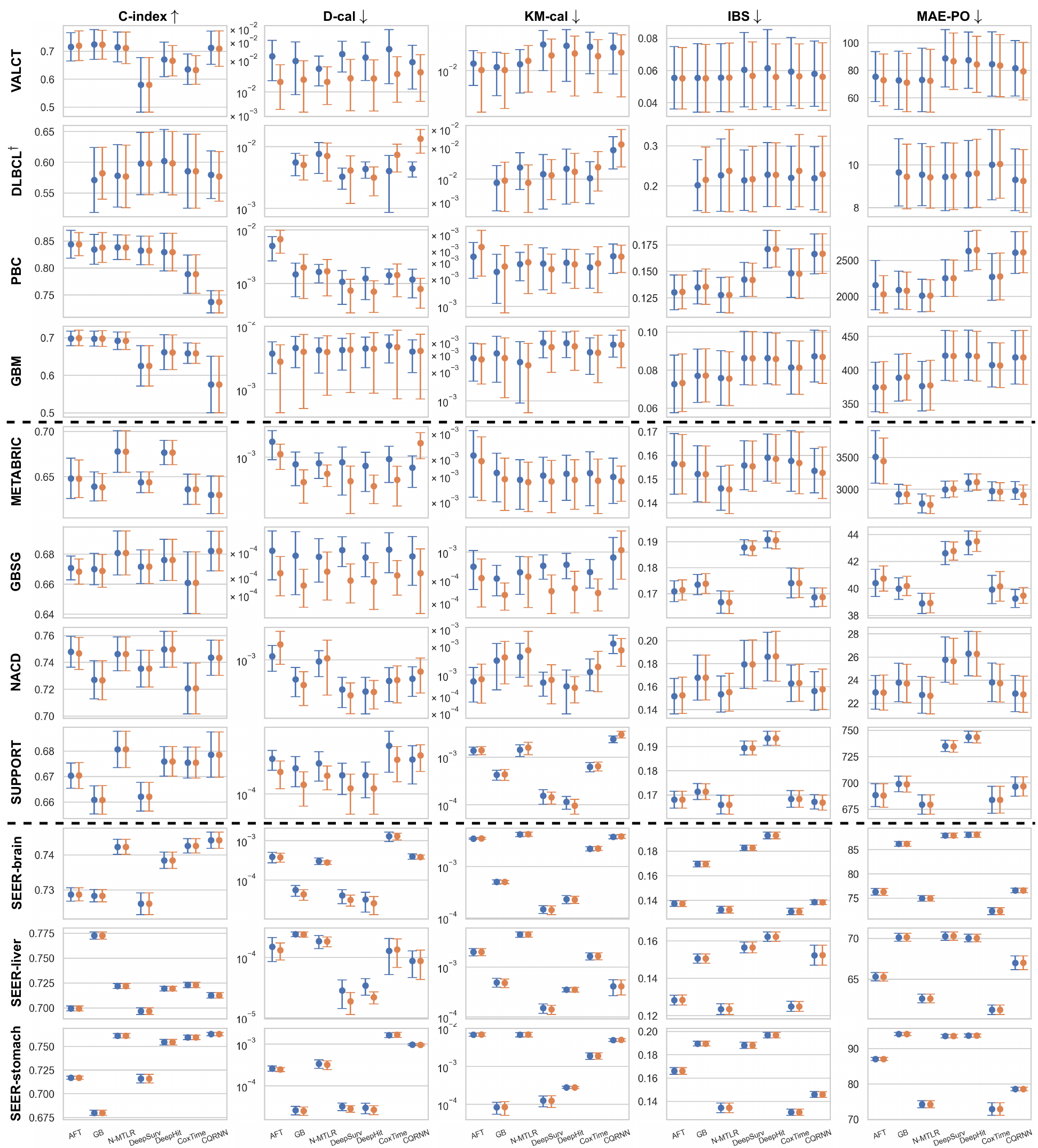}
    \caption{Comparison results for \textbf{ablation \#2: way of splitting conformal set}.
    The error bars represent mean and 95\% CI over ten runs, with blue denoting using solely the separate conformal set and yellow for combining the separate conformal set with the training set together for comformalization. 
    A higher C-index score indicates better performance, whereas lower scores are preferable for other metrics.
    $^\dagger$AFT is fail to converge on \texttt{DLBCL} dataset due to the large feature-sample-ratio.}
    \label{fig:ablation2}
\end{figure}

Figure~\ref{fig:ablation2} explores the performance of different strategies for dividing the conformal set.
We compare two policies (see also Section~\ref{sec:data_split}): (1) allocating a separate 10\% of the training set as the conformal set and (2) merging this separate conformal set with the training set. 
Each row in the figure corresponds to a dataset, arranged by sample size, with evaluation metrics displayed in columns. 
The performance of policy 1 is represented by blue bars, and policy 2 by yellow bars.

\paragraph{TL;DR}
For smaller or medium-sized datasets, policy 2 is more effective, while for larger datasets, policy 1 is preferred due to comparable performance and enhanced computational efficiency.

The figure indicates that the choice of data splitting policy has minimal effect on the C-index, IBS, and MAE-PO. 
However, for calibration performance in large datasets, using a separate validation set as the conformal set proves adequate. In such cases, a 10\% validation set yields at least 1000 conformal samples, which aligns with previous research findings~\cite{angelopoulos2023conformal}. For smaller or medium-sized datasets, where a 10\% validation set might result in only a few hundred samples, merging the validation and training sets for the conformal set (policy 2) is beneficial, significantly enhancing calibration performance. For large datasets, the calibration outcomes of both policies are similar, thus we recommend policy 1 (a separate conformal set) to save computational resources(faster and less memory usage).

\subsubsection{Ablation \#3: Number of Percentiles}

\begin{figure}
    \centering
    \includegraphics[width=\textwidth]{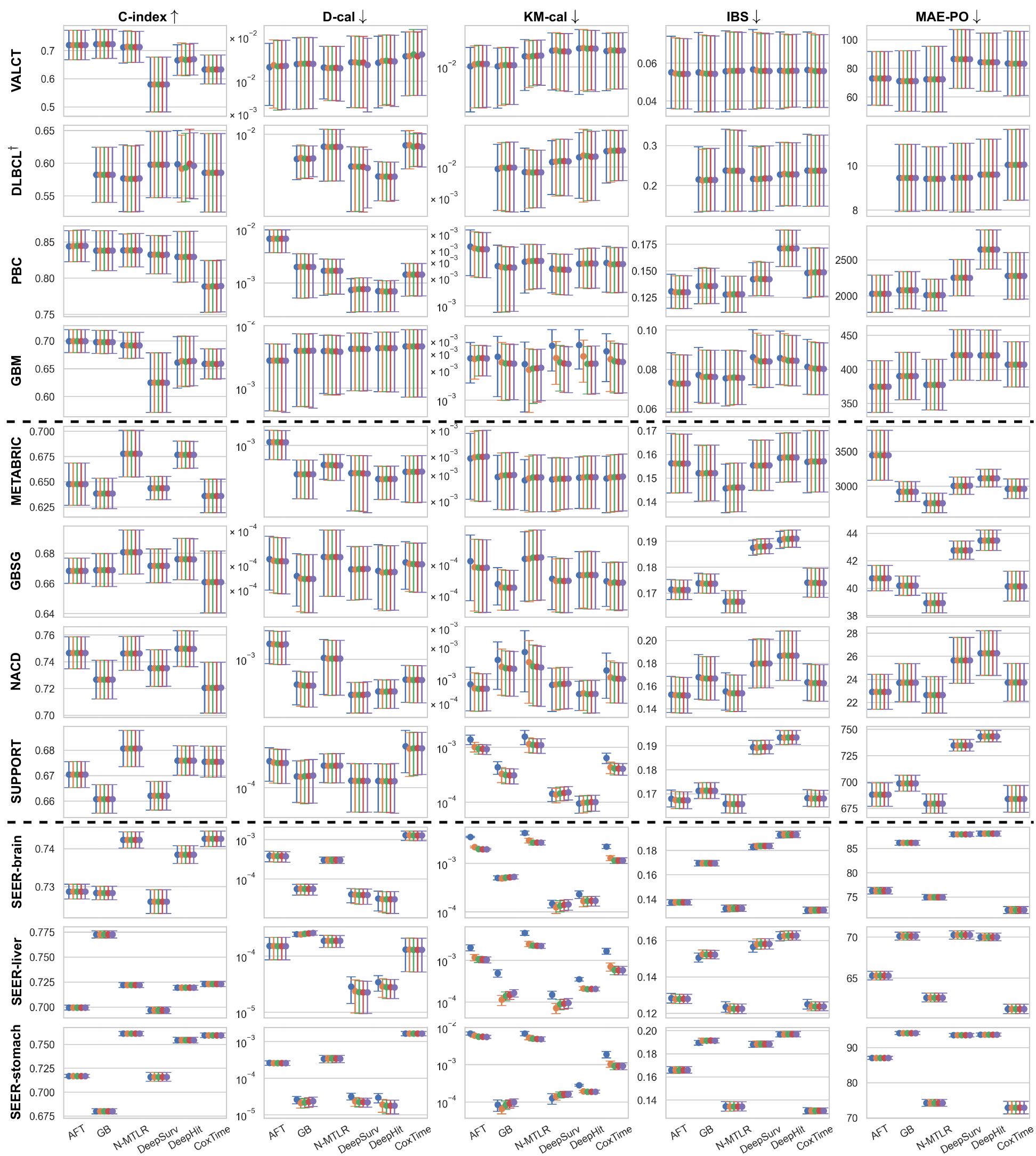}
    \caption{Comparison results for \textbf{ablation \#3: number of discretized percentiles}.
    The error bars represent mean and 95\% CI over ten runs, with blue denoting 9 percentiles, yellow for 19 percentiles, green for 39 percentiles, red for 49 percentiles, and purple for 99 percentiles. 
    A higher C-index score indicates better performance, whereas lower scores are preferable for other metrics.
    $^\dagger$AFT is fail to converge on \texttt{DLBCL} dataset due to the large feature-sample-ratio.}
    \label{fig:ablation3}
\end{figure}

In Figure~\ref{fig:ablation3}, we examine the effect of varying the number of discretized percentiles on CSD performance.
We consider several options for $\mathcal{P}$: 9 percentiles (10\% to 90\% at 10\% intervals), 19 percentiles (5\% to 95\% at 5\% intervals), 39 percentiles (2.5\% to 97.5\% at 2.5\% intervals), 49 percentiles (2\% to 98\% at 2\% intervals), and 99 percentiles (1\% to 99\% at 1\% intervals). 
This figure also presents one dataset in each row, ordered by sample size, and evaluation metrics are depicted in columns.
The performance of different numbers of percentile is color-coded in the subfigures: 9 (blue), 19 (yellow), 39 (green), 49 (red), and 99 (purple).

\paragraph{TL;DR}
The number of percentiles has minimal impact on the performance.

The choice of percentile number has little to no effect on survival metric performance. 
However, for large datasets (\texttt{SEER-brain}, \texttt{SEER-liver}, and \texttt{SEER-stomach}), KM-cal results (third column) indicate that using only 9 percentiles leads to significantly poorer performance, likely due to information loss with such coarse discretization. 
For practical applications, selecting either 9 or 19 percentiles is advisable to maintain a balance between calibration accuracy and computational demand.

\subsection{Computational Analysis}
\label{appendix:computational_analysis}
One limitation of the CSD method is that it cannot perform the conformalization step in batch mode, which is computationally inefficient.

Specifically, the quantile operation in~\eqref{eq:csd_adjust} requires storing the conformity score for all data in the conformal set in advance, followed by computing the quantile amount on a substantial array. This will introduce two complexities: space complexity and time complexity.

Although there are ways to perform quantile calculations in batch modes, this will only approximate the true quantile values. To achieve precise quantile values, it is necessary to store the conformity scores beforehand, which incurs complexity overhead.

\subsubsection{Space Complexity}

To simplify notation, let $|\Data_{\text{con}}|$ be the number of subjects in the conformal datasets, $\mathcal{P}= \{\rho_1, \rho_2, \ldots \}$ be the predefined discretized percentiles, so that $|\mathcal{P}|$ be the number of predefined percentiles, and $R$ be the repeat parameter for KM-sampling. Table~\ref{tab:space_complex} presents the space complexity for storing the conformity scores. 
Compared with the uncensored method, the KM-sampling method will asymptotically cause around $R$ times memory overhead.

\begin{table}[ht]
\centering
\caption{The space complexity comparison between Conformal Quantile Regression (CQR), CSD-uncensored, CSD-margin, CSD-PO, and CSD-KM-sampling. $^\dagger$This assumes the number of uncensored subjects in the datasets is $O(|\Data_{\text{con}}|)$. }
\label{tab:space_complex}
\begin{tabular}{lccccc}
\toprule
 & CQR  & CSD-uncensored & CSD-margin & CSD-PO & CSD-KM-sampling \\   \midrule
Space Complexity & $O(|\Data_{\text{con}}|)$ & $O(|\Data_{\text{con}}| \cdot |\mathcal{P}|)^\dagger$         & $O(|\Data_{\text{con}}|\cdot|\mathcal{P}|)$      & $O(|\Data_{\text{con}}|\cdot|\mathcal{P}|)$  & $O(|\Data_{\text{con}}| \cdot |\mathcal{P}| \cdot R)$ \\
\bottomrule
\end{tabular}
\end{table}

\subsubsection{Time Complexity}

Figure~\ref{fig:time_complex} and~\ref{fig:time_complex_relative} empirically compare the running times of the CSD method versus various baseline methods. 
Each value (dot) in Figure~\ref{fig:time_complex} represents an average extra running time of the method, over the associated non-CSD baselines, over 10 random splits. 
Each value (dot) in Figure~\ref{fig:time_complex_relative} represents an average relative running time of the method, compared with the associated non-CSD baselines, over 10 random splits. 
We see that the running time of the CSD framework is independent of the baseline algorithms. 
For survival algorithms that require less training time (\eg AFT), the CSD post-process with KM-sampling may significantly amplify the running time ($>100$ times compared with non-CSD baselines for \text{SEER} datasets, and $\sim 16$ times compared with CSD-uncensored for \text{SEER} datasets). 
For survival algorithms that require long training times (\eg GB, CQRNN, etc.), the relative time for the CSD post-process can sometimes be ignored.

\begin{figure}[ht]
    \centering
    \includegraphics[width=\textwidth]{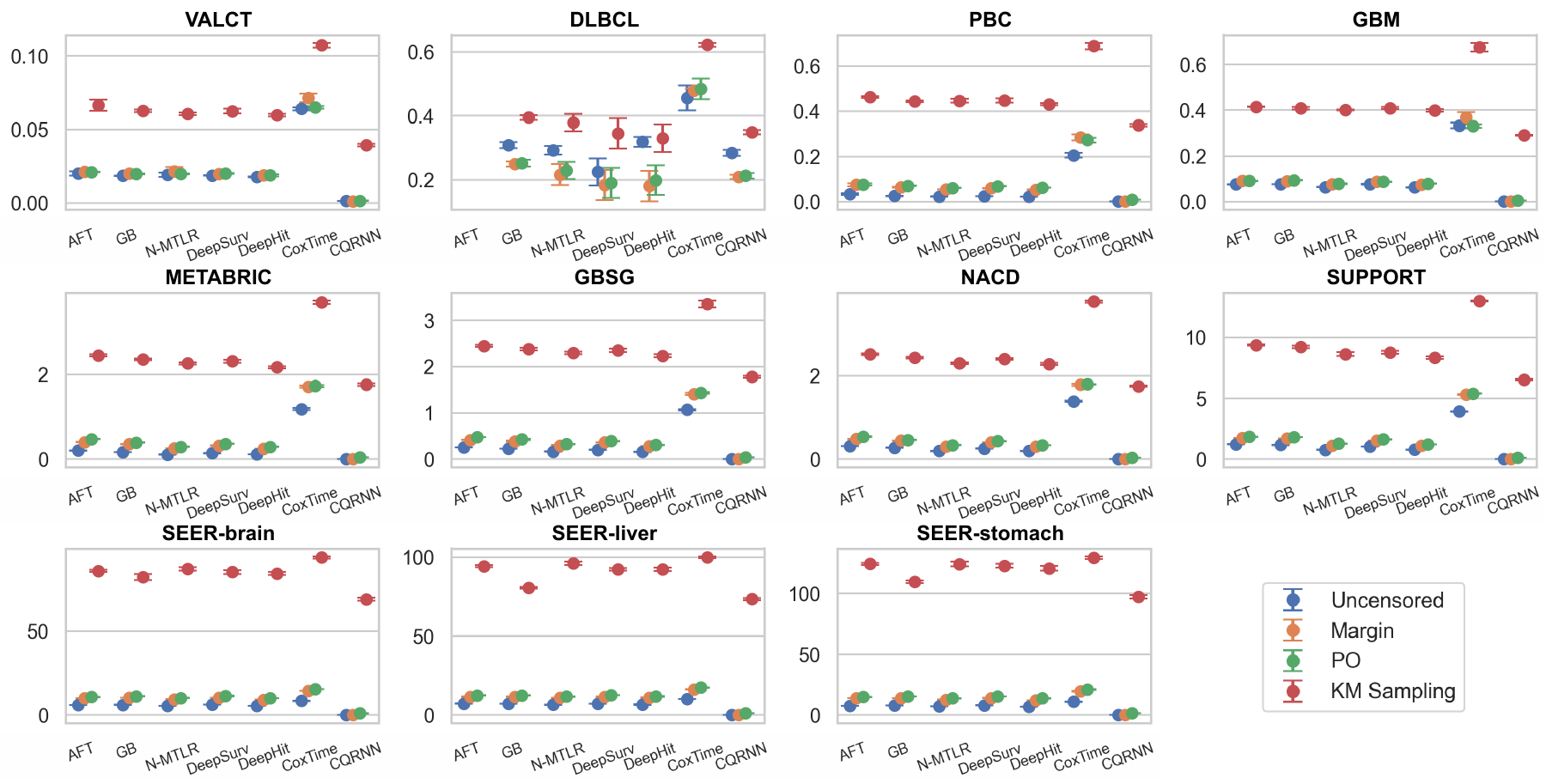}
    \caption{Extra running time (mean with 95\% CI) of the CSD method (with four different methods for handling censorship) on top of the baseline methods.}
    \label{fig:time_complex}
\end{figure}

\begin{figure}[ht]
    \centering
    \includegraphics[width=\textwidth]{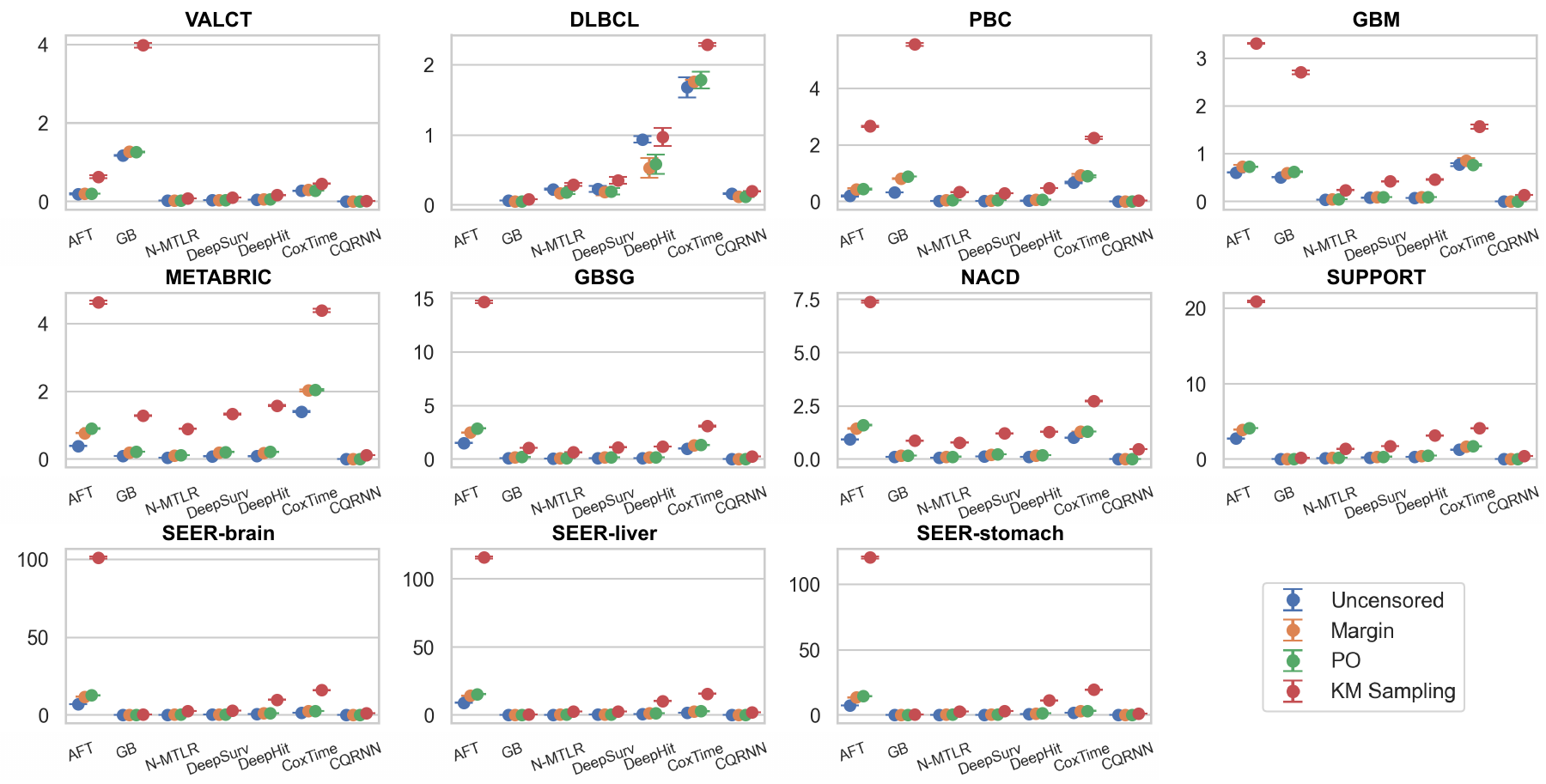}
    \caption{Relative running time (mean with 95\% CI) of the CSD method (with four different methods for handling censorship) on top of the baseline methods.}
    \label{fig:time_complex_relative}
\end{figure}


\end{document}